\newcites{Ref}{References}
\def \etl {\em et al.\rm}
\theoremstyle{plain}
\newtheorem{lemma}{Lemma}%[section]
\newtheorem{theorem}{Theorem}%[section]
\newtheorem{proposition}{Proposition}%[section]
\newtheorem{corollary}{Corollary}%[section]
\theoremstyle{definition}
\newtheorem{definition}{Definition}%[section]
\theoremstyle{remark}
\def\DD{\mathcal{D}}
\def\YY{\mathcal{Y}}\def\ZZ{\mathcal{Z}}
\def\Sb{\mathbf{S}}
\def\bb{\mathbf{b}}
\def\db{\mathbf{d}}\def\eb{\mathbf{e}}\def\fb{\mathbf{f}}
\def\gb{\mathbf{g}}
\def\mb{\mathbf{m}}
\def\rb{\mathbf{r}}
\def\tb{\mathbf{t}}\def\ub{\mathbf{u}}
\def\vb{\mathbf{v}}\def\wb{\mathbf{w}}\def\xb{\mathbf{x}}
\def\yb{\mathbf{y}}\def\zb{\mathbf{z}}
\def\Abb{\mathbb{A}}\def\Bbb{\mathbb{B}}\def\Cbb{\mathbb{C}}
\def\Dbb{\mathbb{D}}
\def\Nbb{\mathbb{N}}
\def\Pbb{\mathbb{P}}\def\Rbb{\mathbb{R}}
\def\Xbb{\mathbb{X}}
\def\R{\Rbb}
\def\t{\top}
\def\*{\star}
\newcommand{\norm}[1]{ \| #1 \|  }
\DeclareMathOperator*{\argmax}{arg\,max}
\providecommand{\@fourthoffour}[4]{#4}
\newcommand\fixstatement[2][\proofname\space of]{%
	\ifcsname thmt@original@#2\endcsname
	% the theorem has been declared with \declaretheorem
	\AtEndEnvironment{#2}{%
		\xdef\pat@label{\expandafter\expandafter\expandafter
			\@fourthoffour\csname thmt@original@#2\endcsname\space\@currentlabel}%
		\xdef\pat@proofof{\@nameuse{pat@proofof@#2}}%
	}%
	\else
	% the theorem has been declared with \newtheorem
	\AtEndEnvironment{#2}{%
		\xdef\pat@label{\expandafter\expandafter\expandafter
			\@fourthoffour\csname #1\endcsname\space\@currentlabel}%
		\xdef\pat@proofof{\@nameuse{pat@proofof@#2}}%
	}%
	\fi
	\@namedef{pat@proofof@#2}{#1}%
}
\globtoksblk\prooftoks{1000}
\newcounter{proofcount}
	\edef\next{%
		\noexpand\begin{proof}[\pat@proofof\space\pat@label]%
			\unexpanded\expandafter{\BODY}}%
\def\printproofs{%
	\count@=\z@
	\loop
	\the\toks\numexpr\prooftoks+\count@\relax
	\ifnum\count@<\value{proofcount}%
	\advance\count@\@ne
	\repeat}
\newcommand{\since}[1]{\qquad \text{($\because$ #1)}}
\def\smb{\mathbf{sm}}
\title{Intra Order-Preserving Functions for\\ Calibration of Multi-Class Neural Networks}
\author{
  Amir Rahimi\thanks{Equal Contribution.} \\
  ANU, ACRV \\
  \href{mailto:amir.rahimi@anu.edu.au}{amir.rahimi@anu.edu.au}
  \And
  Amirreza Shaban$^*$ \\
  Georgia Tech \\
  \href{mailto:ashaban@uw.edu}{ashaban@uw.edu}
  \And
  Ching-An Cheng$^*$ \\
  Microsoft Research \\
  \href{mailto:chinganc@microsoft.com}{chinganc@microsoft.com}
  \AND
  Richard Hartley \\
  Google Research, ANU, ACRV \\
  \href{mailto:richard.hartley@anu.edu.au}{richard.hartley@anu.edu.au}
  \And
  Byron Boots \\
  University of Washington \\
  \href{mailto:bboots@cs.washington.edu}{bboots@cs.washington.edu} \\ \\
}
\begin{document}
\maketitle
\begin{abstract}
Predicting calibrated confidence scores for multi-class deep networks is important for avoiding rare but costly mistakes. A common approach is to learn a post-hoc calibration function that transforms the output of the original network into calibrated confidence scores while maintaining the network's accuracy. However, previous post-hoc calibration techniques work only with simple calibration functions, potentially lacking sufficient representation to calibrate the complex function landscape of deep networks. 
In this work, we aim to learn general post-hoc calibration functions that can preserve the top-$k$ predictions of any deep network. We call this family of functions \emph{intra order-preserving} functions. We propose a new neural network architecture that represents a class of intra order-preserving functions by combining common neural network components. Additionally, we introduce \emph{order-invariant} and \emph{diagonal} sub-families, which can act as regularization for better generalization when the training data size is small.
We show the effectiveness of the proposed method across a wide range of datasets and classifiers. Our method outperforms state-of-the-art post-hoc calibration methods, namely temperature scaling and Dirichlet calibration, in several evaluation metrics for the task.
\end{abstract}
\vspace{-0.5cm}
\section{Introduction}
%Deep networks have become the de facto method for modern machine learning due to their great performance in various applications such as object detection~\cite{ren2015faster} and medical research~\cite{jiang2012calibrating, caruana2015intelligible}. 
%Although they achieve state-of-the-art accuracy in decision making tasks, they lack in representing uncertainty and typically predict overconfident and miscalibrated results~\cite{maddox2019simple, guo2017calibration, kendall2017uncertainties}.
 
%The concept of calibration is of significant interest in downstream decision-making systems like medical diagnosis~\cite{jiang2012calibrating} and autonomous driving where miscalibration can lead to undesired consequences in such scenarios.
%For example, if a calibrated model assigns $0.7$ probability to a prediction, it means the predicted label is likely to be correct with $70\%$ chance. 
 
%\amir{Put figure5 here, talk about decoupling accuracy and calibration;}
Deep neural networks have demonstrated impressive accuracy in classification tasks, such as image recognition~\cite{he2016deep, ren2015faster} and medical research~\cite{jiang2012calibrating, caruana2015intelligible}.
These exciting results have recently motivated engineers to adopt deep networks as default components in building decision systems; for example, a multi-class neural network can be treated as a probabilistic predictor and its softmax output can provide the confidence scores of different actions for the downstream decision making pipeline~\cite{girshick2015fast,cao2017realtime,mozafari2019unsupervised}.
While this is an intuitive idea, recent research has found that deep networks, despite being accurate, can be overconfident in their predictions, exhibiting high calibration error%because these high-capacity networks overfit to the Negative Log-Likelihood~(NLL) loss during training
~\cite{maddox2019simple, guo2017calibration, kendall2017uncertainties}.
In other words, trusting the network's output naively as confidence scores in system design could cause undesired consequences: a serious issue for applications where mistakes are costly, such as medical diagnosis and autonomous driving. 

%As an illustration, consider a perfectly calibrated, multi-class neural network with a softmax output layer. When the model assigns a $0.7$ value to a class, predicting with the class will be correct with a $70\%$ chance.

A promising approach to address the miscalibration is to augment a given network with a parameterized calibration function, such as extra learnable layers. %, which effectively creates a new probabilistic predictor.
This additional component is tuned post-hoc using a held-out calibration dataset, so that the effective full network becomes calibrated~\cite{guo2017calibration,kull2019beyond,kull2017beyond,kull2017beta,platt1999probabilistic,zadrozny2001obtaining}. 
In contrast to usual deep learning, 
the calibration dataset here is typically \emph{small}. Therefore, learning an overly general calibration function can easily overfit and actually reduce the accuracy of the given network~\cite{guo2017calibration,kull2019beyond}.
Careful design regularization and parameterization of calibration functions is imperative. %, so that generalization is possible even when using a small dataset.

%The calibration function is often inserted between the last few layers of the given network as a form of regularization.\cheng{CITE}
%
%The idea behind this design is to transform the original output into calibrated confidence scores while maintaining the given network's accuracy (or similar performance concepts like top-$k$ accuracy).
%
A classical non-parametric technique is isotonic regression~\cite{zadrozny2002transforming}, which learns a monotonic staircase calibration function with minimal change in the accuracy. But the complexity of non-parametric learning can be too expensive to provide the needed generalization~\cite{kull2017beyond,kull2017beta}.
By contrast, Guo \etl~\cite{guo2017calibration} proposed to learn a scalar parameter to rescale the original output logits, at the cost of being suboptimal in calibration%, though applying the same scale to all classes is sub-optimal
~\cite{maddox2019simple}; see also \cref{sec:experiments}.
Recently, Kull \etl~\cite{kull2019beyond} proposed %Dirichlet calibration 
to learn linear transformations of the output logits. While this scheme is more expressive than the temperature scaling above, it does not explore non-linear calibration functions.

%it fails to guarantee accuracy preservation.\amirreza{is this worth mention given that it typically does not change the acc in our experiments? We can just mention the commented sentence next focusing on the limited expressiveness of an highly regularized linear layer $\to$} %; in practice, the off-diagonal elements of the transformation matrix is penalized to prevent accuracy drop.

%However, none of the existing approaches are ideal, because they compromise either the accuracy~\cite{kull2019beyond} or the flexibility~\cite{platt1999probabilistic,guo2017calibration} of the calibrated network.\amirreza{Make sure to update the plot in \cref{fig:msodir_wl} to clearly address this. We need an experiment in the supplementary to show that multi-layer perceptrons reduce the accuracy considerably. We talk about this compromise much but have not explored it well.}

In general, a preferable hypothesis space needs to be expressive and, at the same time, provably preserve the accuracy of any given network it calibrates.
Limiting the expressivity of calibration functions can be an issue, especially when calibrating deep networks with complicated landscapes.

The main contribution of this paper is introducing a learnable space of functions, called \emph {intra order-preserving} family. %, which precisely describes this ideal hypothesis space. 
Informally speaking, an intra order-preserving function $\fb:\R^n \to \R^n$ is a vector-valued function whose output values always share the same ordering as the input values across the $n$ dimensions. For example, if $\xb\in\R^n$ is increasing from coordinate $1$ to $n$, then so is $\fb(\xb)$. In addition, we introduce {order-invariant} and {diagonal} structures, which utilize the shared characteristics between different input dimensions to improve generalization. For illustration, we depict instances of 3-dimensional intra order-preserving and order-invariant functions defined on the unit simplex and compare them to an unconstrained function in \cref{fig:simplex_summary}. We use arrows to show how inputs on the simplex are mapped by each function. Each colored subset in the simplex denotes a region with the same input order; for example, we have $\xb_3>\xb_2>\xb_1$ inside the red region where the subscript~$_i$ denotes the $i$th element of a vector. For the intra order-preserving function shown in \cref{fig:op_map} arrows stay within the same colored region as the inputs, but the vector fields in two different colored region are independent to each other. Order-invariant function in \cref{fig:oi_map} further keeps the function permutation invariant, enforcing the vector fields to be the same among all the $6$ colored regions (as reflected in the symmetry in \cref{fig:oi_map}). This property of order-preserving functions significantly reduce the hypothesis space in learning, from the functions on whole simplex to functions on one colored region, for better generalization.

\begin{figure}[t]
\centering
\begin{subfigure}{0.25\textwidth}
 \includegraphics[width=\linewidth]{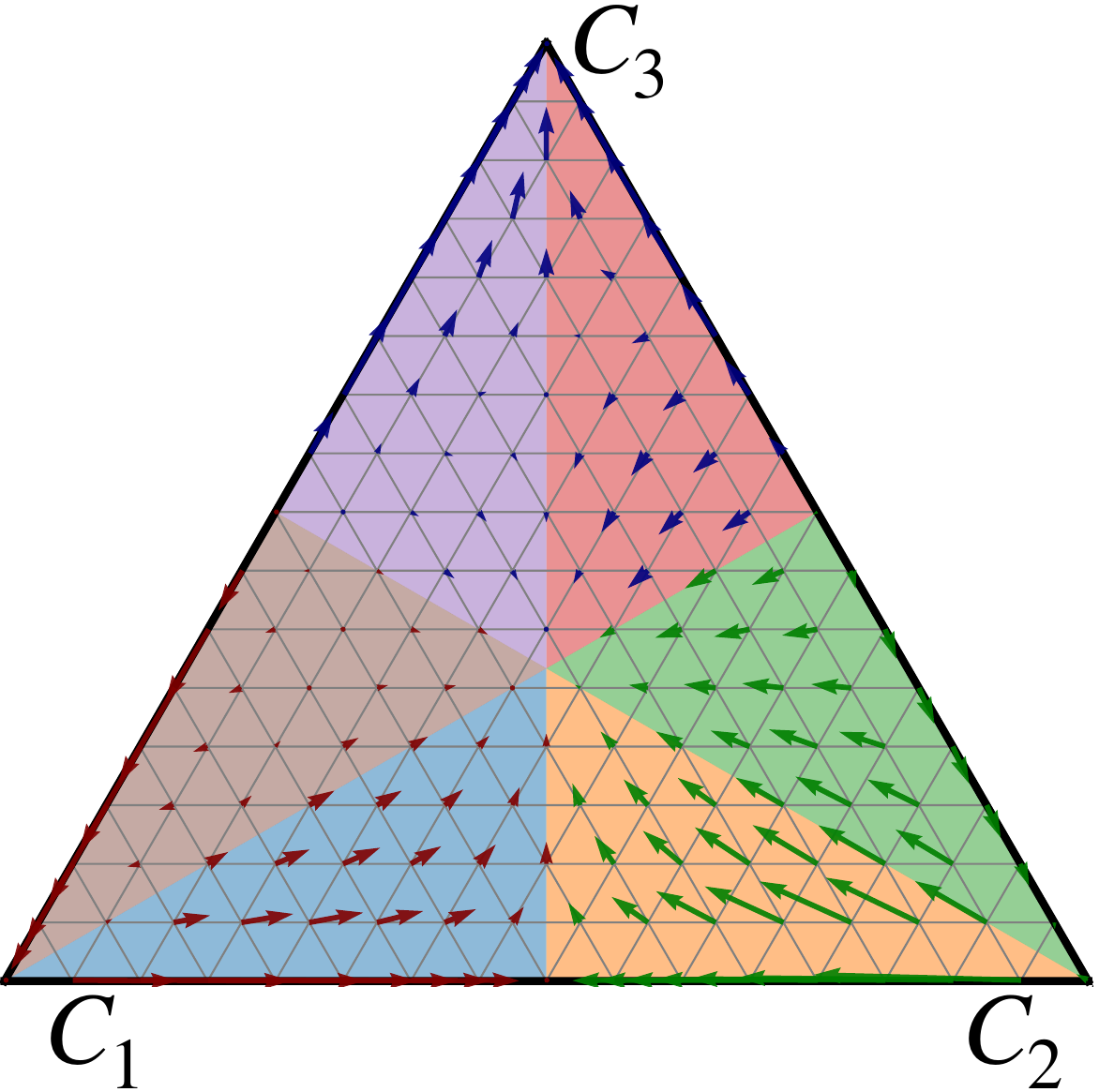}
 \caption{Intra Order-preserving} \label{fig:op_map}
\end{subfigure}~~~~%
\begin{subfigure}{0.25\textwidth}
 \includegraphics[width=\linewidth]{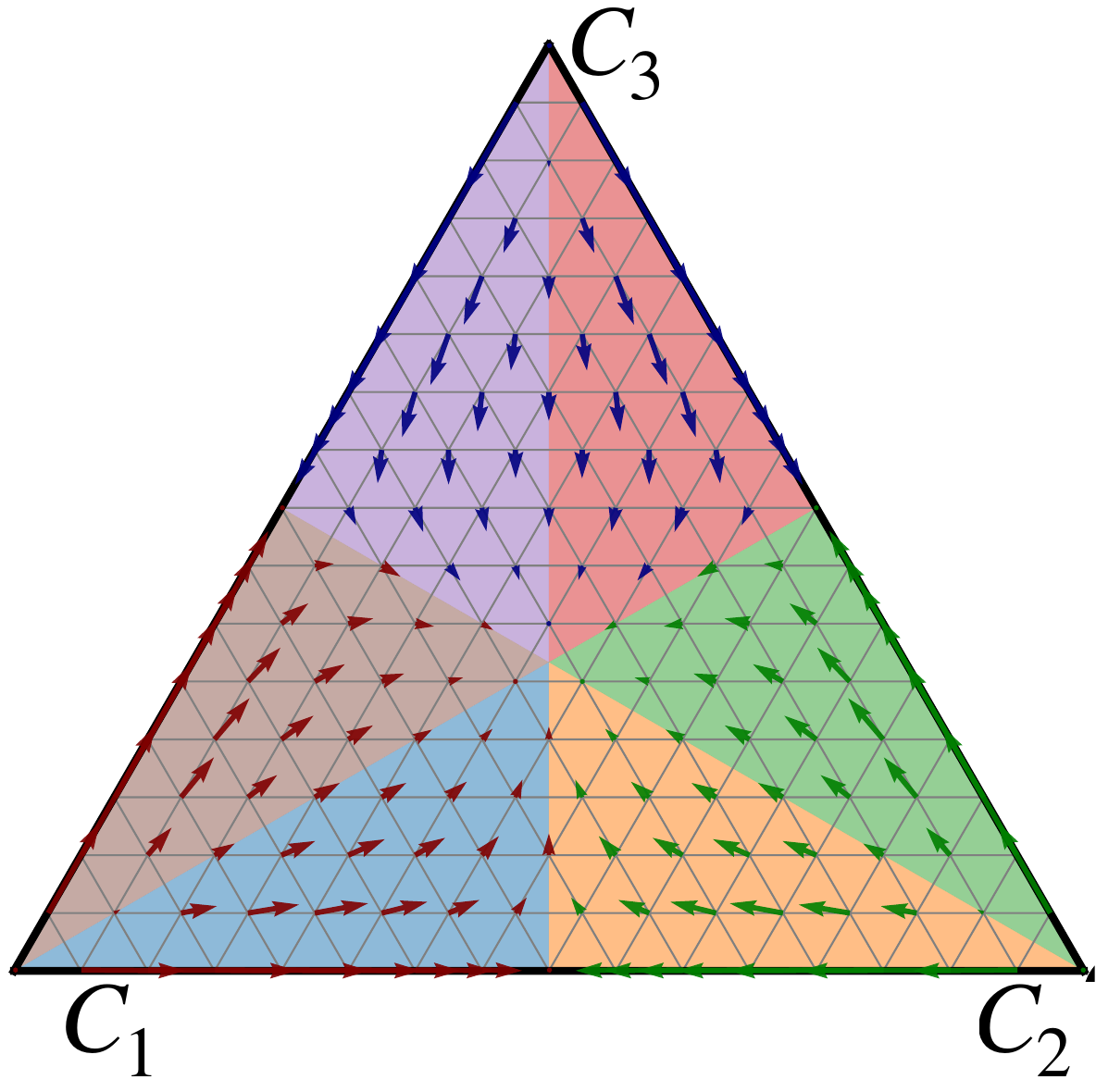}
 \caption{Order-invariant} \label{fig:oi_map}
\end{subfigure}~~~~%
\begin{subfigure}{0.25\textwidth}
 \includegraphics[width=\linewidth]{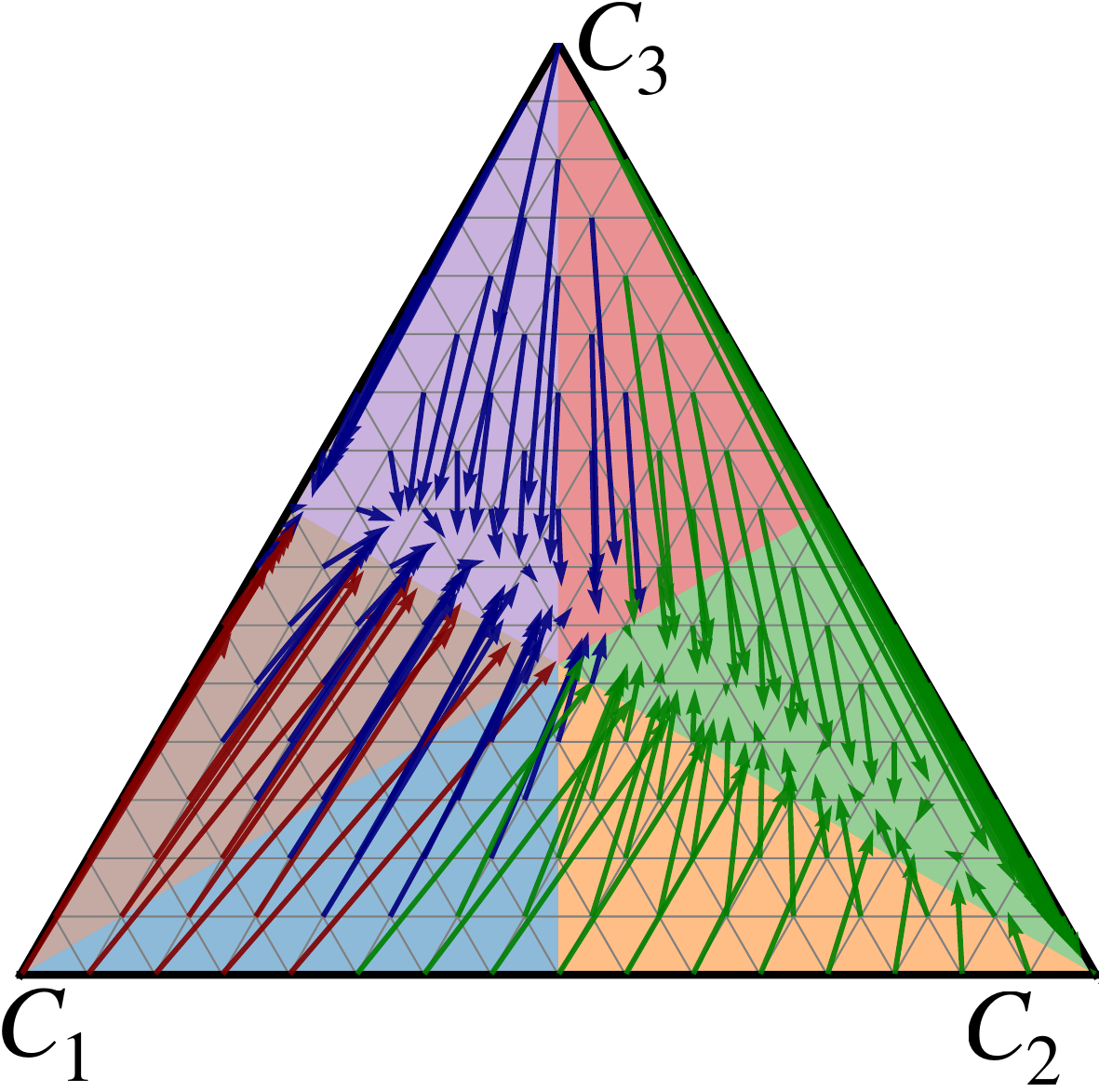}
 \caption{Unconstrained} \label{fig:un_map}
\end{subfigure}
\caption{\label{fig:simplex_summary} Comparing instances of intra order-preserving and order-invariant family defined on the 2-dimensional unit simplex. Points $C_1 = [1, 0, 0]^\top$, $C_2 = [0, 1, 0]^\top$, $C_3 = [0, 0, 1]^\top$ are the simplex corners. Arrows depict how an input is mapped by each function. Unconstrained function freely maps the input probabilities, intra order-preserving function enforces the outputs to stay within the same colored region as the inputs, and order-invariant function further enforces the vector fields to be the same among all the $6$ colored regions as reflected in the symmetry in the visualization.} 
\end{figure}

We identify necessary and sufficient conditions for describing intra order-preserving functions, study their differentiability, and propose a novel neural network architecture that can represent complex intra order-preserving function through common neural network components. 
From practical point of view, we devise a new post-hoc network confidence calibration technique using different intra order-invariant sub-families. %This formal definition allows us to systematically study network calibration. 
Because a post-hoc calibration function keeps the top-$k$ class prediction \emph{if and only if} it is an intra order-preserving function,
%The main contribution of this paper is defining a learnable space of intra order-preserving functions and its order-invariant and diagonal sub-families. We provide necessary and sufficient conditions for continuous, intra order-invariant functions and study their differentiability. From practical point of view, we devise a new post-hoc network confidence calibration technique using different intra order-invariant sub-families. 
%
learning the post-hoc calibration function within the intra order-preserving family presents a solution to the dilemma between accuracy and flexibility faced in the previous approaches.
We conduct several experiments to validate the benefits of learning with these new functions for post-hoc network calibration.
The results demonstrate improvement over various calibration performance metrics, compared with the original network, temperature scaling~\cite{guo2017calibration}, and Dirichlet calibration~\cite{kull2019beyond}.
\vspace{-1mm}
\section{Problem Setup}
\vspace{-1mm}

\def\phib{\bm\phi}
\def\psib{\bm\psi}

We address the problem of calibrating neural networks for $n$-class classification. Let define $[n] \coloneqq \{1,\dots,n\}$,
$\ZZ\subseteq \R^d$ be the domain, $\YY  = [n]$ be the label space, and let $\Delta_n$ denote the $n-1$ dimensional unit simplex.
Suppose we are given a trained probabilistic predictor  $\phib_o:\R^d \to \Delta_n$ and a small calibration dataset $\DD_c$ of i.i.d. samples drawn from an unknown distribution $\pi$ on $\ZZ\times\YY$.
For simplicity of exposition, we assume that $\phib_o$ can be expressed as the composition $\phib_o \eqqcolon \smb \circ \gb$, with $\gb: \R^d \to \R^n$ being a non-probabilistic $n$-way classifier and $\smb: \R^n \to \Delta_n$ being the softmax operator\footnotemark, i.e.
%\begin{equation}\label{eq:softmax}
$ \mathbf{sm}_i(\xb) = \frac{\exp(\xb_i)}{\sum_{j=1}^n \exp(\xb_j)}$, for $i \in \YY$, 
%\end{equation} 
where the subscript~$_i$ denotes the $i$th element of a vector.
When queried at $\zb\in\ZZ$, the probabilistic predictor $\phib_o$ returns $\argmax_i \phib_{o,i}(\zb)$ as the predicted label and $\max_i \phib_{o,i} (\zb)$ as the associated confidence score. (The top-$k$ prediction is defined similarly.)  We say $\gb(\zb)$ is the \emph{logits} of $\zb$ with respect to $\phib_o$. 

\footnotetext{The softmax requirement is not an assumption but for making the notation consistent with the literature. The proposed algorithm can also be applied to the output of general probabilistic predictors.}

Given $\phib_o$ and $\DD_c$, 
our goal is to learn a post-hoc calibration function $\fb: \R^n\to\R^n$ such that the new probabilistic predictor $\phib \coloneqq \smb \circ \fb \circ \gb$ is better calibrated \emph{and} keeps the accuracy (or similar performance concepts like top-$k$ accuracy) of the original network $\phib_o$. That is, we want to learn new logits $\fb(\gb(\zb))$ of $\zb$.
As we will discuss, this task is non-trivial, because while learning $\fb$ might improve calibration, doing so could also risk over-fitting to the small dataset $\DD_c$ and damaging accuracy. 
To make this statement more precise, below we first review the definition of perfect calibration~\cite{guo2017calibration} and common calibration metrics and then discuss challenges in learning $\fb$ with $\DD_c$.

%\vspace{-1mm}
%\subsection{Calibration Metrics}\label{sec:calib_metrics}
%We first review the definition of perfect calibration~\cite{guo2017calibration} and common calibration metrics. % to provide necessary background to unfamiliar readers.
\begin{definition}\label{df:perfect calibration}
For a distribution $\pi$ on $\ZZ\times\YY$ and a probabilistic predictor $\psib:\R^d \to \Delta_n$, let random variables $\zb \in\ZZ$, $y\in\YY$ be distributed according to $\pi$, and define random variables $\hat{y}  \coloneqq \argmax_i \psib_i(\zb)$ and $\hat{p} \coloneqq \psib_{\hat{y}}(\zb)$.
We say $\psib$ is \emph{perfectly calibrated} with respect to $\pi$, if for any $p\in [0,1]$,
%\begin{equation} \label{eq:perfect calibration}
it satisfies $\textrm{Prob}(\hat{y}=y| \hat{p} =p ) = p$. 
%\end{equation}
\end{definition}
%\rih{It is usual to denote random variables by capitals,
%and samples drawn from them by the associated lower-case.
%Also, it may be useful to say that this is the concept called
%``confidence-calibrated'' in the Dirichelet calibration 
%paper, and remark that there are other concepts.}
%\cheng{I'm fine with using lower-case rvs}
Note that $\zb$, $y$, $\hat{y}$ and $\hat{p}$ are correlated random variables. Therefore, \cref{df:perfect calibration} essentially means that, if $\psib$ is perfectly calibrated, then for any $p\in[0,1]$, the true label $y$ and the predicted label $\hat{y}$ match, with a probability exactly $p$ in the events where $\zb$ satisfies $\max_i \psib_{i}(\zb) = p$. %We note that perfect calibration concept in \cref{df:perfect calibration} is also called \emph{confidence-calibrated} and remark that there are other concepts for defining well calibrated networks in the literature~\cite{kull2019beyond}.

%We remark that \cref{df:perfect calibration} depends not only on the conditional distribution $\pi(y|\zb)$, but also on the marginal distribution $\pi(\zb)$. As a result, perfect calibration is an orthogonal concept to accuracy\amirreza{reviewer could not see this from \cref{df:perfect calibration} and suggests to give more intuition like \cite{guo2017calibration} did.}. For example, for any $\pi$ on $\ZZ\times\YY$, there is a perfectly calibrated probabilistic predictor that always outputs a constant class distribution (i.e. it always predicts the same label regardless of the input); however, this predictor has very poor accuracy.

In practice, learning a perfectly calibrated predictor is unrealistic, so we need a way to measure the calibration error. A common calibration metric is called Expected Calibration Error (ECE)~\cite{naeini2015obtaining}:
%\begin{align} \label{eq:ECE}
$	\mathrm{ECE} = \sum_{m=1}^M \frac{|B_m|}{N} |\mathrm{acc}(B_m) - \mathrm{conf}(B_m)|
$.
%\end{align}
This equation is calculated in two steps: First the confidence scores of samples in $\DD_c$ are partitioned into $M$ equally spaced bins $\{B_m\}_{m=1}^M$. Second the weighted average of the differences between the average confidence $\mathrm{conf}(B_m) = \frac{1}{|B_m|}\sum_{i \in B_m} \hat{p}^i$ and the accuracy $\mathrm{acc}(B_m) = \frac{1}{|B_m|}\sum_{i \in B_m} \mathbbm{1}( y^i = \hat{y}^i) $ in each bin is computed as the ECE metric, 
where $|B_m|$ denotes the size of bin $B_m$, $\mathbbm{1}$ is the indicator function, and the superscript $^i$ indexes the sampled random variable.
%, where $\mathrm{conf}(B_m),\mathrm{acc}(B_m)$ represent the average confidence and accuracy of bin $B_m$ of size $|B_m|$ respectively.
%
In addition to ECE, other calibration metrics have also been proposed~\cite{guo2017calibration,nixon2019measuring,brier1950verification,kumar2019calibration}; e.g., Classwise-ECE~\cite{kull2019beyond} and Brier score~\cite{brier1950verification} are proposed as measures of classwise-calibration. All the metrics for measuring calibration have their own pros and cons. Here, we consider the most commonly used metrics for measuring calibration and leave their analysis for future work.

While the calibration metrics above measure the deviation from perfect calibration in \cref{df:perfect calibration}, they are usually not suitable loss functions for optimizing neural networks, e.g., due to the lack of continuity or non-trivial computation time.
Instead, the calibration function $\fb$ in $\phib = \smb \circ \fb \circ \gb$ is often optimized indirectly through a surrogate loss function (e.g. the negative log-likelihood) defined on the held-out calibration dataset $\DD_c$~\cite{guo2017calibration}.

\vspace{-1mm}
\subsection{Importance of Inductive Bias}
Unlike regular deep learning scenarios, here the calibration dataset $\DD_c$ is relatively small. 
Therefore, controlling the capacity of the hypothesis space of $\fb$ becomes a crucial topic~\cite{guo2017calibration, kull2017beta, kull2019beyond}. There is typically a trade-off between preserving accuracy and improving calibration:
%In the trivial case, setting $\fb$ to be an identity map keeps the original network but does not offer any benefits. On the other hand, 
Learning $\fb$ could improve the calibration performance, but it could also change the decision boundary of $\phib$ from $\phib_o$ decreasing the accuracy.
%
%Most recent works~\cite{guo2017calibration, kull2017beta, kull2019beyond} sacrifice calibration function expressiveness to maintain the accuracy, as the new predictor $\phib$ will be used for classification at the test time. 
%
While using simple calibration functions may be applicable when $\phi_o$ has a simple function landscape or is already close to being well calibrated, such a function class might not be sufficient to calibrate modern deep networks with complex decision boundaries as we will show in the experiments in \cref{sec:experiments}.

The observation above motivates us to investigate the possibility of learning calibration functions within a hypothesis space that can provably guarantee preserving the accuracy of the original network $\phib_o$.  
The identification of such functions would address the previous dilemma and give precisely the needed structure to ensure generalization of calibration when the calibration datatset $\DD_c$ is small.

\vspace{-1mm}
\section{Intra Order-Preserving Functions}
\vspace{-1mm}

In this section, we formally describe this desirable class of functions for post-hoc network calibration. We name them  \emph{intra order-preserving functions}. 
Learning within this family is both necessary and sufficient to keep the top-$k$ accuracy of the original network unchanged. 
We also study additional function structures on this family (e.g. limiting how different dimensions can interact), which can be used as regularization in learning calibration functions.
Last, we discuss a new neural network architecture for representing these functions.

\vspace{-1mm}
\subsection{Setup: Sorting and Ranking}
We begin by defining {sorting functions} and {ranking} in preparation for the formal definition of intra order-preserving functions.
Let $\Pbb^n \subset \{0,1\}^{n\times n}$ denote the set of $n\times n$ permutation matrices. Sorting can be viewed as a permutation matrix; Given a vector $\xb\in\R^n$, we say $S:\R^n \to \Pbb^n$ is a \emph{sorting function} if $\yb = S(\xb)\xb$ satisfies $\yb_1\geq \yb_2 \geq \dots \geq \yb_n$. In case there are ties in the input vector $\xb$, the sorting matrix can not be uniquely defined. To resolve this, we use a pre-defined \emph{tie breaker} vector which is used as a tie breaking protocol. We say a vector $\tb \in \R^n$ is a tie breaker if $\tb = P \rb$, for some $P \in \Pbb^n$, where $\rb = [1,\dots, n]^\t \in \R^n$. Tie breaker pre-assigns priorities to indices of the input vector and is used to resolve ties. For instance, $\Sb_1=\bigl[ \begin{smallmatrix}1 & 0\\ 0 & 1\end{smallmatrix}\bigr]$ and $\Sb_2=\bigl[ \begin{smallmatrix}0 & 1\\ 1 & 0\end{smallmatrix}\bigr]$ are the unique sorting matrices of input $\xb=[0, 0]^\top$ with respect to tie breaker $\tb_1 = [1, 2]^\top$ and $\tb_2 = [2, 1]^\top$, respectively. 
We say two vectors $\ub,\vb\in\R^n$ \emph{share the same ranking} if $S(\ub) = S(\vb)$ for \emph{any} tie breaker $\tb$.

\vspace{-1mm}
\subsection{Intra Order-Preserving Functions}

We define the \emph{intra} order-preserving property with respect to different coordinates of a vector input. 

\begin{definition}\label{def:intra_order_preserving}
We say a function $\fb: \R^n \to \R^n$ is \emph{intra order-preserving}, if, for any $\xb\in\R^n$, both $\xb$ and $\fb(\xb)$ share the same ranking.
\end{definition}
The output of an intra order-preserving function $\fb(\xb)$ maintains \emph{all} ties and strict inequalities between elements of the input vector $\xb$. Namely, for all $i,j\in[n]$, we have $\xb_i>\xb_j$ (or $\xb_i=\xb_j$) if and only if $\fb_i(\xb)>\fb_j(\xb)$ (or $\fb_i(\xb)=\fb_j(\xb)$). 
For example, a simple intra order-preserving function is the temperature scaling  $\fb(\xb) = \xb/t$ for some $t>0$. Another common instance is the softmax operator. % defined in \eqref{eq:softmax}. 

Clearly, applying an intra order-preserving function as the calibration function in $\phib = \smb\circ\fb\circ\gb$ does not change top-$k$ predictions between $\phib$ and $\phib_o = \smb\circ\gb$. %This can be seen as follows: 
Next, we provide a necessary and sufficient condition for constructing continuous, intra order-invariant functions. This theorem will be later used to design neural network architectures for learning calibration functions. 
%also shows how to easily implement order-invariant functions within common neural network learning frameworks since it only utilizes simple operations and constraints. We discuss the implementation details in \cref{sec:implementation}.
Note that for a vector $\vb\in\R^n$ and an upper-triangular matrix of ones $U$, $U\vb$ is the \emph{reverse} cumulative sum of $\vb$ (i.e. $(U\vb)_i = \sum_{j=i}^n \vb_i$).

\begin{restatable}{theorem}{stOrderPreserv} \label{st:order_preserv}
A continuous function $\fb:\R^n\to\R^n$ is intra order-preserving, if and only if $\fb(\xb) = S(\xb)^{-1}U \wb(\xb)$ with $U$ being an upper-triangular matrix of ones and $\wb:\R^n\to\R^n$ being a continuous function such that% with the following properties
\begin{itemize}\vspace{-2mm}
    \item $\wb_i(\xb) = 0$, if $\yb_i = \yb_{i+1}$ and $i<n$, \vspace{-1mm}
    \item $\wb_i(\xb) > 0$,  if $\yb_i > \yb_{i+1}$ and $i<n$,\vspace{-1mm}
    \item $\wb_n(\xb)$ is arbitrary,\vspace{-1mm}
\end{itemize}\vspace{-2mm}
where $\yb = S(\xb) \xb$ is the sorted version of $\xb$.
\end{restatable}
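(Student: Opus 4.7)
The plan is to prove the biconditional by constructing, in each direction, the explicit correspondence between $\fb$ and the pair $(S,\wb)$. Throughout, for any fixed tie breaker write $\yb \coloneqq S(\xb)\xb$ (the sorted input) and $\vb \coloneqq S(\xb)\fb(\xb)$.

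\textbf{Sufficient direction.} Assume $\fb(\xb) = S(\xb)^{-1} U \wb(\xb)$ with a continuous $\wb$ satisfying the listed sign conditions. I would first show that $\fb$ is intra order-preserving by computing $S(\xb)\fb(\xb) = U\wb(\xb)$ and observing that its consecutive differences are $(U\wb(\xb))_i - (U\wb(\xb))_{i+1} = \wb_i(\xb)$, which by hypothesis are strictly positive when $\yb_i > \yb_{i+1}$ and zero when $\yb_i = \yb_{i+1}$. Thus $U\wb(\xb)$ is weakly descending with exactly the same tie pattern as $\yb$, so $S(\xb)$ is a valid sorting matrix of $\fb(\xb)$ under every tie breaker, i.e., $S(\fb(\xb)) = S(\xb)$. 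Next, to verify continuity at $\xb$, I would take any $\xb^{(k)} \to \xb$ and pass to a subsequence on which $S(\xb^{(k)}) = S^*$ is constant (possible since $\Pbb^n$ is finite). Taking limits of $(S^*\xb^{(k)})_i \geq (S^*\xb^{(k)})_{i+1}$, $S^*$ is seen to be a valid sorting matrix for $\xb$. The crucial claim is that $(S^*)^{-1} U\wb(\xb) = S(\xb)^{-1} U\wb(\xb)$: because $\wb_i(\xb) = 0$ whenever $\yb_i = \yb_{i+1}$, the vector $U\wb(\xb)$ is constant on every maximal block of tied entries of $\yb$, and any two valid sorting matrices for $\xb$ differ only by permutations within such blocks. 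Continuity of $\wb$ then yields $\fb(\xb^{(k)}) = (S^*)^{-1} U\wb(\xb^{(k)}) \to (S^*)^{-1} U\wb(\xb) = \fb(\xb)$.

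\textbf{Necessary direction.} Suppose $\fb$ is continuous and intra order-preserving. I would define $\wb(\xb) \coloneqq U^{-1} S(\xb) \fb(\xb) = U^{-1} \vb$, that is, $\wb_i(\xb) = \vb_i - \vb_{i+1}$ for $i<n$ and $\wb_n(\xb) = \vb_n$; the formula $\fb(\xb) = S(\xb)^{-1} U \wb(\xb)$ then holds by construction. Intra order-preservation forces $\vb_i \geq \vb_{i+1}$ with equality iff $\yb_i = \yb_{i+1}$, immediately yielding the three required properties of $\wb$. For continuity, although $S(\xb)$ jumps at ties, the product $S(\xb)\fb(\xb)$ equals the descending sort of $\fb(\xb)$ (since $\fb$ is intra order-preserving, $S(\xb)$ sorts $\fb(\xb)$ too), and the sort map on $\R^n$ is a continuous function of its input; composing with the continuous $\fb$ and the constant matrix $U^{-1}$ preserves continuity.

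\textbf{Main obstacle.} The delicate step is the continuity argument in the sufficient direction, where the jumps of $S(\xb)^{-1}$ across strata of ties must cancel. The cancellation is engineered precisely by the vanishing of $\wb_i$ on tied indices: this makes $U\wb(\xb)$ block-constant on the tied blocks, which is exactly the ambiguity in choosing a sorting matrix. Writing out this cancellation carefully, together with the subsequence-extraction trick enabled by finiteness of $\Pbb^n$, is the core of the proof; the remaining pieces are essentially bookkeeping around the definitions of sorting and ranking.
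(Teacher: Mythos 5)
Your proposal is correct and follows essentially the same route as the paper's proof: the same decomposition $\wb = U^{-1}S(\xb)\fb(\xb)$, the same verification of the sign conditions via consecutive differences of $U\wb$, and the same key cancellation idea that sorting-matrix ambiguity occurs only within tied blocks, on which $U\wb(\xb)$ is constant. The only differences are presentational — you handle continuity with a subsequence argument over the finitely many permutations and cite continuity of the sort map as known, whereas the paper packages the same facts as two lemmas proved via a covering of $\R^n$ by the polyhedra $\Nbb_k = \{\xb : S(\xb)\xb = P_k\xb\}$.
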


The proof is deferred to Appendix. Here we provide as sketch as to why \cref{st:order_preserv} is true. Since $\wb_i(\xb)\geq0$ for $i<n$, applying the matrix $U$ on $\wb(\xb)$ results in a sorted vector $U \wb(\xb)$. Thus, applying $S(\xb)^{-1}$ further on $U \wb(\xb)$ makes sure that $\fb(\xb)$ has the same ordering as the input vector $\xb$. The reverse direction can be proved similarly. For the continuity, observe that the sorting function $S(\xb)$ is piece-wise constant with discontinuities only when there is a tie in the input $\xb$. This means that if the corresponding elements in $U \wb(\xb)$ are also equally valued when a tie happens, the discontinuity of the sorting function $S$ does not affect the continuity of $\fb$ inherited from $\wb$.
%\amirreza{reviewer asks to do the explanation before the theorem. The specifically ask to mention $U$ being a cumulative sum before the theorem.}
%%%%%%%%%%%%%%%%%%%%%%%%%%%%%%%%%%%%%

\vspace{-1mm}
\subsection{Order-invariant and Diagonal Sub-families}

Different classes in a classification task typically have shared characteristics. Therefore, calibration functions sharing properties across different classes can work as a suitable inductive bias in learning.
%\rih{This may be a key idea, but I have no idea what it means. }
Here we use this idea to define two additional structures interesting to intra order-preserving functions: 
\emph{order-invariant} and \emph{diagonal} properties. 
%
%Here we discuss two additional structures interesting to intra order-preserving functions: 
%\emph{order-invariant} and \emph{diagonal} properties. %, which limit the way different dimensions can interact with each other.
%
%When these structures are applicable, the learning of calibration functions can generalize better\amirreza{reviewer: I don't understand what the sentence.}. 
%
Similar to the purpose of the previous section, we will study necessary and sufficient conditions for functions with these properties.
 
First, we study the concept of order-invariant functions.  %\rih{This shouldprobably be called order-covariant} 
\begin{definition}
\label{def:order_invariant}
We say a function $\fb: \R^n \to \R^n$ is \emph{order-invariant}, if $\fb(P\xb) = P\fb(\xb)$ for all $\xb \in \R^n$ and permutation matrices $P\in\Pbb^n$.
\end{definition}

For an order-invariant function $\fb$, 
when two elements $\xb_i$ and $\xb_j$ in the input $\xb$ are swapped, the corresponding elements $\fb_i(\xb)$ and $\fb_j(\xb)$ in the output $\fb(\xb)$ are also swapped. In this way, the mapping learned for the $i$th class can also be used for the $j$th class. 
Thus, the order-invariant family shares the calibration function between different classes while allowing the output of each class be a function of all other class predictions. 
%This property becomes helpful, when the calibration dataset $\DD_c$ is insufficient to learning the calibration functions for all of the classes independently.

We characterize in the theorem below the properties of functions that are both intra order-preserving and order-invariant (an instance is the softmax operator). % in \eqref{eq:softmax}).
It shows that, to make an intra order-preserving function also order-invariant, we just need to feed the function $\wb$ in \cref{st:order_preserv} with the sorted input $\yb=S(\xb)\xb$ instead of $\xb$.
This scheme makes the learning of $\wb$ easier since it always sees sorted vectors (which are a subset of $\R^n$).

\begin{restatable}{theorem}{stIntraAndOrderInvariant}\label{st:intra_and_order_invariant}
A continuous, intra order-preserving function $\fb:\R^n\to\R^n$ is order-invariant, if and only if $\fb(\xb) = S(\xb)^{-1}U \wb(\yb)$, where $U$, $\wb$, and $\yb$ are in \cref{st:order_preserv}.
\end{restatable}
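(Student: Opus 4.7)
The plan is to build directly on \cref{st:order_preserv}, which already parameterizes every continuous intra order-preserving function as $\fb(\xb) = S(\xb)^{-1} U \wb(\xb)$ with a continuous $\wb$ satisfying the stated sign conditions at tie points. The task then reduces to showing that order-invariance of $\fb$ is equivalent to the underlying $\wb$ being constant on the orbits of $\Pbb^n$, i.e., depending on $\xb$ only through the sorted vector $\yb = S(\xb)\xb$. The key algebraic identity I will rely on is that, whenever $\xb$ has no ties and $P\in\Pbb^n$ is any permutation matrix, both $S(P\xb)P\xb$ and $S(\xb)\xb$ equal the unique sorted version of $\xb$, which forces $S(P\xb) = S(\xb)P^\top$.

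For the ``if'' direction, assume $\fb(\xb) = S(\xb)^{-1} U \wb(\yb)$. Since $\yb$ is invariant under permuting the coordinates of $\xb$ on the dense open set of tie-free inputs, substituting $P\xb$ and using $S(P\xb) = S(\xb)P^\top$ gives
\[
\fb(P\xb) = S(P\xb)^{-1} U \wb(\yb) = P\, S(\xb)^{-1} U \wb(\yb) = P\fb(\xb),
\]
and continuity of $\fb$ extends the identity to all of $\R^n$. Conversely, given an order-invariant $\fb$, \cref{st:order_preserv} provides the decomposition $U\wb(\xb) = S(\xb)\fb(\xb)$; applying order-invariance together with $S(P\xb)P = S(\xb)$ yields
\[
U\wb(P\xb) = S(P\xb)\fb(P\xb) = S(P\xb)P\fb(\xb) = S(\xb)\fb(\xb) = U\wb(\xb),
\]
and invertibility of $U$ (upper triangular with unit diagonal) forces $\wb(P\xb) = \wb(\xb)$ for every $P$. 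Hence $\wb$ is constant on each permutation orbit; since $\xb = S(\xb)^\top \yb$ belongs to the orbit of $\yb$, we conclude $\wb(\xb) = \wb(\yb)$, giving the claimed form.

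The main obstacle, already present in \cref{st:order_preserv}, is that the sorting function $S$ is only piecewise constant and its value at a tied input depends on the pre-specified tie-breaker, so the clean identity $S(P\xb)P = S(\xb)$ can fail on the measure-zero set of tied inputs. I would resolve this exactly as in \cref{st:order_preserv}: carry out all the algebraic manipulations on the open dense set of tie-free vectors, where sorting is unambiguous, and then invoke continuity of $\wb$ (assumed) together with the continuity of $\fb$ (guaranteed by the zero conditions on $\wb$ at tie points) to push the resulting identities to the closure, obtaining them on all of $\R^n$.
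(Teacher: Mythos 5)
Your proof is correct, but it takes a genuinely different route from the paper's. The paper first develops a standalone characterization of order-invariant functions via \emph{equality-preserving} maps (\cref{st:order_invariant}, supported by \cref{st:order_invariant_props} and the combinatorial \cref{st:order_permutation}, which establishes $S(\xb)\fb(\xb)=S(P\xb)P\fb(\xb)$ \emph{pointwise}, including at tied inputs, via an explicit block decomposition of the sorting matrices), and then intersects that characterization with the one from \cref{st:order_preserv}. You instead work entirely inside the decomposition $\fb(\xb)=S(\xb)^{-1}U\wb(\xb)$, use the clean identity $S(P\xb)P=S(\xb)$ on the open dense set of tie-free inputs to show $U\wb(P\xb)=U\wb(\xb)$ there, cancel the invertible $U$, and push everything to the tied inputs by continuity of $\wb$ and $\fb$. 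Both arguments are sound; yours is shorter and avoids the most technical part of the paper's proof (the case analysis in \cref{st:order_permutation}), at the price of leaning on the continuity hypothesis, which is available in this theorem. What the paper's heavier machinery buys is reusability: the equality-preserving lemmas hold for arbitrary (not necessarily continuous) functions and are invoked again for the diagonal sub-family results, whereas your density argument is specific to the continuous setting. One small point worth making explicit in a final write-up: in the "if" direction you should note that $\xb\mapsto\wb(S(\xb)\xb)$ is itself continuous and satisfies the sign conditions of \cref{st:order_preserv} (since those conditions are phrased in terms of $\yb$), so that $\fb$ is indeed a continuous intra order-preserving function before you run the density argument on $\fb(P\xb)=P\fb(\xb)$.
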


Another structure of interest here is the diagonal property. %, i.e. each dimension in the input vector $\xb$ is processed independently in computing $\fb(\xb)$.
\begin{definition}
We say a function $\fb: \R^n \to \R^n$ is \emph{diagonal}, if $\fb(\xb) = [f_1(\xb_1), \dots, f_n(\xb_n)]$ for  $f_i:\R\to\R$ with $i\in[n]$.
\end{definition}
In the context of calibration, a diagonal calibration function means that different class predictions do not interact with each other in $\fb$. %$\phib = \smb\circ\fb\circ\gb$. 
Defining diagonal family is mostly motivated by the success of temperature scaling method~\cite{guo2017calibration}, which is a linear diagonal intra order-preserving function. %We extend this by learning non-linear functions.
% In particular, it is worth mentioning that the temperature scaling scheme~\cite{guo2017calibration} uses a strict subset (i.e. $\bar{f}(\xb_i) = \xb_i/t$ for some $t>0$) of the diagonal intra order-preserving functions. 
%
Therefore, although diagonal intra order-preserving functions may sound limiting in learning calibration functions, they still represent a useful class of functions. %family larger than the common temperature scaling scheme.

The next theorem relates diagonal intra order-preserving functions to increasing functions.
\begin{restatable}{theorem}{stFulOrderPreservingDiagonal}\label{st:ful_order_preserving_diagonal}
A continuous, intra order-preserving function $\fb:\R^n\to\R^n$ is diagonal, if and only if $\fb(\xb) = [\bar{f}(\xb_1), \dots, \bar{f}(\xb_n)]$ for some continuous and increasing function $\bar{f}:\R\to\R$.
\end{restatable}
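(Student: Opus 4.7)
My plan is to prove both directions by exploiting how the intra order-preserving property constrains the scalar components of a diagonal function when we specifically choose inputs with coincidences or strict orderings between coordinates.

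For the easy direction (sufficiency), I would simply note that if $\fb(\xb) = [\bar{f}(\xb_1),\dots,\bar{f}(\xb_n)]$ with $\bar{f}$ continuous and strictly increasing, then $\fb$ is diagonal by definition and continuous componentwise. Strict monotonicity of $\bar{f}$ immediately gives the two equivalences $\xb_i > \xb_j \iff \bar{f}(\xb_i) > \bar{f}(\xb_j)$ and $\xb_i = \xb_j \iff \bar{f}(\xb_i) = \bar{f}(\xb_j)$, so $\fb$ is intra order-preserving in the sense of \cref{def:intra_order_preserving}. This direction takes only a few lines.

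The substance is in the necessity direction. I would start from the assumption that $\fb(\xb) = [f_1(\xb_1),\dots,f_n(\xb_n)]$ is intra order-preserving and continuous. The first step is showing all the $f_i$ agree: for any $a \in \R$ and any $i,j \in [n]$, pick an input $\xb$ with $\xb_i = \xb_j = a$ (the other entries arbitrary). By \cref{def:intra_order_preserving}, since $\xb_i = \xb_j$, we must have $\fb_i(\xb) = \fb_j(\xb)$, i.e., $f_i(a) = f_j(a)$. Since $a$ was arbitrary, $f_i \equiv f_j$; call this common function $\bar{f}$. The second step is showing $\bar{f}$ is strictly increasing: for any $a < b$, take any input with $\xb_i = b$ and $\xb_j = a$ for some $i \ne j$; the intra order-preserving property forces $\bar{f}(b) > \bar{f}(a)$. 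Continuity of $\bar{f}$ follows from restricting the continuous map $\fb$ to a one-dimensional slice where only one coordinate varies.

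The only subtle point, and the step that needs careful handling, is the agreement $f_i \equiv f_j$: this relies on being able to fix the values of the \emph{other} coordinates arbitrarily without disturbing the equality $\fb_i(\xb) = \fb_j(\xb)$. For a diagonal $\fb$ this is automatic, since the $i$th and $j$th outputs do not depend on the other coordinates, so varying those coordinates does not matter. I would state this observation explicitly so the reader sees why the argument goes through without extra hypotheses. After that, everything else is routine pointwise reasoning, and the theorem follows by combining the two directions.
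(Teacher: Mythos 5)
Your proposal is correct and follows essentially the same route as the paper: the paper packages your "set $\xb_i=\xb_j=a$ to force $f_i(a)=f_j(a)$" step as a separate equality-preserving lemma (its Proposition~4 and Corollary~3), but the underlying pointwise argument for agreement of the components, strict monotonicity from $\xb_i>\xb_j$, and inherited continuity is identical. Your explicit remark that the other coordinates can be varied freely because a diagonal map's $i$th output depends only on $\xb_i$ is a reasonable clarification of a point the paper leaves implicit.
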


Compared with general diagonal functions, diagonal intra order-preserving automatically implies that the same function $\bar{f}$ is shared across all dimensions. 
Thus, learning with diagonal intra order-preserving functions benefits from parameter-sharing across different dimensions, which could drastically decrease the number of parameters. 

Finally, below we show that functions in this sub-family are also order-invariant and inter order-preserving. %\footnote{Inter order-preserving is commonly known as order-preserving.}
%
%Yet below, we discuss another important property of the diagonal intra order-preserving functions.
%
Note that inter and intra order-preserving are orthogonal definitions. 
%We say a function $\fb$ is \emph{fully} order-preserving if it is both inter and intra order-preserving. 
%One can verify that any diagonal intra order-preserving function in the form defined in \cref{st:ful_order_preserving_diagonal} is also inter order-preserving and hence a fully order-preserving function. 
Inter order-preserving is also an important property for calibration functions, since this property guarantees that $\fb_i(\xb)$ increases with the original class logit $\xb_i$. The set diagram in \cref{fig:set_diagram} depicts the relationship among different intra order-preserving families. 

\begin{definition}
We say a function $\fb: \R^m \to \R^n$ is \emph{inter order-preserving} if, for any $\xb,\yb\in\R^m$ such that $\xb\geq \yb$, $\fb(\xb) \geq \fb(\yb)$, where $\geq$ denotes elementwise comparison.
\end{definition}
\begin{corollary}\label{st:diag_intra_is_invar_inter}
A diagonal, intra order-preserving function is order-invariant and inter order-preserving 
\end{corollary}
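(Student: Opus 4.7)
The plan is to apply Theorem 3 directly and then verify both properties from the explicit form it gives. By Theorem 3, any continuous diagonal intra order-preserving function $\fb:\R^n\to\R^n$ must take the form $\fb(\xb)=[\bar f(\xb_1),\dots,\bar f(\xb_n)]$ for some continuous increasing $\bar f:\R\to\R$. The two properties then fall out from the facts that (i) the same scalar map $\bar f$ is applied to every coordinate and (ii) $\bar f$ is monotone.

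For order-invariance, I would fix any permutation matrix $P\in\Pbb^n$, write $(P\xb)_i=\xb_{\sigma^{-1}(i)}$ for the associated $\sigma$, and compute
\[
\fb_i(P\xb)=\bar f((P\xb)_i)=\bar f(\xb_{\sigma^{-1}(i)})=\fb_{\sigma^{-1}(i)}(\xb)=(P\fb(\xb))_i,
\]
so $\fb(P\xb)=P\fb(\xb)$. This step is essentially bookkeeping: parameter-sharing across coordinates (the single $\bar f$) is exactly what makes permutations commute with $\fb$.

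For inter order-preserving, I would take $\xb\geq\yb$ elementwise, apply $\bar f$ coordinatewise, and invoke monotonicity of $\bar f$ to get $\bar f(\xb_i)\geq\bar f(\yb_i)$ for each $i\in[n]$, i.e.\ $\fb(\xb)\geq\fb(\yb)$.

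Neither step should present a real obstacle; the whole content of the corollary is already packaged into the canonical form given by Theorem 3, and the only thing one has to watch is the distinction between strictly increasing and non-decreasing when reading off inter order-preserving (the inequality $\geq$ in Definition 5 is non-strict, so either convention for ``increasing'' suffices). The main conceptual point worth flagging in the write-up is that the order-invariance here comes for free from the coordinate-shared structure forced by the intra order-preserving plus diagonal assumptions together with continuity, not from diagonality alone.
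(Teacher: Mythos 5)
Your proposal is correct and follows essentially the same route as the paper: both reduce to the canonical form $\fb(\xb)=[\bar f(\xb_1),\dots,\bar f(\xb_n)]$ with $\bar f$ increasing (the paper reaches it via its equality-preserving lemmas, which is exactly what underlies \cref{st:ful_order_preserving_diagonal}), and your permutation computation is the same as the $(2\to3)$ step of the paper's corollary on diagonal equality-preserving functions, with inter order-preservation read off from monotonicity of $\bar f$ coordinatewise. The only nitpick is that continuity plays no role here --- the shared-$\bar f$ form and monotonicity follow from diagonality and intra order-preservation alone --- so flagging continuity as part of what ``forces'' the structure slightly overstates its role.
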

%%%

\begin{figure}[t]
\centering
\includegraphics[width=0.3\linewidth]{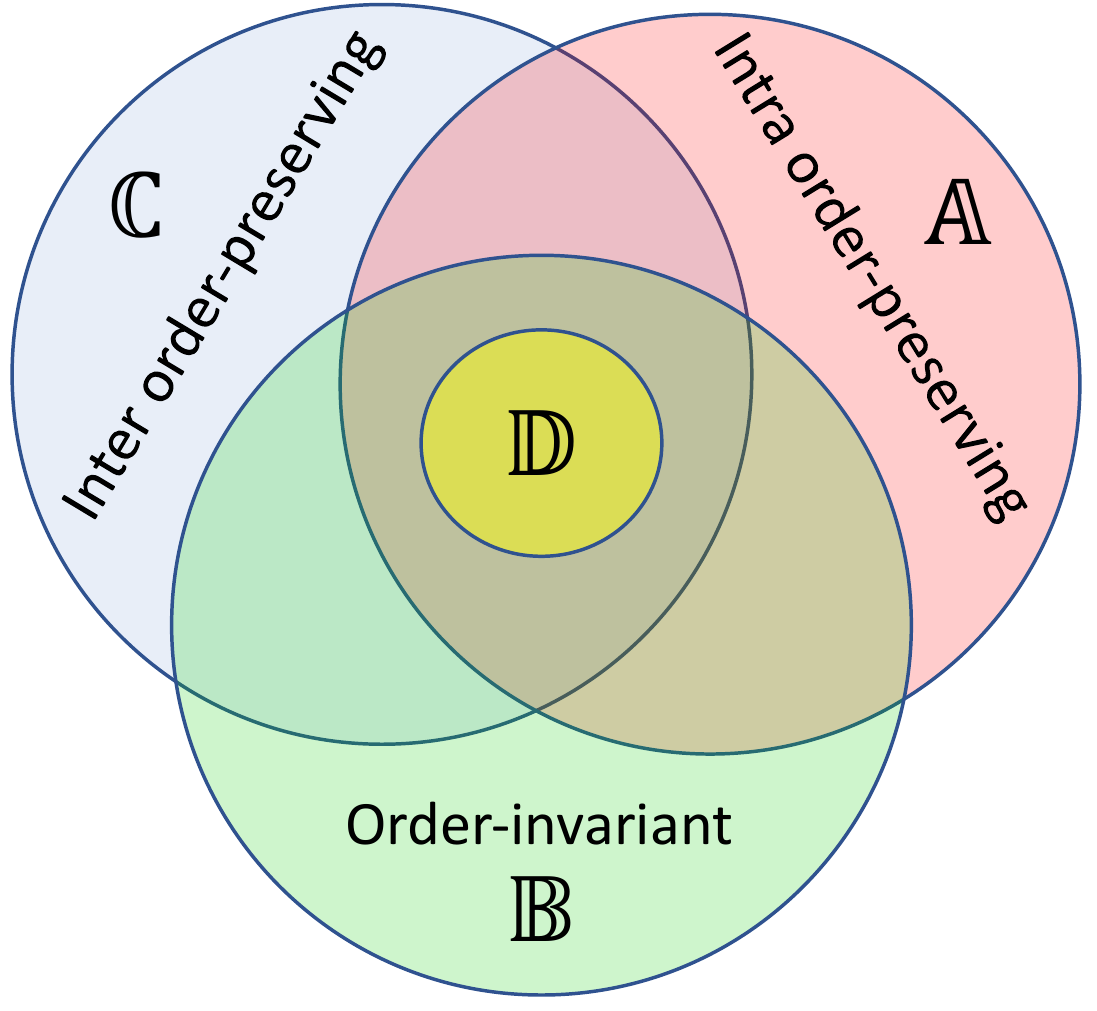}
\caption{\label{fig:set_diagram} Relationship between different function families. 
\cref{st:order_preserv} specifies the intra order-preserving functions $\Abb$.
\cref{st:intra_and_order_invariant} specifies the intra order-preserving and order-invariant functions $\Abb\cap\Bbb$.
\cref{st:ful_order_preserving_diagonal} specifies the diagonal intra order-preserving functions $\Dbb$.
By \cref{st:diag_intra_is_invar_inter}, these functions are also order-invariant and inter order-preserving i.e. $\Dbb \subseteq \Abb \cap \Bbb \cap \Cbb$.}
\end{figure}

\subsection{Practical Considerations} \label{sec:practical considerations}
%\begin{wrapfigure}r{0.33\textwidth}
%\centering
%\includegraphics[width=0.25\columnwidth]{summary}

%\caption{\label{fig:summary} \footnotesize{Relationship between different function families. %We specify the properties of family of 
%\cref{st:order_preserv} specifies the intra order-preserving functions $\Abb$.
%%
%\cref{st:intra_and_order_invariant} specifies the intra order-preserving and order-invariant functions $\Abb\cap\Bbb$.
%%
%\cref{st:ful_order_preserving_diagonal} specifies the diagonal intra order-preserving functions $\Dbb$.
%%
%%
%By \cref{st:diag_intra_is_invar_inter}, these functions are also order-invariant and inter order-preserving i.e. $\Dbb \subseteq \Abb \cap \Bbb \cap \Cbb$.}}
%\end{wrapfigure}
%
\cref{st:order_preserv,st:intra_and_order_invariant} describe general representations of intra order-preserving functions through a function $\wb$ that satisfies certain non-negative constraints. 
Inspired by these theoretical results, we propose a neural network architecture, \cref{fig:presnet}, to represent exactly a family of intra order-preserving functions.

The main idea in \cref{fig:presnet} is to parameterize $\wb$ through a composition of smaller functions. For $i<n$, we set $\wb_i(\xb) = \sigma(\yb_i - \yb_{i+1}) \mb_i(\xb)$, where $\sigma:\R\to\R$ is a positive function such that $\sigma(a) = 0$ only when $a=0$, and $\mb_i$ is a strictly positive function. It is easy to verify that this parameterization of $\wb$ satisfies the requirements on $\wb$ in \cref{st:order_preserv}. However, we note that this class of functions cannot represent all possible $\wb$ stated in \cref{st:order_preserv}. 
In general, the speed $\wb_i(\xb)$ converges to $0$ can be a function of $\xb$, but in the proposed factorization above, the rate of convergence to zero is a function of only two elements $\yb_i$ and $\yb_{i+1}$. %
%This limits the behaviour of $\wb$ only when some elements of the input vector $\xb$ are close to each other.
Fortunately, such a limitation does not %In practice, we do not expect this factorization 
substantially decrease the expressiveness of $\fb$ in practice, because the subspace where $\wb_i$ vanishes has zero measure in $\R^n$ (i.e. subspaces where there is at least one tie in $\xb\in\R^n$).

By \cref{st:order_preserv} and \cref{st:intra_and_order_invariant}, the proposed architecture in \cref{fig:presnet} ensures $\fb(\xb)$ is continuous in $\xb$ as long as $\sigma(\yb_i -\yb_{i+1})$ and $\mb_i(\xb)$ are continuous in $\xb$. In the appendix, we show that this is true when $\sigma$ and $\mb_i$ are continuous functions. 
Additionally, we prove that when $\sigma$ and $\mb$ are continuously differentiable, $\fb(\xb)$ is also directionally differentiable with respect to $\xb$. Note that the differentiability to the input is not a requirement to learn the parameters of $\mb$ with a first order optimization algorithm which only needs $\fb$ to be differentiable with respect to the parameters of $\mb$. The latter condition holds in general, since the only potential sources of non-differentiable $\fb$, $S(\xb)^{-1}$ and $\yb$ are constant with respect to the parameters of $\mb$. Thus, if $\mb$ is differentiable with respect to its parameters, $\fb$ is also differentiable with respect to the parameters of $\mb$.

\section{Implementation}\label{sec:implementation}

\def\thetab{\bm\theta}
Given a calibration dataset $\DD_c = \{(\zb^{i}, y^{i})\}_{i=1}^N$ and a calibration function $\fb$ parameterized by some vector $\thetab$, we define the empirical calibration loss as 
%\begin{equation}\label{eq:calib_loss}
%\begin{aligned}
$\frac{1}{N} \sum_{i=1}^N \ell(y^{i}, \fb(\xb^{i})) + \frac{\lambda}{2} ||\thetab||^2$, 
%\end{aligned}
%\end{equation}
where $\xb^i = \gb(\zb^i)$, 
$\ell:\YY\times\R^n\to\R$ is a classification cost function, and $\lambda \geq 0$ is the regularization weight. Here we follow the calibration literature~\cite{thulasidasan2019mixup,guo2017calibration,kull2019beyond} and use the negative log likelihood (NLL) loss, i.e.,
$
    \ell(y, \fb(\xb)) = -\log(\mathbf{sm}_y(\fb(\xb)))
$,
where $\mathbf{sm}$ is the softmax operator and $\mathbf{sm}_y$ is its $y$th element. % in \cref{eq:softmax}. 
We use the NLL loss in all the experiments to study the benefit of learning $\fb$ with different structures. % within the intra order-preserving family.
The study of other loss functions for calibration~\cite{seo2019learning,xing2020distancebased} is outside the scope of this paper.

To ensure $\fb$ is within the intra order-preserving family, we restrict $\fb$ to have the structure in \cref{st:order_preserv} and set $\wb_i(\xb) = \sigma(\yb_i - \yb_{i+1}) \mb(\xb)$, as described in \cref{sec:practical considerations}. 
We parameterize function $\mb$ by a generic multi-layer neural network and utilize the softplus activation $s^+(a) = \log(1+\exp(a))$ on the last layer when strict positivity is desired and represent $\sigma$ as $\sigma(a) = |a|$.
%
%In addition, because the Huber function behaves as positive scaling, when the input to $\sigma$ is away from zero, we can think of $\mb$ as learning to scale the gaps between consecutive values in $\yb$.
%
For example, when $\mb_i(\xb)$ is constant, our architecture recovers the temperature scaling scheme~\cite{guo2017calibration}. 

The order-invariant version in \cref{st:intra_and_order_invariant} can be constructed similarly. The only difference is that the neural network that parameterizes $\mb$ receives instead the sorted input. \cref{fig:presnet} illustrates the architecture of these models.

The diagonal intra order-preserving version in \cref{st:ful_order_preserving_diagonal} is formed by learning an increasing function shared across all logit dimensions. We use the official implementation of proposed architecture in~\cite{wehenkel2019unconstrained} that learns monotonic functions with unconstrained neural networks.

\begin{figure*}[t]
    \centering
    \includegraphics[width=1.0\linewidth]{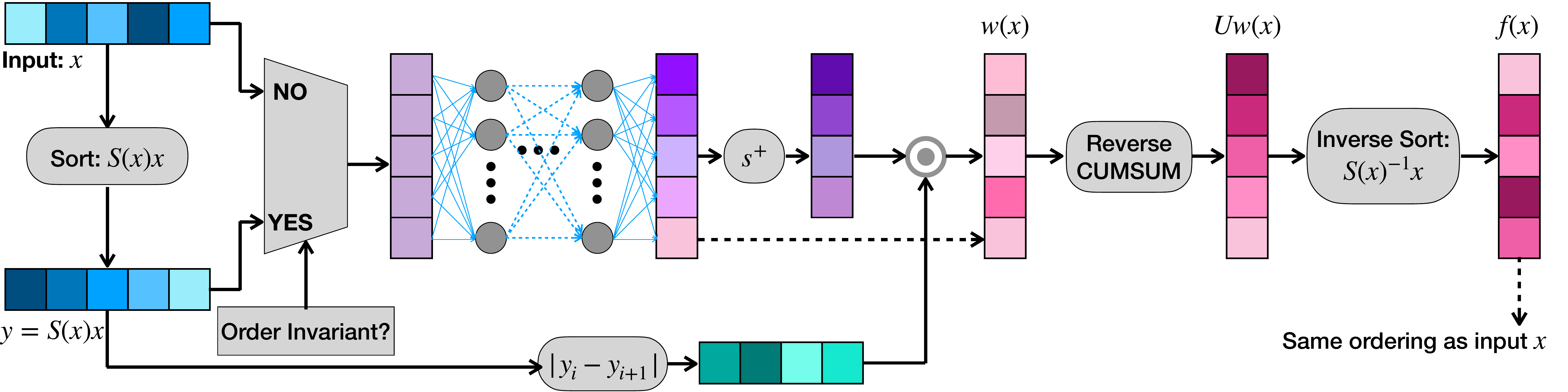}
    \caption{\label{fig:presnet} \footnotesize Flow graph of the intra order-preserving function. The vector $\xb\in\R^n$ is the input to the graph. Function $\mb$ is estimated using a generic multi-layer neural network with non-linear activation for the hidden layers. The input to the network is sorted for learning order-preserving functions. We employ softplus activation function $s^+$ to impose strict positivity constraints.}%Note that, for efficiency purposes, we use cumulative sum operation to compute $U\vb(\xb)$.}
    \vspace{-.6cm}
\end{figure*}

\vspace{-1mm}
\section{Related Work} 
\vspace{-1mm}

Many different post-hoc calibration methods have been studied in the literature~\cite{platt1999probabilistic, guo2017calibration,kull2019beyond,kull2017beyond,kull2017beta,kumar2019calibration}. Their main difference is in the parametric family of the calibration function.
In Platt scaling~\cite{platt1999probabilistic}, scale and shift parameters $a,b \in \R$ are used to transform the scalar logit output $x \in \R$ i.e. $f(x) = ax+b$ of a binary classifier. Temperature scaling~\cite{guo2017calibration} is a simple extension of Platt scaling for multi-class calibration in which only a single scalar temperature parameter is learned. Dirichlet calibration~\cite{kull2019beyond} allows learning within a richer linear functions family $f(\xb) = W\xb + \bb$, where $W \in \R^{n\times n}$ and $\bb \in \R^n$ but the learned calibration function may also change the decision boundary of the original model; Kull \etl~\cite{kull2019beyond} suggested regularizing the off-diagonal elements of $W$ to avoid overfitting. 
Similar to our work, the concurrent work in Zhang~\etl~\cite{zhang2020mix} also give special attention to order preserving transformations for calibration. However, their introduced functions are less expressive than the ones presented in this work.
Earlier works like isotonic regression~\cite{zadrozny2002transforming}, histogram binning~\cite{zadrozny2001obtaining}, and Bayesian binning~\cite{zadrozny2002transforming} are also post-hoc calibration methods.

In contrast to post-hoc calibration methods, several researches proposed to modify the  training process to learn a calibrated network in the first place. Data augmentation methods~\cite{thulasidasan2019mixup,yun2019cutmix} overcome overfitting by enriching the training data with new artificially generated pseudo data points and labels. Mixup~\cite{mixup} creates pseudo data points by computing the convex combination of randomly sampled pairs. Cutmix~\cite{yun2019cutmix} uses a more efficient combination algorithm specifically designed for image classification in which two images are combined by overlaying a randomly cropped part of the first image on the second image. In label smoothing~\cite{pereyra2017regularizing, muller2019does}, the training loss is augmented to penalize high confidence outputs. To discourage overconfident predictions, \cite{seo2019learning} modifies the original NNL loss by adding a cross-entropy loss term with respect to the uniform distribution. Similarly, \cite{kumar2018trainable} adds a calibration regularization to the NLL loss via kernel mean embedding.

Bayesian neural networks~\cite{gal2016dropout,maddox2019simple} derive the uncertainty of the prediction by making stochastic perturbations of the original model. Notably, \cite{gal2016dropout} uses dropout as approximate Bayesian inference. \cite{maddox2019simple} estimates the posterior distribution over the parameters and uses samples from this distribution for Bayesian model averaging. These methods are computationally inefficient since they typically feed each sample to the network multiple times.

\vspace{-1mm}

\vspace{-1mm}
\section{Experiments} \label{sec:experiments}
\vspace{-1mm}

\def\oinv{\textsc{OI}\xspace}
\def\dgfn{\textsc{Diag}\xspace}
\def\opre{\textsc{OP}\xspace}
\def\uncons{\textsc{Unconstrained}\xspace}
\def\msodir{\textsc{MS}\xspace}
\def\dirodir{\textsc{Dir}\xspace}

We evaluate the performance of intra order-preserving (\opre), order-invariant intra order-preserving (\oinv), and diagonal intra order-preserving (\dgfn) families in calibrating the output of various image classification deep networks and compare their results with the previous post-hoc calibration techniques. 
%\amir{Source code}
%The source code is submitted for review. 
%We will make all of the models publicly available for future comparisons.

{\bf\noindent Datasets.} We use six different datasets: CIFAR-\{10,100\}~\cite{krizhevsky2009learning}, SVHN~\cite{netzer2011reading}, CARS~\cite{KrauseStarkDengFei-Fei_3DRR2013}, BIRDS~\cite{WelinderEtal2010}, and ImageNet~\cite{deng2009imagenet}. 
In these datasets, the number of classes vary from $10$ to $1000$. 
We evaluate the performance of different post-hoc calibration methods to calibrate ResNet~\cite{he2016deep}, Wide ResNet~\cite{Zagoruyko2016WRN}, DenseNet~\cite{huang2017densely}, and PNASNet5~\cite{liu2018progressive} networks.
We follow the experiment protocol in~\cite{kull2019beyond,kull2017beyond} and use cross validation on the calibration dataset to find the best hyperparameters and architectures for all the methods. Please refer to the Appendix for detailed description of the datasets, pre-trained networks, and hyperparameter tuning.

%%%%%%%%%%%%
\begingroup
\setlength{\tabcolsep}{4pt}
\begin{table*}[t]
	\begin{center}
	    \caption{\em \label{tbl:ece_paper}\footnotesize{ECE (with $M=15$ bins) on various image classification datasets and models with different calibration methods. The subscript numbers represent the rank of the corresponding method on the given model/dataset. The accuracy of the uncalibrated model is shown in parentheses. The number in parentheses in \dirodir, \msodir, and \uncons methods show the change in accuracy for each method.}}
		\resizebox{0.99\linewidth}{!}{%
			\begin{tabular}{c c | c c c c | c c c|c}
				\toprule
				Dataset  & Model & Uncal. & TS & \dirodir & \msodir & \dgfn & \oinv & \opre & \uncons \\
				\hline
				
				CIFAR10 & ResNet~110 & $0.0475_8 (93.6\%)$ & $0.0113_5$ & $0.0109_4 (-0.1\%)$ & $0.0106_3 (-0.1\%)$ & $0.0067_2$ & ${\bf 0.0061}_1$ & $0.0119_6$ & $0.0170_7 (-0.4\%)$ \\
				CIFAR10 & Wide ResNet~32 & $0.0451_8 (93.9\%)$ & $0.0078_4$ & $0.0084_5 (+0.3\%)$ & $0.0073_2 (+0.3\%)$ & $0.0136_7$ & ${\bf 0.0064}_1$ & $0.0077_3$ & $0.0097_6 (-0.1\%)$ \\
				CIFAR10 & DenseNet~40 & $0.0550_8 (92.4\%)$ & $0.0095_2$ & $0.0110_4 (+0.1\%)$ & $0.0099_3 (+0.1\%)$ & ${\bf 0.0069}_1$ & $0.0116_5$ & $0.0128_7$ & $0.0125_6 (-0.5\%)$\\
				\hline
				SVHN & ResNet~152 (SD) & $0.0086_8 (98.1\%)$ & $0.0061_5$ & $0.0058_3 (+0.0\%)$ & $0.0060_4 (+0.0\%)$ & $0.0057_2$ & $00.0116_6$ & $0.0118_7$ & $\mathbf{0.0015}_1 (+0.0\%)$\\
				\hline
				CIFAR100 & ResNet~110 & $0.1848_8 (71.5\%)$ & $0.0238_2$ & $0.0282_5 (+0.2\%)$ & $0.0274_4 (+0.1\%)$ & $0.0507_7$ & ${\bf 0.0119}_1$ & $0.0253_3$ & $0.0346_6 (-4.4\%)$\\
				CIFAR100 & Wide ResNet~32 & $0.1878_8 (73.8\%)$ & $0.0147_2$ & $0.0189_5(+0.1\%)$ & $0.0258_6(+0.1\%)$ & $0.0172_3$ & ${\bf 0.0126}_1$ & $0.0173_4$ & $0.0421_7 (-6.1\%)$ \\
				CIFAR100 & DenseNet~40 & $0.2116_8 (70.0\%)$ & $0.0090_2$ & $0.0114_4 (+0.1\%)$ & $0.0220_6(+0.4\%)$ & ${\bf 0.0075}_1$ & $0.0098_3$ & $0.0154_5$ & $0.0990_7 (-12.9\%)$\\
				\hline
				CARS & ResNet~50 (pre) & $0.0239_7 (91.3\%)$ & $0.0144_3$ & $0.0243_8 (+0.2\%)$ & $0.0186_6(-0.3\%)$ & $0.0105_2$ & ${\bf 0.0103}_1$ & $0.0185_5$ & $0.0182_4 (-3.5\%)$\\
				CARS & ResNet~101 (pre) & $0.0218_7 (92.2\%)$ & $0.0165_5$ & $0.0225_8 (+0.0\%)$ & $0.0191_6(-0.8\%)$ & ${\bf 0.0102}_1$ & $0.0135_3$ & $0.0125_2$ & $0.0155_4 (-3.9\%)$\\
				CARS & ResNet~101 & $0.0421_8 (85.2\%)$ & $0.0301_4$ & $0.0245_3 (-0.3\%)$ & $0.0345_6(-1.1\%)$ & ${\bf 0.0206}_1$ & $0.0323_5$ & $0.0358_7$ & $0.0236_2 (-7.0\%)$ \\
				\hline
				BIRDS & ResNet~50 (NTS) & $0.0714_8 (87.4\%)$ & $0.0319_5$ & $0.0486_6 (-0.2\%)$ & $0.0585_7 (-1.1\%)$ & $0.0188_2$ & ${\bf 0.0172}_1$ & $0.0292_4$ & $0.0276_3 (-2.2\%)$\\
				\hline 
				ImageNet & ResNet~152 & $0.0654_7 (76.2\%)$ & $0.0208_4$ & $0.0452_5 (+0.1\%)$ & $0.0567_6 (+0.1\%)$ & ${\bf 0.0087}_1$ & $0.0109_2$ & $0.0167_3$ & $0.1297_8 (-33.4\%)$ \\
				ImageNet & DenseNet~161 & $0.0572_7 (77.1\%)$ & $0.0198_4$ & $0.0374_5 (+0.1\%)$ & $0.0443_6 (+0.4\%)$ & ${\bf 0.0103}_1$ & $0.0123_2$ & $0.0168_3$ & $0.1380_8 (-28.1\%)$\\
				ImageNet & PNASNet5~large & $0.0610_7 (83.1\%)$ & $0.0713_8$ & $0.0398_6 (+0.0\%)$ & $0.0217_4 (+0.3\%)$ & $0.0117_2$ & ${\bf 0.0084}_1$ & $0.0133_3$ & $0.0316_5 (-4.8\%)$\\
				\midrule
%				Avg. Rank &  & 6.79 & 3.57 & 4.64 & 4.57 & \textbf{2.14} & 2.21 & 4.07 \\
				\multicolumn{2}{c|}{Average Relative Error} & $1.00_8$ & $0.42_4$ & $0.49_5$ & $0.50_6$ & $\mathbf{0.27}_1$ & $0.33_2$ & $0.41_3$ & $0.66_7$ \\
				\bottomrule
			\end{tabular}
		}
	\end{center}
	\vspace{-.2cm}
\end{table*}
\endgroup
%%%%

% \begin{figure*}[!h]
%         \centering
% 		\includegraphics[width=0.24\linewidth]{}
% 		\includegraphics[width=0.24\linewidth]{}
% 		\includegraphics[width=0.24\linewidth]{}
% 		\includegraphics[width=0.24\linewidth]{}
		
% 	\caption{\label{fig:incorrecthists} \footnotesize Norm\amir{make them larger}.}
% \end{figure*}
%%%%
%%%
%%%
{\bf\noindent Baselines.} We compare the proposed function structures with temperature scaling (TS)~\cite{guo2017calibration}, Dirichlet calibration with off-diagonal regularization (\dirodir)~\cite{kull2019beyond}, and matrix scaling with off-diagonal regularization (\msodir)~\cite{kull2019beyond} as they are the current best performing post-hoc calibration methods. We also present the results of the original uncalibrated models (Uncal.) for comparison. To show the effect of intra order-preserving regularization, we also show the results of applying unconstrained multi-layer neural network without intra order-preserving constraint (\uncons). In cross-validation, we tune the architecture as well as regularization weight of \uncons and order-preserving functions.
As we are using the same logits as~\cite{kull2019beyond}, we report their results directly on CIFAR-10, CIFAR-100, and SVHN. However, since they do not present the results for CARS, BIRDS, and ImageNet datasets, we report the results of their official implementation\footnote{\href{https://github.com/dirichletcal/experiments_dnn/}{https://github.com/dirichletcal/experiments\_dnn/}} on these datasets.

%\amir{In addition to ECE, we report the calibration performance on MCE and Brier calibration metrics~\cite{}. To save space, the performance on additional metrics are reported in the Appendix.}

\begin{figure*}[!t]
        \centering
        \includegraphics[width=0.9\linewidth]{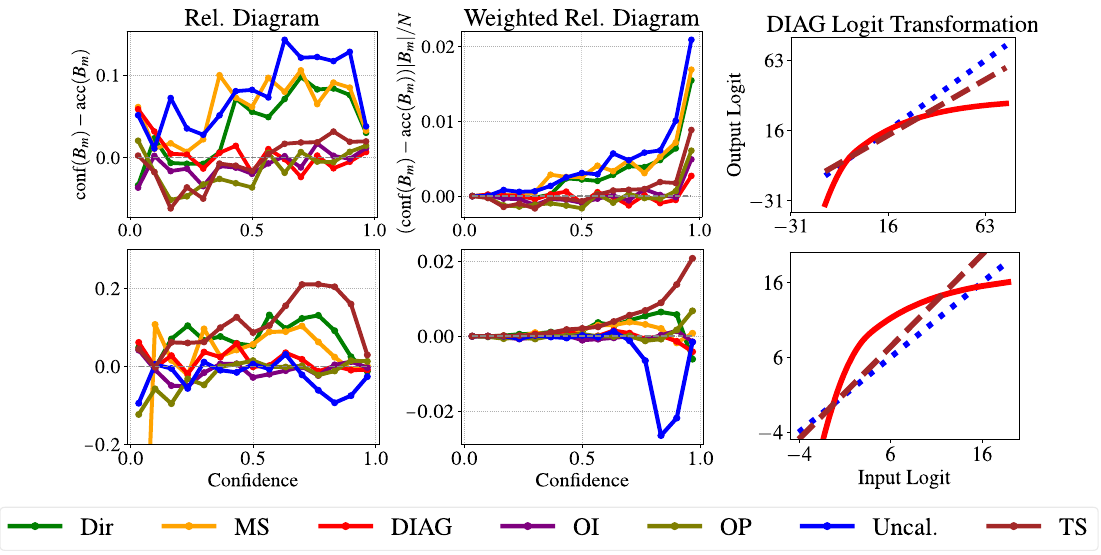}
		%~\includegraphics[width=0.23\linewidth,trim=0 25 10 0, clip]{}
		%\includegraphics[width=0.23\linewidth,trim=0 25 10 0, clip]{}
		%\includegraphics[width=0.23\linewidth,trim=0 25 0 0, clip]{} \\
		%\includegraphics[width=0.22\linewidth,trim=30 3 10 30, clip]{}
		%~\includegraphics[width=0.22\linewidth,trim=30 3 10 30, clip]{}
		%\includegraphics[width=0.21\linewidth,trim=40 3 10 30, clip]{} \\
		%\includegraphics[width=0.8\linewidth,trim=80 0 335 0, clip]{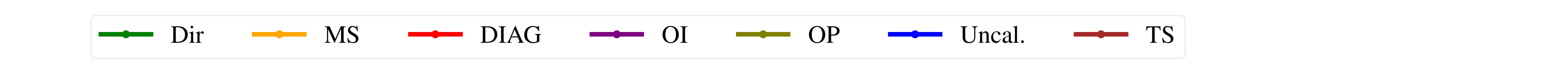}
	\caption{\label{fig:transformations} \footnotesize Performance evaluations of ResNet~152 (\textbf{Top Row}) and PNASNet5~large (\textbf{Bottom Row}) on ImageNet dataset. (\textbf{Left}) Reliability diagrams. As suggested by~\cite{maddox2019simple} we show the difference between the estimated confidence and accuracy over $M=15$ bins. The dashed grey lines represent the perfectly calibrated network at $y=0$. Points above~(below) the grey line show overconfident~(underconfident) predictions in a bin. (\textbf{Middle}) Weighted reliability diagrams where bin values are weighted by data frequency distribution. Since the uncalibrated network has different distances to the perfect calibration in different bins, scaling by a single temperature will lead to a mix of underconfident and overconfident regions. Our order-preserving functions, on the other hand, have more flexibility to reduce the calibration error. (\textbf{Right}) Transformation learned by \dgfn function compared to temperature scaling and uncalibrated model (identity map).}
	%\vspace{-.0cm}
\end{figure*}

%%%%%%%%%%%%
%\input{tables/DebiasedECE_MarginalCError_SHORT}
%%%%%%%%%%%%%%%%

%\subsection{Results\label{sec:cifar_svhn_results}}
{\bf\noindent Results.} \cref{tbl:ece_paper} summarizes the results of our calibration methods and other baselines in terms of ECE and presents the average relative error with respect to the uncalibrated model.
%Its performance is slightly worse with DenseNet models and on the SVHN dataset.
%
%\dgfn outperforms other methods on the other cases where \oinv is not the best method. It is the best method when the number of classes is more than 100 and on in the cases of DenseNet models and SVHN dataset. On ImageNet with the state-of-the-art PNASNet5~large model it loses to \oinv but its ECE is still half the \msodir which is the best baseline on this model.
%
%Temperature scaling is also performing good when the number of classes is $200$ or less. This shows the importance of diagonal structure in these experiments. On the other hand, \dgfn outperforms temperature scaling in most of the cases, because \dgfn can learn more flexible functions (as is illustrated in \cref{fig:transformations}). Note that learning intra order-preserving functions without order-invariant and diagonal assumptions (i.e \opre) does not exhibit a good performance. This result highlights being simply intra order-preserving can still be too general, and extra proper regularization on this family needs to be imposed.
%
Overall, \dgfn has the lowest average relative error followed by \oinv among the models and datasets presented in \cref{tbl:ece_paper}. \oinv is the best-performing method in 7 out of 14 experiments including ResNet~110 and Wide~ResNet~32 models on CIFAR datasets as well as state-of-the-art PNASNet5~large model. \dgfn family's relative average error is half the \msodir and \dirodir methods and $15\%$ less compared to Temp. Scaling. %\oinv performs comparably well on relative error. However, it is slightly worse than \dgfn.  %Finally, \opre's performance is superior to \msodir and \dirodir on this metric.
Although \dirodir and \msodir were able to maintain the accuracy of the original models in most of the cases by imposing off diagonal regularization, order-preserving family could significantly outperform them regarding the ECE metric. Finally, we remark that learning an unconstrained multi-layer neural network does not exhibit a good calibration performance and drastically hurts the accuracy in some datasets as shown in the last column of \cref{tbl:ece_paper}.

\cref{fig:transformations} illustrates the reliability diagrams of models trained on ResNet~152 (top row) and PNASNet5~large (bottom row).  Weighted reliability diagrams are also presented to better indicate the differences regarding the ECE metric. Surprisingly, these diagrams show that the uncalibrated PNASNet5~large model is underconfident.  The differences between the mappings learned by \dgfn and temperature scaling on these models are illustrated on the right column. \dgfn is capable of learning complex increasing functions while temperature scaling only scales all the logits. Compared with \dirodir and \msodir which learn a linear transformation, all intra order-preserving methods can learn non-linear transformations on the logits while decoupling accuracy from calibration of the predictions.  %\amir{talk about pretrained models?}
%The extra gain of accuracy in some cases of \msodir and \dirodir is due to the addition of extra capacity and seeing more data than the original model. However, we also note that in two of the cases, \msodir drops the accuracy by more than 1\% which might not be desirable. \amir{Maybe we need to add more analysis and an experiment about our hypothesis of reducing the uncertainty in case of mis-classification}

In addition to ECE, which considers the top prediction, we also measure the NLL, Marginal Calibration Error~\cite{kumar2019calibration}, Classwise-ECE, and Berier score. As it is shown in \cref{tbl:summary}, \dgfn and \oinv have the best overall performance in terms of average relative error in most cases, while \dirodir is the top performing method in Classwise-ECE.
%However, the differences in these metrics are less significant in terms of average relative error; \dirodir outperforms \dgfn by $2.5\%$ in Classwise-ECE, while \dgfn shows $2.3\%$ improvement over \dirodir with respect to the NLL. 
Refer to the Appendix for discussions and the performance comparisons over all the datasets.
%%%%%%%%%%%%%%%%%%%%%%%%%%%%%%%%%%
%\input{tables/NLL_MarginalCE_BEST_SHORT}
%%%%%%%%%%%%%%%%%%%%%%%%%%%%%%%%%%
%%%%%%%%%%%%%%%%

%%%%%%%%%%%%%%%%%%%%%%%%%%%%%
%%%%%%%%%%%%
\begingroup
\setlength{\tabcolsep}{4pt}
\begin{table*}[h]
	\begin{center}
	    \caption{\em \label{tbl:summary}\footnotesize{Average relative error. Each entry shows the relative error compared to the uncalibrated model averaged over all the datasets. The subscripts represent the rank of the corresponding method on the given metric. See the Appendix for per dataset performance comparisons.}}
		\resizebox{0.8\linewidth}{!}{%
			\begin{tabular}{c | c c c c | c c c}
				\toprule
				Evaluation Metric & Uncal. & TS & \dirodir & \msodir & \dgfn & \oinv & \opre \\
				\hline
				ECE & $1.000_7$ & $0.420_4$ & $0.490_5$ & $0.500_6$ & $\mathbf{0.270}_1$ & $0.330_2$ & $0.410_3$ \\
				Debiased ECE~\cite{kumar2019calibration} & $1.000_7$ & $0.357_3$ & $0.430_6$ & $0.409_5$ & ${\bf 0.213}_1$ & $0.337_2$ & $0.406_4$\\
				NLL & $1.000_7$ & $0.766_4$ & $0.772_6$ & $0.768_5$ & $\mathbf{0.749}_1$ & $0.751_2$ & $0.765_3$\\
				Marginal Caliration Error~\cite{kumar2019calibration} & $1.000_7$ & $0.750_3$ & $0.735_2$ & $0.996_6$ & ${\bf 0.725}_1$ & $0.778_4$ & $0.898_5$ \\
				Classwise-ECE & $1.000_7$ & $0.752_6$ & $\mathbf{0.704}_1$ & $0.734_3$ & $0.729_2$ & $0.740_4$ & $0.743_5$ \\
				Brier & $1.000_7$ & $0.936_5$ & $0.930_3$ & $0.936_5$ & $\mathbf{0.924}_1$ & $0.929_2$ & $0.931_4$ \\
				\bottomrule
			\end{tabular}
		}
	\end{center}
	\vspace{-.2cm}
\end{table*}
\endgroup

\vspace{-0.3cm}
\section{Conclusion} \label{sec:conclusion}
\vspace{-0.2cm}
In this work, we introduce the family of intra order-preserving functions which retain the top-$k$ predictions of any deep network when used as the post-hoc calibration function.
We propose a new neural network architecture to represent these functions, and new regularization techniques based on order-invariant and diagonal structures. 
In short, calibrating neural network with this new family of functions generalizes many existing calibration techniques, with additional flexibility to express the post-hoc calibration function.
The experimental results show the importance of learning within the intra order-preserving family as well as support the effectiveness of the proposed regularization in calibrating multiple classifiers on various datasets.
%
%Our method outperforms state-of-the-art temperature scaling and Dirichlet calibration post-hoc in multiple settings.

We believe the applications of intra order-preserving family are not limited to network calibration. 
Other promising domains include, e.g., data compression, depth perception system calibration, and tone-mapping in images where tone-maps need to be monotonic. % can benefit from this research. 
Exploring the applicability of  intra order-preserving functions in other applications is an interesting future direction.
%remains as important future work.

\clearpage
\section*{Broader Impact}
Predicting calibrated confidence scores for multi-class deep networks is important for avoiding rare but costly mistakes. Trusting the network's output naively as confidence scores in system design could cause undesired consequences: a serious issue for applications where mistakes are costly, such as medical diagnosis, autonomous driving, suspicious events detection, or stock-market. As an example, in medical diagnosis, it is vital to estimate the chance of a patient being recovered by a certain operation given her/his condition. If the estimation is overconfident/underconfident this will put the life of the patient at risk. %Downstream decision making systems receive assurance from calibrated yet accurate models in order to incorporate them.
Confidence calibrated models would enable integration into downstream decision-making systems, allow machine learning interpretability, and help gain the user trust. 
While this work focuses primarily on some of the theoretical aspects of the neural network calibration, it also proposes novel techniques to potentially improve broader set of applications where preserving the rank of set of inputs is desired e.g. tone-mapping in images where tone-maps need to be monotonic, depth perception system calibration, and data compression.

We need to remark that our research shows that methods perform differently under various calibration metrics. Unfortunately, discrepancy between different calibration metrics is not well understood and fully explored in the literature. We believe more insights into these inconsistencies would be valuable to the field. We report the performance under different calibration metrics to highlight these differences for the future research. This also means that integrating the proposed work or any other calibration method into decision making systems requires application specific considerations. Other than that, since this work is mostly on the theoretical aspect of improving calibration, we do not foresee any direct negative impacts.

\begin{ack}
We would like to thank Antoine Wehenkel for providing helpful instructions for his unconstrained monotonic networks.
This research is supported in part by the Australia Research Council Centre of Excellence for Robotics Vision
(CE140100016).
\end{ack}

\bibliography{ms}
\bibliographystyle{plain}

\clearpage
\onecolumn
\appendix

\title{Intra Order-Preserving Functions for \\ Calibration of Multi-Class Neural Networks \\ Supplementary Material}

\maketitle

\section{Missing Proofs}

\subsection{Proof of \cref{st:order_preserv}, Intra Order-preserving Functions}
\stOrderPreserv*
\begin{proof}[Proof of \cref{st:order_preserv}]

($\rightarrow$) 
For a continuous intra order-preserving function $\fb(\xb)$, let $\wb(\xb) = U^{-1} S(\xb) \fb(\xb)$. % = U^{-1} \hat{\fb}(\xb)$, where 
First we show $\wb$ is continuous.
Because $\fb$ is intra order-preserving, it holds that $S(\xb) = S(\fb(\xb))$.
Let $\hat{\fb}(\xb) \coloneqq S(\fb(\xb)) \fb(\xb)$ be the sorted version of $\fb(\xb)$. 
The above implies  $\wb(\xb)=  U^{-1} \hat{\fb}(\xb)$. 
By \cref{st:sorting_continuous}, we know  $\hat{\fb}$ is continuous and therefore $\wb$ is also continuous.

\begin{lemma}\label{st:sorting_continuous}
Let $\fb:\R^n\to\R^n$ be a continuous intra order-preserving function.  $S(\fb(\xb)) \fb(\xb)$ is a continuous function.
\end{lemma}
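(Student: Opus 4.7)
The plan is to reduce the statement to the elementary fact that sorting is a continuous operation on $\R^n$. First I would unpack what $\hat{\fb}(\xb) := S(\fb(\xb)) \fb(\xb)$ actually is: by definition of the sorting function, it is exactly the vector obtained by rearranging the entries of $\fb(\xb)$ in non-increasing order. Crucially, although the permutation matrix $S(\fb(\xb))$ depends on the tie-breaker, the resulting sorted vector does not: any valid sorting permutation produces the same numerical output, because it only swaps coordinates that are equal in value. So the task reduces to proving that the map $\mathrm{sort}:\R^n\to\R^n$, sending $\vb$ to its non-increasing rearrangement, is continuous.

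Once that is in hand, I would conclude by composition: $\hat{\fb}(\xb) = \mathrm{sort}(\fb(\xb))$, and since $\fb$ is assumed continuous and $\mathrm{sort}$ will be shown continuous, $\hat{\fb}$ is continuous as well. I note that the intra order-preserving hypothesis on $\fb$ is not even needed for this particular lemma; only the continuity of $\fb$ matters. The hypothesis enters the broader proof of \cref{st:order_preserv} through the identity $S(\xb)=S(\fb(\xb))$, which lets one rewrite $\wb(\xb)=U^{-1}S(\xb)\fb(\xb)$ as $U^{-1}\hat{\fb}(\xb)$.

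For the continuity of $\mathrm{sort}$, I would use the standard min-max representation
\[
\mathrm{sort}_i(\vb) \;=\; \min_{\substack{I\subseteq[n] \\ |I|=i}}\; \max_{j\in I}\, v_j,
\]
which expresses the $i$-th largest entry as a finite lattice combination of the continuous coordinate functions $\vb\mapsto v_j$. Finite minima and maxima of continuous functions are continuous, so each $\mathrm{sort}_i$ is continuous, and hence so is $\mathrm{sort}$. Composing with the continuous $\fb$ gives the result.

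The main conceptual point (which will double as the only nontrivial step) is the observation that although $S(\fb(\xb))$ can jump discontinuously at ties of $\fb(\xb)$, the discontinuity cancels in the product $S(\fb(\xb))\fb(\xb)$: at a tie, the rows of $S$ that could be swapped act on coordinates of $\fb(\xb)$ that are equal, so the numerical output is unaffected. The min-max formula for sorting makes this cancellation manifest and is what avoids a case-analysis argument at every tie stratum. Once this observation is made, the rest is routine.
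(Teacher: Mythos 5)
Your proof is correct, and it takes a genuinely different and more elementary route than the paper's. The paper partitions $\R^n$ into the polyhedral cells $\Nbb_k=\{\xb: S(\xb)\xb=P_k\xb\}$, shows $\hat{\fb}(\xb)=P_k\fb(\xb)$ on each closed cell (using the order-preserving property of $\fb$ to argue that the discrepancy between $S(\xb)$ and $P_k$ at tied coordinates is invisible in the product), checks that the candidate formulas agree on overlapping boundaries, and glues. You instead observe that $S(\vb)\vb$ is the tie-breaker-independent sort map applied to $\vb$, prove once and for all that $\mathrm{sort}:\R^n\to\R^n$ is continuous via the min--max representation of order statistics, and compose with the continuous $\fb$. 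Your route is shorter, avoids the case analysis at tie strata, and---as you correctly note---proves a strictly more general statement: the intra order-preserving hypothesis is not needed for this lemma; it only enters the surrounding argument to identify $S(\xb)\fb(\xb)$ with $S(\fb(\xb))\fb(\xb)$. What the paper's heavier machinery buys is reuse: the companion \cref{st:sorting_inv_continuous} concerns $S(\xb)^{-1}\vb(\xb)$, i.e., \emph{un}sorting, which is not a tie-breaker-independent function of $\vb(\xb)$ alone, so the cell-decomposition argument is genuinely required there, and the paper applies the same template to both lemmas. One cosmetic slip: your formula $\min_{|I|=i}\max_{j\in I}v_j$ yields the $i$-th \emph{smallest} entry (non-decreasing order); for the paper's non-increasing convention you want $\max_{|I|=i}\min_{j\in I}v_j$. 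Either way each component is a finite lattice combination of the coordinate functions, so the continuity conclusion is unaffected.
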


Next, we show that $\wb$ satisfies the properties listed in \cref{st:order_preserv}.
As $\wb(\xb) = U^{-1} \hat{\fb}(\xb)$, we can equivalently write $\wb$ as
\begin{equation*}
    \wb_i(\xb) = \begin{cases}
    \hat{\fb}_{i}(\xb) - \hat{\fb}_{i+1}(\xb) &1 \le i < n \\
    \hat{\fb}_{n}(\xb) &i = n.
    \end{cases}
\end{equation*}
Since $\hat{\fb}$ is the sorted version of $\fb$, it holds that $\wb_i(\xb) \geq 0$ for $1 \le i < n$. Also, by the definition of the order-preserving function, $\wb_i(\xb)$ can be zero if and only if $\yb_i = \yb_{i+1}$, where $\yb = S(\xb)\xb$. These two arguments prove the necessary condition.  \\ \\
($\leftarrow$) For a given $\wb(\xb)$ satisfying the condition in the theorem statement, let $\vb(\xb) = U\wb(\xb)$. Equivalently, we can write $\vb_i(\xb) = \sum_{j=0}^{n-i} \wb_{n-j}(\xb)$ and $\vb_i(\xb) - \vb_{i+1}(\xb) = \wb_i(\xb)$,  $\forall i \in [n]$. By construction of $\wb$, one can conclude that $\vb(\xb)$ is a sorted vector where two consecutive elements $\vb_i(\xb)$ and $\vb_{i+1}(\xb)$ are equal if and only if $\yb_i=\yb_{i+1}$. 
Therefore, $\fb(\xb) = S(\xb)^{-1} \vb(\xb)$ has the same ranking as $\xb$. %Also, there is a tie in $\fb(\xb)$ iif there is a tie in the corresponding element of $\xb$. This shows
In other words, $\fb$ is an intra order-preserving function. 
The continuity of $\fb$ follows from the lemma below and the fact that $\vb$ is continuous when $\wb$ is continuous. \cref{st:sorting_inv_continuous}.

\begin{lemma}\label{st:sorting_inv_continuous}
Let $\vb: \R^n \to \R^n$ be a continuous function in which $\vb_i(\xb)$ and $\vb_{i+1}(\xb)$ are equal if and only if $\yb_i = \yb_{i+1}$, where $\yb = S(\xb) \xb$. Then $\fb(\xb) = S(\xb)^{-1} \vb(\xb)$ is a continuous function.
\end{lemma}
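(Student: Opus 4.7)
The plan is to establish continuity of $\fb$ at an arbitrary point $\xb_0 \in \R^n$ by a sequential argument that circumvents the inherent discontinuity of the sorting matrix $S$ at ties. I would fix $\xb_0$ and consider any sequence $\xb^{(k)} \to \xb_0$. Since $\Pbb^n$ is finite, the map $k \mapsto S(\xb^{(k)})$ takes only finitely many values, so by the standard subsequence principle for sequential limits in $\R^n$ it suffices to verify convergence along a further subsequence on which $S(\xb^{(k)}) = P$ for a single permutation matrix $P$ and all $k$.

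Setting $P_0 := S(\xb_0)$ and $\yb_0 := P_0 \xb_0$, I would next argue that $P \xb_0 = P_0 \xb_0$. Each $P \xb^{(k)}$ is weakly decreasing by the definition of $S$, and the set of weakly decreasing vectors is closed under limits, so $P \xb_0$ is weakly decreasing; similarly $P_0 \xb_0$ is weakly decreasing by construction; and both are rearrangements of $\xb_0$. Since a weakly decreasing rearrangement of a finite multiset is unique as a vector, $P \xb_0 = P_0 \xb_0$. Consequently $\pi := P P_0^{-1}$ satisfies $\pi \yb_0 = \yb_0$, which means $\pi$ only permutes indices within the contiguous blocks on which $\yb_0$ is constant.

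The crucial step is to transfer this block structure from $\yb_0$ to $\vb(\xb_0)$. For any two indices $i < j$ lying in the same equal-value block of $\yb_0$, sortedness forces $(\yb_0)_k = (\yb_0)_{k+1}$ for every $i \le k < j$; the lemma's hypothesis then supplies $\vb_k(\xb_0) = \vb_{k+1}(\xb_0)$ at each such $k$, and chaining these equalities gives $\vb_i(\xb_0) = \vb_j(\xb_0)$. Hence $\pi$ fixes $\vb(\xb_0)$, which yields $P^{-1} \vb(\xb_0) = P_0^{-1} \pi^{-1} \vb(\xb_0) = P_0^{-1} \vb(\xb_0)$.

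Continuity of $\vb$ then yields $\fb(\xb^{(k)}) = P^{-1} \vb(\xb^{(k)}) \to P^{-1} \vb(\xb_0) = P_0^{-1} \vb(\xb_0) = \fb(\xb_0)$, completing the argument. I expect the main obstacle to be the third paragraph: one must carefully exploit the direction $\yb_i = \yb_{i+1} \Rightarrow \vb_i = \vb_{i+1}$ of the biconditional hypothesis and propagate it through an entire tie block, which relies on the fact that equal entries of the sorted vector $\yb_0$ are contiguous so that adjacent-index equalities can be chained along the block.
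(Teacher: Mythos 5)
Your proof is correct and rests on exactly the same key observation as the paper's: any two permutation matrices that validly sort $\xb_0$ differ only by a permutation stabilizing the equal-value blocks of $\yb_0 = S(\xb_0)\xb_0$, and the hypothesis on $\vb$ (chained along each contiguous block) forces $\vb(\xb_0)$ to be constant on those blocks, so $P^{-1}\vb(\xb_0) = S(\xb_0)^{-1}\vb(\xb_0)$. The only difference is packaging --- you use a sequential/subsequence argument where the paper pastes $\fb = P_k^{-1}\vb$ over the finite closed cover $\{\Nbb_k\}$ --- and your paragraphs two through four actually spell out more explicitly the step the paper compresses into one sentence.
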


\end{proof}

\subsubsection{Deferred Proofs of Lemmas}

\begin{proof}[Proof of \cref{st:sorting_continuous}]
Let $\Pbb^n = \{P_1, \dots, P_K\}$ be the finite set of all possible $n\times n$ dimensional permutation matrices. For each $k\in[K]$, define the closed set $\Nbb_k = \{\xb : S(\xb) \xb = P_k \xb \}$. These sets are convex polyhedrons since each can be defined by a finite set of linear inequalities; in addition, they together form a covering set of $\R^n$. 
Note that $S(\xb) = P_k$ is constant in the interior $int(\Nbb_k)$, but $S(\xb)$ may change on the boundary $\partial(\Nbb_k)$ which corresponds to  points where a tie exists in elements of $\xb$ (for such a point $S(\xb) \neq P_k$).
Nonetheless, by definition of the set $\Nbb_k$, we have $S(\xb) \xb = P_k \xb$ for \emph{all} $\xb \in \Nbb_k$, which implies that $S(\xb)$ and $P_k$ can only have different elements for indices where elements of $\xb$ are equal.

To prove that $\hat{\fb}(\xb) \coloneqq S(\fb(\xb)) \fb(\xb)$ is continuous, we leverage the fact that $\hat{\fb}(\xb) = S(\xb) \fb(\xb)$ for intra order-preserving $\fb$. 
We will first show that $\hat{\fb}(\xb) = P_k \fb(\xb)$ for $\xb \in \Nbb_k$ and any $k\in[K]$, which implies $\hat{\fb}$ is continuous on $\Nbb_k$ when $\fb$ is continuous. 
To see this, consider an arbitrary $k\in[K]$. For $\xb \in int(\Nbb_k)$ in the interior, we have $S(\xb) = P_k$ and therefore $\hat{\fb}(\xb) = P_k \fb(\xb)$.
For $\xb \in \partial \Nbb_k$ on the boundary,  we have 
\begin{equation*}
    \hat{\fb}(\xb) = S(\xb) \fb(\xb) = P_k \fb(\xb).
\end{equation*}
The last equality holds because the difference between $S(\xb)$ and $P_k$ are only in the indices for which elements of $\xb$ are equal, and the order-preserving $\fb$ preserves exactly the same equalities. Thus, the differences between permutations $S(\xb)$ and $P_k$ do not reflect in the products $S(\xb) \fb(\xb)$ and $P_k \fb(\xb)$.

Next, we show that $\hat{\fb}(\xb) = P_k\fb(\xb) = P_{k'}\fb(\xb)$ for $\xb \in \partial \Nbb_k \cap \partial \Nbb_{k'}$. %
%By the definition $\Nbb_k$ and $\Nbb_{k'}$,
While $P_k \neq P_{k'}$, the intersection $\partial \Nbb_k \cap \partial \Nbb_{k'}$ contains exactly points $\xb$ such that 
the index differences in $P_k$ and $P_{k'}$ correspond to same value in $\xb$. Because $\fb$ is order-preserving, by an argument similar to the previous step, we have  $P_k\fb(\xb) = P_{k'}\fb(\xb)$ for $\xb \in \partial \Nbb_k \cap \partial \Nbb_{k'}$.

Together these two steps and the fact that $\{\Nbb_k\}$ is covering set on $\R^n$ show that $\hat{\fb}$ is a piece-wise continuous function on $\R^n$ when $\fb$ is continuous on $\R^n$.
\end{proof}

\begin{proof}[Proof of \cref{st:sorting_inv_continuous}]
In order to show the continuity of $\fb(\xb)$, we use a similar argument as in \cref{st:sorting_continuous} (see therein for notation definitions). 
For any $k\in[K]$, it is also trivial to show that $\fb$ is continuous over the open set $int(\Nbb_k)$ since $\fb(\xb) = P_k^{-1} \vb(\xb)$. We use the same argument as \cref{st:sorting_continuous} to show it is also a continuous for any point $\xb \in \partial(\Nbb_k)$ 
\begin{equation*}
    \fb(\xb) = S(\xb)^{-1} \vb(\xb) = P_k^{-1} \vb(\xb).
\end{equation*}
The last equality holds because $P_k^{-1}$ and $S(\xb)^{-1}$ can only have different elements among elements of $\yb = S(\xb) \xb$ with equal values, and $\vb$ preserves exactly these equalities in $\yb$. 
Finally, the proof can be completed by piecing the results of different $\Nbb_k$ together.

\end{proof}

\subsection{Proof of \cref{st:intra_and_order_invariant}, Order-invariant Functions}

\stIntraAndOrderInvariant*

To prove \cref{st:intra_and_order_invariant}, we first study the properties of order invariant functions in \cref{sec:order invariant functions}. We will provide necessary and sufficient conditions to describe order invariant functions, like what we did in \cref{st:order_preserv} for intra order-preserving functions. 
Finally, we combine these insights and \cref{st:order_preserv} to prove \cref{st:intra_and_order_invariant} in \cref{sec:main proof of intra_and_order_invariant}.

\subsubsection{Properties of Order Invariant Functions} \label{sec:order invariant functions}

The goal of this section is to prove the below theorem, which characterizes the representation of order invariant functions using the concept of equality-preserving.
\begin{definition}
\label{def:equality_preserving}
We say a function $\fb: \R^n \to \R^n$ is \emph{equality-preserving}, if $\fb_i(\xb) = \fb_j(\xb)$ for all $\xb\in\R^n$ such that $\xb_i = \xb_j$ for some $i,j\in[n]$
\end{definition}
\begin{theorem}
\label{st:order_invariant}
A function $\fb: \R^n \to \R^n$ is order-invariant, if and only if $\fb(\xb) = S(\xb)^{-1}\bar{\fb}(S(\xb)\xb)$ for some function $\bar{\fb}:\R^n\to\R^n$ that is equality-preserving on the domain $\{\yb :\yb = S(\xb) \xb, \text{ for } \xb\in\R^n)$.
\end{theorem}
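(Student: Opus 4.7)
The overall plan is to exploit the one-to-one correspondence between an arbitrary input $\xb$ and its sorted counterpart $\yb = S(\xb)\xb$ via $\xb = S(\xb)^{-1}\yb$, so that the order-invariance condition $\fb(P\xb) = P\fb(\xb)$ lets us collapse $\fb$ down to its restriction on sorted vectors. The equality-preserving requirement on $\bar{\fb}$ then emerges as the precise compatibility needed for the formula $\fb(\xb) = S(\xb)^{-1}\bar{\fb}(S(\xb)\xb)$ to be well-defined when $S(\xb)$ is ambiguous due to ties.

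For the ($\Rightarrow$) direction, assuming $\fb$ is order-invariant, I would simply define $\bar{\fb}(\yb) := \fb(\yb)$ on the sorted cone $\{S(\xb)\xb : \xb \in \R^n\}$ and extend arbitrarily elsewhere. Applying the invariance identity with the specific permutation $P = S(\xb)$ gives $\fb(S(\xb)\xb) = S(\xb)\fb(\xb)$, which rearranges to the claimed representation. To verify that $\bar{\fb}$ is equality-preserving on the sorted cone, I would take any sorted $\yb$ with $\yb_i = \yb_j$ and apply order invariance with $P$ equal to the transposition swapping coordinates $i$ and $j$: since $P\yb = \yb$, invariance forces $\fb(\yb) = P\fb(\yb)$, hence $\fb_i(\yb) = \fb_j(\yb)$.

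For the ($\Leftarrow$) direction, given the representation with equality-preserving $\bar{\fb}$, I would first note the elementary fact that $S(\xb)\xb$ depends only on the multiset of values in $\xb$ (since the output is forced to be non-increasing), so $S(P\xb)P\xb = S(\xb)\xb =: \yb$ for every permutation $P$. From this, both $S(P\xb)^{-1}\yb$ and $PS(\xb)^{-1}\yb$ equal $P\xb$, so the permutation $Q := S(\xb)P^{-1}S(P\xb)^{-1}$ fixes $\yb$. Any such $Q$ can only permute indices within tied blocks of $\yb$, and the equality-preserving hypothesis then guarantees that $Q$ likewise fixes $\bar{\fb}(\yb)$. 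Unwinding this yields $S(P\xb)^{-1}\bar{\fb}(\yb) = PS(\xb)^{-1}\bar{\fb}(\yb)$, which is exactly $\fb(P\xb) = P\fb(\xb)$.

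The main obstacle is the bookkeeping in the backward direction: the sorting matrix $S(\xb)$ is not uniquely determined by the values of $\xb$ when ties are present, and since the tie breaker is attached to positions rather than values, in general $S(P\xb) \neq S(\xb)P^{-1}$. The equality-preserving property of $\bar{\fb}$ is precisely what bridges this gap, making the difference between two valid ``unsorting'' permutations invisible to $\bar{\fb}$. Making the permutation algebra around $Q$ precise, and verifying that $Q$ indeed acts trivially on $\bar{\fb}(\yb)$ exactly under the stated hypothesis, is the heart of the proof.
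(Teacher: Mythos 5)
Your proof is correct, and while the forward direction coincides with the paper's (take $P=S(\xb)$ to get the representation, and use a transposition fixing a tied sorted vector to get equality preservation, which is exactly the paper's Proposition on order-invariant functions being equality-preserving), your backward direction takes a genuinely cleaner route. The paper proves a standalone lemma asserting $S(\xb)\fb(\xb)=S(P\xb)P\fb(\xb)$ for any equality-preserving $\fb$, and establishes it by an explicit block decomposition of the sorting matrices with a case analysis that is only written out for a single tie $\xb_1=\xb_2>\xb_k$ under a ``without loss of generality'' reduction. You instead isolate the permutation $Q=S(\xb)P^{-1}S(P\xb)^{-1}$, observe from $S(P\xb)P\xb=S(\xb)\xb=\yb$ (sorting depends only on the multiset of values) that $Q\yb=\yb$, and then use the elementary fact that any permutation stabilizing $\yb$ moves indices only within tied blocks and hence also stabilizes $\bar{\fb}(\yb)$ whenever $\bar{\fb}$ is equality-preserving; unwinding $Q\bar{\fb}(\yb)=\bar{\fb}(\yb)$ gives $\fb(P\xb)=P\fb(\xb)$ directly. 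This buys a shorter, fully general argument with no matrix bookkeeping, at the cost of not producing the paper's reusable identity $S(\xb)\fb(\xb)=S(P\xb)P\fb(\xb)$ in its standalone form (the paper also invokes that lemma elsewhere in the same chain of equalities); for the theorem at hand, however, your stabilizer argument is self-contained and arguably more rigorous than the paper's sketch of the lemma.
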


\cref{st:order_invariant} shows an order  invariant function can be expressed in terms of some equality-preserving  function. In fact, every order invariant function is equality-preserving.
\begin{proposition}
\label{st:order_invariant_props}
Any order-invariant function $\fb: \R^n \to \R^n$ is equality-preserving.
\end{proposition}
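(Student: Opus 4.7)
The plan is to reduce the statement to a direct application of the definition of order-invariance, using the transposition that swaps the two ``tied'' indices. Fix an arbitrary $\xb \in \R^n$ and suppose $\xb_i = \xb_j$ for some $i,j \in [n]$ (if $i = j$ there is nothing to prove). The key observation is that the permutation matrix $P \in \Pbb^n$ corresponding to the transposition $(i\ j)$ then fixes $\xb$, i.e.\ $P\xb = \xb$, because swapping two equal coordinates leaves the vector unchanged.

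With this observation in hand, I would simply invoke order-invariance (\cref{def:order_invariant}) to write
\begin{equation*}
    \fb(\xb) \;=\; \fb(P\xb) \;=\; P\fb(\xb),
\end{equation*}
so $\fb(\xb)$ is itself invariant under the swap $P$. Reading off the $i$-th and $j$-th coordinates of the identity $\fb(\xb) = P\fb(\xb)$, and noting that $(P\fb(\xb))_i = \fb_j(\xb)$ and $(P\fb(\xb))_j = \fb_i(\xb)$ by the definition of the transposition matrix $P$, I conclude $\fb_i(\xb) = \fb_j(\xb)$. Since $\xb$ and $i,j$ were arbitrary subject only to $\xb_i = \xb_j$, this shows $\fb$ is equality-preserving in the sense of \cref{def:equality_preserving}.

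There is essentially no obstacle: the argument is a one-line consequence of the definition, and the only subtlety is verifying that the transposition $P$ indeed acts as the identity on $\xb$ when $\xb_i = \xb_j$ and that, on the output side, it genuinely exchanges the $i$-th and $j$-th entries. Both are immediate from the matrix form of a transposition, so the proof should be written compactly as the chain $\fb(\xb) = \fb(P\xb) = P\fb(\xb)$ followed by coordinate extraction.
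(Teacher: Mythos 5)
Your argument is correct and is essentially identical to the paper's proof: both use the transposition $P_{ij}$, note that it fixes $\xb$ when $\xb_i = \xb_j$, and then read off $\fb_i(\xb) = \fb_j(\xb)$ from the order-invariance identity $\fb(P_{ij}\xb) = P_{ij}\fb(\xb)$. No gaps.
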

\begin{proof}
Let $P_{ij} \in \Pbb^n$ denote the permutation matrix that only swaps $i^{th}$ and $j^{th}$ elements of the input vector; i.e. $\yb = P_{ij} \xb \Rightarrow \yb_i = \xb_j, \yb_j = \xb_i, \yb_k = \xb_k$, $\forall \xb\in\R^n$, $i,j,k\in[n]$, and $k \neq i,j$. Thus, for an order-invariant function $\fb: \R^n \to \R^n$ and any $\xb\in\R^n$ such that $\xb_i = \xb_j$, we have 
\begin{align*}
 \fb(P_{ij}\xb) = P_{ij}\fb(\xb) \Rightarrow \fb_i(P_{ij}\xb) = \fb_j(\xb) \Rightarrow \fb_i(\xb) = \fb_j(\xb) \quad\text{($\because P_{ij}\xb = \xb$ for $\xb$ such that $\xb_i = \xb_j$).}  
\end{align*}
\end{proof}

We are almost ready to prove \cref{st:order_invariant}. We just need one more technical lemma, whose proof is deferred to the end of this section.
\begin{lemma} \label{st:order_permutation}
For any $P\in\Pbb^n$ and an equality-preserving $\fb: \R^n \to \R^n$, $S(\xb)\fb(\xb)= S(P\xb)P\fb(\xb)$.
\end{lemma}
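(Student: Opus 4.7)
The plan is to reduce the matrix identity to a coordinate-wise scalar identity and then deploy the equality-preserving property at a carefully matched pair of indices. I would first introduce permutations $\sigma,\tau,\mu$ of $[n]$ such that $S(\xb)=P_\sigma$, $P=P_\tau$, and $S(P\xb)=P_\mu$, where $P_\pi$ denotes the permutation matrix defined by $(P_\pi \vb)_i = \vb_{\pi(i)}$. In this notation, the $i$th coordinates of the two sides of the claim read $(S(\xb)\fb(\xb))_i = \fb_{\sigma(i)}(\xb)$ and $(S(P\xb)P\fb(\xb))_i = (P_\tau\fb(\xb))_{\mu(i)} = \fb_{\tau(\mu(i))}(\xb)$, so it suffices to show $\fb_{\sigma(i)}(\xb) = \fb_{\tau(\mu(i))}(\xb)$ for every $i \in [n]$.

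Next, I would establish the auxiliary identity $\xb_{\sigma(i)}=\xb_{\tau(\mu(i))}$ for every $i$. The key observation is that $S(\xb)\xb$ and $S(P\xb)(P\xb)$ are, by the very definition of the sorting function, the multiset of entries of $\xb$ arranged in non-increasing order: the tie-breaker $\tb$ can only influence which original index of $\xb$ (or of $P\xb$) supplies each sorted slot, never the value placed there, because indices within a tie-class share a common value. Hence $S(\xb)\xb = S(P\xb)(P\xb)$ as vectors in $\R^n$, which at coordinate $i$ yields $\xb_{\sigma(i)} = (P\xb)_{\mu(i)} = \xb_{\tau(\mu(i))}$.

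The proof then closes in one line: because $\fb$ is equality-preserving in the sense of \cref{def:equality_preserving} and $\xb_{\sigma(i)} = \xb_{\tau(\mu(i))}$, we conclude $\fb_{\sigma(i)}(\xb) = \fb_{\tau(\mu(i))}(\xb)$. Combining with the reduction above gives $(S(\xb)\fb(\xb))_i = (S(P\xb)P\fb(\xb))_i$ for every $i$, which is exactly the claimed matrix identity.

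The step I expect to be the main obstacle is the tie-breaker bookkeeping in the auxiliary identity: when $\xb$ has ties, the permutations $\sigma$ and $\tau\circ\mu$ need not coincide and can send a given sorted slot $i$ to different original indices, which must nevertheless lie in a common tie-class of $\xb$. Equality-preserving is precisely the structural hypothesis that neutralizes this ambiguity at the level of $\fb(\xb)$; a small example with a non-equality-preserving $\fb$ and a single pair of tied coordinates shows the lemma fails without this assumption, confirming the hypothesis cannot be dropped.
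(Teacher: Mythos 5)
Your proof is correct, and it takes a cleaner route than the paper's. The paper first observes that $S(\xb)=S(P\xb)P$ on the set of tie-free inputs, which settles that case immediately, and then handles ties by a block decomposition of the sorting matrix: it assumes (without loss of generality) a single tie $\xb_1=\xb_2>\xb_k$, partitions $S(P\xb)$ into sub-blocks around the displaced index, and checks two cases for where that index can land in the sorted order. Your argument instead reduces the matrix identity to the coordinate-wise claim $\fb_{\sigma(i)}(\xb)=\fb_{\tau(\mu(i))}(\xb)$, derives $\xb_{\sigma(i)}=\xb_{\tau(\mu(i))}$ from the observation that $S(\xb)\xb$ and $S(P\xb)(P\xb)$ are both the unique non-increasing arrangement of the same multiset (so the tie-breaker affects only which index fills each slot, never the value), and closes with a single application of equality-preservation. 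This buys you two things: it subsumes the tie-free case (where $\sigma=\tau\circ\mu$ and there is nothing to prove) and it handles arbitrary tie patterns uniformly, avoiding both the ``without loss of generality'' restriction to one tie at the top and the block-matrix bookkeeping of \eqref{eq:sorting examples}. The paper's version is more explicit about how the permutation matrices differ entrywise, which is reused in spirit in the continuity lemmas, but as a proof of this lemma yours is the more economical and the more general one. Your closing remark is also on point: a non-equality-preserving $\fb$ with a single tied pair already breaks the identity, so the hypothesis is exactly what neutralizes the ambiguity in $\sigma$ versus $\tau\circ\mu$ on tie-classes.
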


\begin{proof}[Proof of \cref{{st:order_invariant}}]
($\rightarrow$) For an order-invariant function $\fb: \R^n \to \R^n$, we have $\fb(P\xb) = P\fb(\xb)$ by \cref{def:order_invariant} for any $P\in\Pbb^n$. Take $P=S(\xb)$. We then have the equality $\fb(\xb) = S(\xb)^{-1}\fb(S(\xb)\xb)$. This is an admissible representation because, by \cref{st:order_invariant_props}, $\fb$ is equality-preserving. \\ \\
($\leftarrow$) Let $\fb(\xb) = S(\xb)^{-1}\bar{\fb}(S(\xb)\xb)$ for some equality-preserving function $\bar{\fb}$. First, because $\bar{\fb}$ is equality preserving and  $\fb$ is constructed through the sorting function $S$, we notice that $\fb(\xb)$ is equality-preserving. Next, we show $\fb$ is also order invariant:
\begin{align*}
    \fb(P\xb) &= S(P\xb)^{-1}\bar{\fb}(S(P\xb)P\xb) \\
    &= S(P\xb)^{-1} \bar{\fb}(S(\xb)\xb) &\text{($\because S(P\xb)P\xb = S(\xb)\xb$ by choosing $\fb(\xb) = \xb$ in \cref{st:order_permutation})} \\
    &= S(P\xb)^{-1} S(\xb) \fb(\xb) &\quad\text{($\because$ definition of $\fb(\xb)$}) \\
    &= S(P\xb)^{-1} S(P\xb) P\fb(\xb) &\since{\cref{st:order_permutation}}\\
    &= P\fb(\xb).
\end{align*}
\end{proof}

\subsubsection{Main Proof} \label{sec:main proof of intra_and_order_invariant}

\begin{proof}[Proof of \cref{st:intra_and_order_invariant}]
($\rightarrow$) From \cref{st:order_preserv} we can write $\fb(\xb) = S(\xb)^{-1}U\wb(\xb)$. On the other hand, from \cref{st:order_invariant} we can write $\fb(\xb) = S(\xb)^{-1} \bar{\fb}(\yb)$ for some equality-preserving function $\bar{\fb}$. Using both we can identify $\wb(\xb) = U^{-1}\bar{\fb}(\yb)$ which implies that $\wb$ is only a function of the sorted input $\yb$ and can be equivalently written as $\wb(\yb)$. \\ \\
($\leftarrow$) For $\wb$ with the properties in the theorem statement, the function $\fb(x) = S(x)^{-1}U\wb(\yb)$ satisfies the conditions of \cref{st:order_preserv}; therefore $\fb$ is intra order-preserving. 
To show $\fb$ is also order-invariant, we write $\fb(\xb) = S(x)^{-1} \bar{\fb}(\yb)$ where $\bar{\fb}(\yb) = U\wb(\yb)$. Because $\bar{\fb}_i(\yb) = \sum_{j=0}^{n-i} \wb_{n-j}(\xb)$, we can derive with the definition of $\wb$ that
\begin{equation*}
    \yb_i = \yb_{i+1} \Rightarrow \wb_i(\yb) = 0 \Rightarrow \bar{\fb}_i(\xb) = \bar{\fb}_{i+1}(\xb).
\end{equation*}
That is, $\bar{\fb}(\yb)$ is equality-preserving on the domain of sorted inputs. Thus, $\fb$ is also order-invariant.
\end{proof}

\subsubsection{Deferred Proof of Lemmas}
\begin{proof}[Proof of \cref{st:order_permutation}]
To prove the statement, we first notice a fact that $S(\xb)= S(P\xb)P$, for any $P\in\Pbb^n$ and $\xb \in \Xbb \coloneqq \{\xb\in\R^n :  \xb_i\neq \xb_j, \forall i,j\in[n], i\neq j\}$. 
%, since for any $\bx\in \Xbb$, $S(P\xb)$ is surjective in terms of $P\in\Pbb$.
Therefore, for $\xb\in \Xbb$, we have $S(\xb) \fb(\xb)= S(P\xb)P\fb(\xb)$.

Otherwise, consider some $\xb\in\R^n \setminus \Xbb$. Without loss of generality\footnote{This choice is only for convenience of writing the indices.}, we may consider $n>2$ and $\xb$ such that $\xb_1=\xb_2>\xb_k$ for all $k> 2$; because $\fb$ is equality-preserving, we have $\fb_1(\xb) = \fb_2(\xb)$. % (note that there is no constraint on the relationship between $f_1(x)$ and $f_k(x)$ for $k>2$). 

To prove the desired equality, we will introduce some extra notations. We use subscript $_{i:j}$ to extract contiguous parts of a vector, e.g. $\xb_{2:n} = [\xb_2, \dots, \xb_n]$ and $\fb_{2:n}(\xb) = [\fb_2(\xb), \dots, \fb_n\xb)]$ (by our construction of $\xb$, $\xb_{2:n}$ is a vector where each element is unique.)
In addition, without loss of generality, suppose $P\in\Pbb^n$ shifts index $1$ to some index $i\in[n]$; we define $\bar{P}\in\{0,1\}^{n-1\times n-1}$ by removing the $1$st column and the $i$th row of $P$ (which is also a permutation matrix). 
Using this notion, we can partition $S(\bar{P}\xb_{2:n}) \in \{0,1\}^{n-1\times n-1}$ as 
\begin{align*}
    S(\bar{P}\xb_{2:n}) =
\begin{bmatrix}
    B_1 & B_2\\
    B_3 & B_4
        \end{bmatrix}        
\end{align*}
where $B_1\in \R^{1\times i-1}$, $B_2\in \R^{1\times n-i}$, $B_3\in \R^{n-2\times i-1}$, and  $B_4\in \R^{n-2\times n-i}$. This would imply that $S(P\xb) \in \{0,1\}^{n\times n}$ can be written as one of followings 
\begin{align} \label{eq:sorting examples}
    \begin{bmatrix}
    & e_i^\t  & \\
    B_1 & 0 & B_2\\
    B_3 & 0 & B_4
    \end{bmatrix}    
    \quad\text{or}\quad
    \begin{bmatrix}
    B_1 & 0 & B_2\\
    & e_i^\t  & \\
    B_3 & 0 & B_4
    \end{bmatrix}
\end{align}
where $e_i$ is the $i$th canonical basis.

To prove the statement, let $\yb = P \fb(\xb)$. By the definition of $\bar{P}$, we can also write $\yb$ as 
\begin{align} \label{eq:another way to write y}
    \yb = 
    \begin{bmatrix}
    \yb_{1:i-1}\\
    \yb_i\\
    \yb_{i+1:n}
    \end{bmatrix} = 
    \begin{bmatrix}
        (\bar{P}\fb_{2:n}(\xb))_{1:i-1}\\
        \fb_1(\xb) \\
        (\bar{P}\fb_{2:n}(\xb))_{i:n-1}
    \end{bmatrix}
\end{align}
Let us consider the first case in \eqref{eq:sorting examples}. We have
\begin{align*}
    S(P\xb)P\fb(\xb) 
    =\begin{bmatrix}
        \yb_i \\ 
        B_1 y_{1:i-1} +  B_2 y_{i+1:n}\\
        B_3 y_{1:i-1} +  B_4 y_{i+1:n}
    \end{bmatrix}
    =\begin{bmatrix}
        \yb_i \\ 
        S(\bar{P}\xb_{2:n}) \bar{P} \fb_{2:n}(\xb)
    \end{bmatrix}
    =
      \begin{bmatrix}
    \fb_1(\xb) \\
    S(\xb_{2:n}) \fb_{2:n}(\xb)
    \end{bmatrix}
    = S(\xb)\fb(\xb)
\end{align*}
where the second equality follows from \eqref{eq:another way to write y}, the third from the fact we proved at the beginning for the set $\Xbb$, and the last equality is due to the assumption $\xb_1=\xb_2>\xb_k$ and the equality-preserving property that $\fb_1(\xb) = \fb_2(\xb)$.
For the second case in \eqref{eq:sorting examples}, based on the same reasoning above, we can show 
\begin{align*}
    S(P\xb)P\fb(\xb) 
    =
      \begin{bmatrix}
     (S(\xb_{2:n}) \fb_{2:n}(\xb))_1\\
    \fb_1(\xb) \\
    (S(\xb_{2:n}) \fb_{2:n}(\xb))_{2:n-1}
    \end{bmatrix}, 
\end{align*}
Because $\xb_1=\xb_2$, we have $(S(\xb_{2:n}) \fb_{2:n}(\xb))_1 = \fb_1(\xb) = \fb_2(\xb)$. Thus, $S(P\xb) P\fb(\xb)=S(\xb)\xb$.
\end{proof}

\subsection{Proof of \cref{st:ful_order_preserving_diagonal}, Diagonal Functions}\label{app:diagonal}

\stFulOrderPreservingDiagonal*

We first prove some properties of \emph{diagonal} intra order-preserving functions, which will be used to prove \cref{st:ful_order_preserving_diagonal}.
\begin{proposition}
\label{st:order_preserving_props}
Any intra order-preserving function $\fb: \R^n \to \R^n$ is equality-preserving.
\end{proposition}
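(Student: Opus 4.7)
The plan is to exploit the ``for any tie breaker'' clause in the definition of sharing the same ranking. Suppose, for contradiction, that $\fb$ is intra order-preserving but not equality-preserving, so that there exist some $\xb \in \R^n$ and indices $i \neq j$ with $\xb_i = \xb_j$ but $\fb_i(\xb) \neq \fb_j(\xb)$. Without loss of generality assume $\fb_i(\xb) > \fb_j(\xb)$.

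Next I would choose two tie breakers $\tb_1, \tb_2$ that differ exactly on how they order indices $i$ and $j$: $(\tb_1)_i < (\tb_1)_j$ and $(\tb_2)_j < (\tb_2)_i$, with all other priorities held fixed. Let $S_1(\cdot)$ and $S_2(\cdot)$ denote the sorting functions induced by these tie breakers. Since $\xb_i = \xb_j$, the matrices $S_1(\xb)$ and $S_2(\xb)$ differ in how they place $i$ relative to $j$: one puts $i$ before $j$ and the other puts $j$ before $i$. However, because $\fb_i(\xb) > \fb_j(\xb)$ is a strict inequality, the tie breaker has no effect on the relative ordering of $i$ and $j$ in $\fb(\xb)$, so $S_1(\fb(\xb)) = S_2(\fb(\xb))$, with $i$ placed before $j$ in both.

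By the definition of sharing the same ranking, the intra order-preserving property requires $S_k(\xb) = S_k(\fb(\xb))$ for $k=1,2$. Combining these two equalities with the previous observation yields $S_1(\xb) = S_2(\xb)$, contradicting the construction of $\tb_1, \tb_2$ (which forced these two matrices to disagree on the placement of $i$ and $j$). Hence $\fb_i(\xb) = \fb_j(\xb)$, and $\fb$ is equality-preserving.

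The only subtle point, which I would handle carefully in the writeup, is justifying that changing the tie breaker only on the pair $(i,j)$ is admissible in the sense that the resulting $\tb_1, \tb_2$ are both valid permutations of $[1, \dots, n]$, and that this local change forces $S_1(\xb) \neq S_2(\xb)$ while leaving $S_1(\fb(\xb)) = S_2(\fb(\xb))$ untouched. This is immediate once one writes down concrete $\tb_1, \tb_2$ (e.g.\ start with $\tb_1 = [1,\dots,n]^\top$ and take $\tb_2$ to be $\tb_1$ with entries $i$ and $j$ swapped), so no real obstacle remains.
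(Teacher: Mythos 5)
Your argument is correct, and it supplies the detail that the paper simply asserts (the paper's proof of this proposition is the single line ``this can be seen directly from the definition''): the ``for \emph{any} tie breaker'' quantifier in the definition of sharing the same ranking is precisely what forces ties to be preserved, and your choice of two tie breakers that disagree only on the priority of $i$ versus $j$ is the right way to make that quantifier do the work. One small tightening: the intermediate claim $S_1(\fb(\xb)) = S_2(\fb(\xb))$ as an equality of full permutation matrices is slightly more than your justification supports, since if $\fb(\xb)$ happened to contain other ties involving index $i$ or $j$, swapping the priorities of $i$ and $j$ could change how those ties are resolved; but all you actually need is that $i$ and $j$ appear in the same relative order in $S_1(\fb(\xb))$ and $S_2(\fb(\xb))$ (which the strict inequality $\fb_i(\xb) \neq \fb_j(\xb)$ does guarantee), and the contradiction with $S_k(\xb) = S_k(\fb(\xb))$ for $k=1,2$ already follows at the level of that relative order, since $S_1(\xb)$ and $S_2(\xb)$ place $i$ and $j$ in opposite orders by construction.
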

\begin{proof}
This can be seen directly from the definition of intra order-preserving functions. %If $\xb_i = \xb_j$ but $\fb_i(\xb) \neq \fb_j(\xb)$, then there exists a tie breaking vector $\tb$ for which $S(\xb) \neq S(\fb(\xb))$. \cheng{SO?}
\end{proof}

\begin{corollary}\label{st:diagonal_equality_preserving}
The following statements are equivalent
\begin{enumerate}
\item A function $\fb: \R^n \to \R^n$ is diagonal and equality-preserving.
\item $\fb(\xb) = [\bar{f}(\xb_1), \dots, \bar{f}(\xb_n)]$ for some $\bar{f}:\R\to\R$.
\item A function $\fb: \R^n \to \R^n$ is diagonal and order-invariant.
\end{enumerate}
\end{corollary}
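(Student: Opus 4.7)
The plan is to prove the three-way equivalence via the cycle $(1) \Rightarrow (2) \Rightarrow (3) \Rightarrow (1)$, which is the most economical route since it lets us invoke the already-established \cref{st:order_invariant_props} to close the cycle.

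For $(1) \Rightarrow (2)$, I would start from the diagonal representation $\fb(\xb) = [f_1(\xb_1), \dots, f_n(\xb_n)]$ and exploit the equality-preserving property by testing on vectors with two equal entries. Concretely, fix $i,j \in [n]$ and pick any $c \in \R$; consider an input $\xb$ with $\xb_i = \xb_j = c$. Equality-preserving then forces $f_i(c) = f_j(c)$. Since $c$ was arbitrary, $f_i \equiv f_j$ for every pair $i,j$, so all $f_i$ coincide with a single scalar function $\bar f$, yielding the representation in $(2)$.

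For $(2) \Rightarrow (3)$, the diagonal property is immediate from the form. For order-invariance, pick any $P \in \Pbb^n$ corresponding to a permutation $\pi$ on $[n]$, so that $(P\xb)_k = \xb_{\pi(k)}$. Then
\begin{equation*}
    \fb(P\xb) = [\bar{f}(\xb_{\pi(1)}), \dots, \bar{f}(\xb_{\pi(n)})] = P\,[\bar{f}(\xb_1), \dots, \bar{f}(\xb_n)] = P\fb(\xb),
\end{equation*}
which is exactly \cref{def:order_invariant}.

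For $(3) \Rightarrow (1)$, the diagonal property is given, and equality-preserving follows directly from \cref{st:order_invariant_props}, which already shows that every order-invariant function is equality-preserving. This closes the cycle and yields the full equivalence. There is no real obstacle here: the argument is entirely a matter of unpacking the definitions and invoking the earlier proposition; the only step that needs any care is the ``pointwise'' argument in $(1) \Rightarrow (2)$ where one must remember to vary $c$ over all of $\R$ in order to conclude that the coordinate functions literally agree as functions, not just at a single value.
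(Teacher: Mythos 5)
Your proposal is correct and follows exactly the same cyclic route $(1)\Rightarrow(2)\Rightarrow(3)\Rightarrow(1)$ as the paper, including the appeal to \cref{st:order_invariant_props} for the last step; your $(1)\Rightarrow(2)$ argument simply spells out the pointwise reasoning that the paper leaves as ``one can conclude.'' No issues.
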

\begin{proof}
($1\rightarrow 2$) Let $\fb(\xb) = [f_1(\xb_1), \dots, f_n(\xb_n)]$ be a diagonal and equality-preserving function. One can conclude that $\fb_1(x) = \dots = \fb_n(x)$ for all $x \in \R$.\\ \\
($2\rightarrow 3$) Let $\ub= P\xb$ for some permutation matrix $P\in\Pbb^n$. Then $\fb(P\xb) = [\bar{f}(\ub_1), \dots, \bar{f}(\ub_n)] = P [\bar{f}(\xb_1), \dots, \bar{f}(\xb_n)] = P \fb(\xb)$. \\ \\
($3\rightarrow 1$) True by \cref{st:order_invariant_props}.
\end{proof}

\begin{proof}[Proof of \cref{st:ful_order_preserving_diagonal}]
$(\rightarrow)$ By \cref{st:order_preserving_props}, an intra order-preserving function $\fb$ is also equality-preserving. Therefore, by \cref{st:diagonal_equality_preserving} it can be represented in the form $\fb(\xb) = [\bar{f}(\xb_1), \dots, \bar{f}(\xb_n)]$ for some $\bar{f}:\R\to\R$. 
Furthermore, because $\fb(\xb)$ is intra order-preserving, for any $\xb \in \R^n$ with $\xb_1 > \xb_2$, it satisfies $\fb_1(\xb_1) > \fb_2(\xb_2)$; that is, $\bar{f}(\xb_1) > \bar{f}(\xb_2)$. Therefore, $\bar{f}$ is an increasing function. Continuity is inherited naturally.
\\ \\
$(\leftarrow)$ Because $\fb_i(\xb) = \bar{f}(\xb_i)$ and $\bar{f}$ is an increasing function, it follows that $\fb$ is intra order-preserving 
\begin{equation*}
\begin{aligned}
    \xb_i = \xb_j \Rightarrow \fb_i(\xb) = \fb_j(\xb) \qquad \text{ and} \qquad
    \xb_i > \xb_j \Rightarrow \fb_i(\xb) > \fb_j(\xb).
\end{aligned}
\end{equation*}
\end{proof}

Finally, we prove that diagonal intra order-preserving functions are also order-invariant. This fact was mentioned in the paper without a proof.
\begin{corollary}
\label{st:diagonal_order_preserving}
A diagonal intra order-preserving function is also order-invariant.
\end{corollary}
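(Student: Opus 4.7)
The plan is to chain together two results that are already available in the paper, namely \cref{st:order_preserving_props} and \cref{st:diagonal_equality_preserving}. The key observation is that the corollary's hypothesis combines two conditions (diagonal, intra order-preserving), and the second condition already forces a strong structural property, equality-preservation, which is the missing ingredient needed to invoke the equivalence in \cref{st:diagonal_equality_preserving}.

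First, I would let $\fb:\R^n\to\R^n$ be an arbitrary diagonal intra order-preserving function and invoke \cref{st:order_preserving_props} to conclude that $\fb$ is equality-preserving. At this point, $\fb$ satisfies hypothesis~1 of \cref{st:diagonal_equality_preserving} (diagonal and equality-preserving). Applying the equivalence ($1\Leftrightarrow 3$) of that corollary immediately yields that $\fb$ is also order-invariant, which is the desired conclusion.

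I expect no real obstacle here: the work has already been done inside \cref{st:order_preserving_props} (intra order-preserving implies equality-preserving, which is immediate from the definition) and inside \cref{st:diagonal_equality_preserving} (the common normal form $[\bar{f}(\xb_1),\dots,\bar{f}(\xb_n)]$ makes diagonal-plus-equality-preserving and diagonal-plus-order-invariant coincide). Alternatively, one could bypass these references and give a direct one-line argument: using the form $\fb(\xb) = [\bar f(\xb_1),\dots,\bar f(\xb_n)]$ supplied by \cref{st:ful_order_preserving_diagonal}, for any $P\in\Pbb^n$ with associated permutation $\pi$ one has $\fb(P\xb)_i = \bar f(\xb_{\pi(i)}) = \fb(\xb)_{\pi(i)} = (P\fb(\xb))_i$. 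I would prefer the first route since it reuses the corollary already developed in \cref{app:diagonal} and keeps the proof short and modular.
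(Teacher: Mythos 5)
Your proposal is correct and matches the paper's own proof exactly: the paper likewise applies \cref{st:order_preserving_props} to get equality-preservation and then invokes the equivalence in \cref{st:diagonal_equality_preserving} to conclude order-invariance. Your alternative direct computation via \cref{st:ful_order_preserving_diagonal} is also valid, but the route you chose is the one the paper takes.
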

\begin{proof}
Intra order-preserving functions are equality-preserving by \cref{st:order_preserving_props}. By \cref{st:diagonal_equality_preserving} an diagonal equality-preserving function is order-invariant.
\end{proof}
\section{Continuity and Differentiability of the Proposed Architecture}
In this section, we discuss properties of the function $\fb(\xb) = S(\xb)^{-1} U D(\yb) \mb(\xb)$. In order to learn the parameters of $\mb$ with a first order optimization algorithm, it is important for $\fb$ to be differentiable with respect to the parameters of $\mb$. This condition holds in general, since the only potential sources of non-differentiable $\fb$, $S(\xb)^{-1}$ and $\yb$ are constant with respect to the parameters of $\mb$. Thus, if $\mb$ is differentiable with respect to its parameters, $\fb$ is also differentiable with respect to the parameters of $\mb$. 

Next, we discuss continuity and differentiability of $\fb(\xb)$ with respect the \emph{input} $\xb$. These properties are important when the input to function $f$ is first processed by a trainable function $\gb$ (i.e. the final output is computed as $\fb\circ\gb(\xb)$). This is not the case in post-hoc calibration considered in the paper, since the classifier $\gb$ here is not being trained in the calibration phase.

We show below that when $\wb(\xb) = D(\yb) \mb(\xb)$ satisfies the requirements in \cref{st:order_preserv}, the function $\fb(\xb) = S(\xb)^{-1} U D(\yb) \mb(\xb)$ is a continuous intra order-preserving function. 

\begin{corollary}\label{st:w_dm_conds}
Let $\sigma: \R \to \R$ be a continuous function where $\sigma(0) = 0$ and strictly positive on $\R\setminus\{0\}$, and let $\mb$ be a continuous function where $\mb_i(\xb) > 0$ for $i<n$,
and arbitrary for $\mb_d(\yb)$. Let $D(\yb)$ denote a diagonal matrix with entries $D_{ii} = \sigma(\yb_i - \yb_{i+1})$ for $i<n$ and $D_{nn} = 1$. 
Then $\wb(\xb) = D(\yb) \mb(\xb)$ is a continuous function and satisfies the following conditions 
\begin{itemize}
    \item $\wb_i(\xb) = 0$, for $i<n$ and $\yb_i = \yb_{i+1}$
    \item $\wb_i(\xb) > 0$, for $i<n$ and $\yb_i > \yb_{i+1}$
    \item $\wb_n(\xb)$ is arbitrary,
\end{itemize}
where $\yb = S(\xb) \xb$ is the sorted version of $\xb$.
\end{corollary}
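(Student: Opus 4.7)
The plan is to verify the two claims separately: (i) the three pointwise sign/zero conditions on $\wb$, which follow almost immediately by unpacking the definitions and applying the stated properties of $\sigma$ and $\mb$; (ii) continuity of $\wb$ on $\R^n$, for which the real content is showing that the sorted vector $\yb = S(\xb)\xb$ depends continuously on $\xb$.

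For the pointwise conditions, I would just compute coordinate-wise. For $i<n$, the construction gives $\wb_i(\xb) = \sigma(\yb_i-\yb_{i+1})\,\mb_i(\xb)$. If $\yb_i = \yb_{i+1}$, then $\sigma(\yb_i-\yb_{i+1}) = \sigma(0) = 0$ by hypothesis, so $\wb_i(\xb)=0$. If $\yb_i>\yb_{i+1}$, then $\sigma(\yb_i-\yb_{i+1})>0$ by the strict positivity of $\sigma$ on $\R\setminus\{0\}$, and $\mb_i(\xb)>0$ by hypothesis, so the product is strictly positive. For $i=n$, the convention $D_{nn}=1$ gives $\wb_n(\xb) = \mb_n(\xb)$, on which no sign constraint is imposed, matching the ``arbitrary'' clause.

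For continuity, the key step is to show that $\yb = S(\xb)\xb$ is a continuous function of $\xb$ on all of $\R^n$. The cleanest way is to invoke Lemma~\ref{st:sorting_continuous} with $\fb$ taken to be the identity map, which is trivially intra order-preserving and continuous; the lemma then yields continuity of $S(\xb)\,\mathrm{id}(\xb) = \yb$ directly. Once $\yb$ is continuous in $\xb$, each diagonal entry $D_{ii}(\yb) = \sigma(\yb_i - \yb_{i+1})$ for $i<n$ is continuous as a composition of continuous maps, and $D_{nn}\equiv 1$ is trivially continuous, so the matrix-valued map $\xb\mapsto D(\yb)$ is continuous. The product $D(\yb)\,\mb(\xb)$ is then continuous by the standard product rule for continuous functions, giving $\wb$ continuous.

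The only subtle point is the continuity of $\yb$: although the sorting matrix $S(\xb)$ is itself piecewise constant and jumps on the tie set $\{\xb : \xb_i=\xb_j \text{ for some } i\neq j\}$, the product $S(\xb)\xb$ is continuous because the different permutation matrices that $S$ could assign at a tie act identically on the coordinates that coincide. This is precisely the content already handled inside Lemma~\ref{st:sorting_continuous}, so no new argument is required and the corollary reduces to a direct application of that lemma together with the hypotheses on $\sigma$ and $\mb$.
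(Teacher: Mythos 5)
Your proposal is correct and follows essentially the same route as the paper: continuity of $\yb$ is obtained by applying \cref{st:sorting_continuous} to the identity map, continuity of $\wb$ then follows from composition and products of continuous functions, and the three pointwise conditions drop out of the hypotheses on $\sigma$ and $\mb$. Your coordinate-wise verification of the sign conditions is in fact spelled out more explicitly than in the paper's own (very terse) proof.
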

\begin{proof}
First, because $\yb = S(\xb) \xb$ is a continuous function (by \cref{st:sorting_continuous} with $\fb(\xb) = \xb$), $\wb(\xb)=D(\yb) \mb(\xb)$ is also a continuous function. 
Second, because $\norm{\xb}<\infty$, we have $\mb(\xb)<\infty$ due to continuity. Therefore, it follows that $\wb_i(\xb) = \sigma(\yb_i - \yb_{i+1}) \mb_i(\xb)$ satisfies all the listed conditions.
\end{proof}

To understand the differentiability of $\fb$, we first see that $\fb$ may not be differentiable at a point where there is a tie among some elements of the input vector.
\begin{corollary}
For $\wb$ in \cref{st:w_dm_conds}, there exists differentiable functions $\mb$ and $\sigma$ such that $\fb(\xb) = S(\xb)^{-1} U \wb(\xb)$ is not differentiable globally on $\R^n$.
\end{corollary}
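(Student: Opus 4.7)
The plan is to exhibit an explicit counterexample with $n=3$. Take $\sigma(a) = a^2$ and let $\mb(\xb) \equiv \mathbf{1} \in \R^3$ be the constant function; both are smooth and satisfy the hypotheses of \cref{st:w_dm_conds} (indeed $\sigma(0)=0$, $\sigma>0$ elsewhere, and $\mb_i = 1 > 0$ for $i<n$). I claim that the resulting $\fb = S(\xb)^{-1} U \wb(\xb)$ fails to be differentiable at $\xb_0 = (1,1,0)^\top$, which is a tie point of the form $\yb_1 = \yb_2 > \yb_3$.

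To verify this, I would compute $\fb$ explicitly in the two half-spaces meeting at $\xb_0$. For $\xb = \xb_0 + h$ with $h$ small, the sorting depends only on the sign of $h_1 - h_2$: on the side $h_1 > h_2$ one has $S(\xb)=I$ and $\yb = (1+h_1, 1+h_2, h_3)$, so $\wb = ((h_1-h_2)^2, (1+h_2-h_3)^2, 1)$ and
\[
\fb_1(\xb_0 + h) = (h_1-h_2)^2 + (1+h_2-h_3)^2 + 1.
\]
On the side $h_1 < h_2$, $S(\xb)$ swaps the first two coordinates, $\yb = (1+h_2, 1+h_1, h_3)$, and a direct computation of $S^{-1} U \wb$ yields
\[
\fb_1(\xb_0 + h) = (1+h_1-h_3)^2 + 1.
\]
Both formulas agree at $h=0$, giving $\fb_1(\xb_0) = 2$, as expected from the continuity guaranteed by \cref{st:order_preserv}. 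The final step is to evaluate the one-sided partials of $\fb_1$ along $e_1$ at $\xb_0$: plugging $h = t e_1$ with $t > 0$ into the first formula gives difference quotient $t \to 0$, while $t < 0$ uses the second formula and gives $2 + t \to 2$. Since these disagree, $\partial_1 \fb_1$ does not exist at $\xb_0$, so $\fb$ is not differentiable there.

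The only substantive subtlety, and the reason the example uses $n=3$ rather than $n=2$, is that for $n=2$ every tie forces $\sigma$ to be evaluated at $0$; coupled with $\sigma \geq 0$ and $\sigma(0)=0$, differentiability of $\sigma$ forces $\sigma'(0)=0$, so the $\sigma$-dependent contribution always vanishes to first order and the two sides match. The failure mode for $n \geq 3$ exploits the coexistence of a tie $\yb_i = \yb_{i+1}$ with a strict gap $\yb_j > \yb_{j+1}$ elsewhere: after the discontinuous swap encoded by $S(\xb)^{-1}$, the term $\sigma(\yb_j - \yb_{j+1})$ is effectively differentiated with respect to a different coordinate on each side, producing a mismatch proportional to $\sigma'(\yb_j - \yb_{j+1})$, which is nonzero for generic $\sigma$.
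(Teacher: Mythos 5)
Your proposal is correct and follows essentially the same route as the paper's proof: both take $n=3$, $\sigma(a)=a^2$, and $\mb\equiv[1,1,1]^\top$, and exhibit mismatched one-sided partial derivatives at a tie point (you use $\xb_0=(1,1,0)^\top$ along $\eb_1$; the paper uses $(2,1,1)^\top$ along $\eb_3$), with the same underlying mechanism of the sorting permutation switching across the tie. Your closing remark explaining why $n=2$ cannot furnish a counterexample is a correct bonus observation not present in the paper.
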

\begin{proof}
For the counter example, let $\mb:\R^3\to\R^3$ be a constant function $\mb(\xb) =[1,1,1]^\top$, and $\sigma(a) = a^2$. It is easy to verify that they both satisfy the conditions in \cref{st:w_dm_conds} and are differentiable. We show that the partial derivative $\frac{\partial \fb_1(\xb)}{\partial \xb_3}$ does not exists at $\xb = [2,1,1]^\t$.
With few simple steps one could see $\fb_1(\xb + \alpha \eb_3)$ for $\alpha \in (-\infty, 1]$ is
\begin{equation}
    \fb_1(\xb + \alpha \eb_3) = 
    \begin{cases} 
      \sigma(1) + \sigma(-\alpha) + 1 & \alpha \leq 0 \\
      \sigma(1-\alpha) + \sigma(\alpha) + 1 & 0 < \alpha \leq 1
    \end{cases}
\end{equation}
Though this function is continuous, 
the left and right derivatives are not equal at $\alpha = 0$ so the function is not differentiable at $\xb = [2,1,1]^\t$. 
\end{proof}
The above example shows that $\fb$ may not be differentiable for tied inputs. On the other hand, it is straightforward to see function $\fb$ is differentiable at points where there is no tie. 
More precisely, for the points with tie in the input vector, we show the function $\fb$ is B-differentiable, which is a weaker condition than the usual (Frech\'{e}t) differentiability.
\begin{definition}\citeRef{facchinei2007finite}
A function $\fb:\R^n \to \R^m$  is said to be \emph{B(ouligand)-differentiable} at a point $\xb \in \R^n$, if $\fb$ is Lipschitz continuous in the neighborhood of $\xb$ 
 and directionally differentiable at $\xb$.
\end{definition}

\begin{proposition}\label{st:differentiable_order_preserving}
For $\fb:\R^n \to \R^n$ in \cref{st:order_preserv}, let $\wb(\xb)$ be as defined in \cref{st:w_dm_conds}. 
If $\sigma$ and $\mb$ are continuously differentiable, then $\fb$ is B-differentiable on $\R^n$.
\end{proposition}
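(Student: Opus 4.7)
The strategy is to reduce $B$-differentiability to a piecewise-smoothness argument based on the polyhedral decomposition already used in the proof of \cref{st:sorting_continuous}. Recall the closed polyhedra $\Nbb_k = \{\xb : S(\xb)\xb = P_k\xb\}$, $k\in[K]$, where $\Pbb^n=\{P_1,\dots,P_K\}$. These cover $\R^n$ and any bounded set meets only finitely many of them. On each $\Nbb_k$ define the \emph{global smooth extension}
\[
    \fb^{(k)}(\xb) \coloneqq P_k^{-1}\, U\, D(P_k\xb)\, \mb(\xb),
\]
which is a composition of $C^1$ maps in $\xb$ whenever $\sigma$ and $\mb$ are $C^1$, hence $\fb^{(k)}\in C^1(\R^n;\R^n)$.

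The first step is to show that $\fb(\xb)=\fb^{(k)}(\xb)$ for every $\xb\in\Nbb_k$, not merely in the interior. For $\xb\in\Nbb_k$ we have $\yb=S(\xb)\xb=P_k\xb$, so $D(\yb)\mb(\xb)=D(P_k\xb)\mb(\xb)$, and thus $U\wb(\xb)=UD(P_k\xb)\mb(\xb)$. The only possible discrepancy is between $S(\xb)^{-1}$ and $P_k^{-1}$, which differ only at indices where $\yb$ has ties; but by \cref{st:w_dm_conds} those coordinates of $U\wb(\xb)$ agree, so the two permutations act identically on $U\wb(\xb)$. This is exactly the equality-preserving argument already deployed in the proof of \cref{st:sorting_continuous}.

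Next I would establish the Lipschitz part of $B$-differentiability. Fix $\xb_0\in\R^n$ and a compact neighborhood $B$ that meets only those $\Nbb_k$ with $\xb_0\in\Nbb_k$. Each $\fb^{(k)}$ is $C^1$ and so Lipschitz on $B$ with some constant $L_k$; let $L=\max_k L_k$. For any $\xb,\xb'\in B$ the segment $[\xb,\xb']$ is partitioned by the hyperplanes defining the $\Nbb_k$ into finitely many sub-segments, each lying in some $\Nbb_{k_j}$. On each sub-segment $\fb=\fb^{(k_j)}$, so the corresponding increments are bounded by $L_{k_j}$ times the sub-segment length; summing and using continuity of $\fb$ (already granted by \cref{st:order_preserv} combined with \cref{st:w_dm_conds}) yields $\|\fb(\xb)-\fb(\xb')\|\le L\|\xb-\xb'\|$.

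Finally, for directional differentiability at $\xb_0$ along $\dv$, I would argue that the ray $\xb_0+t\dv$ lies in a single region $\Nbb_{k(\dv)}$ for all sufficiently small $t>0$: strict inequalities in $\xb_0$ persist for small $t$, and each tie $(\xb_0)_i=(\xb_0)_j$ is broken according to the sign of $\dv_i-\dv_j$, selecting a definite permutation. Thus, for small $t>0$,
\[
    \frac{\fb(\xb_0+t\dv)-\fb(\xb_0)}{t}=\frac{\fb^{(k(\dv))}(\xb_0+t\dv)-\fb^{(k(\dv))}(\xb_0)}{t}\;\xrightarrow[t\to 0^+]{}\; D\fb^{(k(\dv))}(\xb_0)\,\dv,
\]
using that $\fb^{(k(\dv))}$ is $C^1$. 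The main technical obstacle is the edge case $\dv_i=\dv_j$ for tied coordinates of $\xb_0$, where $k(\dv)$ is not unique; here consistency follows because any admissible $k(\dv)$ agrees with any other on the relevant tied block, by the same equality-preserving argument used in step one. Combining Lipschitzness and directional differentiability yields $B$-differentiability on all of $\R^n$.
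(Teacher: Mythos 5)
Your proposal is correct and follows essentially the same route as the paper's proof: the polyhedral decomposition $\Nbb_k=\{\xb: S(\xb)\xb=P_k\xb\}$, the $C^1$ extensions $P_k^{-1}UD(P_k\xb)\mb(\xb)$ agreeing with $\fb$ on each closed piece, directional differentiability via a ray that stays in a single region for small $t>0$, and Lipschitz continuity from the piecewise-$C^1$ structure. You merely spell out two steps the paper leaves terse (the segment-partition argument for the Lipschitz bound and the explicit selection of $k(\dv)$ by the signs of $\dv_i-\dv_j$), which is a welcome but not substantively different elaboration.
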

\begin{proof}
Let $\Pbb^n = \{P_1, \dots, P_K\}$ be the finite set of all possible $n\times n$ dimensional permutation matrices. For each $k\in[K]$, define the closed set $\Nbb_k = \{\xb : S(\xb) \xb = P_k \xb \}$. These sets are convex polyhedrons since each can be defined by a finite set of linear inequalities; in addition, they together form a covering set of $\R^n$. 

If there is no tie in elements of vector $\xb$, then $\xb \in int(\Nbb_k)$ for some $k\in [K]$. Since the sorting function $S(\xb)$  has the constant value $P_k$ in a small enough neighborhood of $\xb$, the function $\fb$ is continuously differentiable (and therefore B-differentiable) at $\xb$. 
%
%Note that the later is true since B-differentiability is a weaker condition than differentiability~\amirreza{cite the reference you sent me a picture of it}.

Next we show that, for any point $\xb \in \R^n$ with some tied elements, the directional derivative of $\fb$ along an arbitrary direction $\db\in\R^n$ exists. For such $\xb$ and $\db$, %by \cref{st:x_and_d}, 
there exists a $k \in [K]$ and a small enough $\delta > 0$ such that $\xb, \xb+\epsilon \db \in \Nbb_k$ for all $0 \leq \epsilon \leq \delta$. Therefore, we have $\fb(\xb') = \hat{\fb}(\xb')$ for all $\xb' \in [\xb, \xb+\delta \db]$, where $\hat{\fb}_k(\xb) = P_k^{-1} U D(P_k \xb) \mb(\xb)$. Let $\hat{\fb}_k'(\xb;\db)$ denote the directional derivative of $\hat{\fb}_k$ at $\xb$ along $\db$. By the equality of $\hat{\fb}_k$ and $\fb$ in $[\xb, \xb+\delta \db]$, we conclude that the directional derivative $\fb'(\xb;\db)$ exists and is equal to $\hat{\fb}'(\xb;\db)$.

Finally, we note that $\fb$ is  Lipschitz continuous, since it is %continuous and 
composed by pieces of Lipschitz continuous functions $\hat{\fb}_k$ for $k\in[K]$ (implied by the continuous differentiability assumption on $\sigma$ and $\mb$).
Thus, $\fb$ is B-differentiable.
%where each piece is a Lipschitz continuous function\amirreza{seems obvious but could not find a  the statement anywhere.}
\end{proof}

%\begin{corollary}\label{st:x_and_d}
%For any point $\xb\in\R^n$ with some tie elements, there exist a $k \in [K]$ and a $\delta > 0$ where $\xb, \xb+\epsilon \db \in \Nbb_k$ for all $0 \leq \epsilon \leq \delta$.
%\end{corollary}

%\begin{proof}
%\amirreza{Rigorous proof seems a little bit tricky. $\xb$ might be on the boundary of one or more regions and direction $\db$ might go into the interior point of one of these regions or go along the boundary of some of these regions. Either case there is one region $\Nbb_k$ that both $\xb$ and $\xb + \alpha \db$ belongs to that.}
%\end{proof}

\section{Learning Increasing Functions}
We follow the implementation of~\citeRef{wehenkel2019unconstrained_ref} for learning increasing functions in the diagonal subfamily. The idea is to learn an increasing function $\bar{f}(x): \R \rightarrow \R$ using a neural network, which can be realized by learning a strictly positive function $\bar{f}'(x)$ and a bias $\bar{f}(0) \in \R$ and constructing the desired function $\bar{f}$ by the integral 
$
    \bar{f}(x) = \int_0^x \bar{f}'(t)dt + \bar{f}(0)
$.
In implementation, the derivative $\bar{f}'$ is modeled by a generic neural network and the positiveness is enforced by using a proper activation function in the last layer. 
In the forward computation, the integral is approximated numerically using Clenshaw-Curtis quadrature~\citeRef{clenshaw1960method_ref} and the backward pass is performed by Leibniz integral rule to reduce memory footprint. We the use official implementation of the algorithm provided by~\citeRef{wehenkel2019unconstrained_ref}.

\section{Datasets, Hyperparameters, and Architecture Selection}
The size of the calibration and the test datasets, as well as the number of classes for each dataset, are shown in~\cref{tbl:stats}. We note that the calibration sets sizes are the same as the previous methods~\citeRef{guo2017calibration2,kull2019beyond2}.
\begin{table}[!b]
\begin{center}
\caption{\label{tbl:stats}\small{Statistics of the Evaluation Datasets.}}
\resizebox{0.7\linewidth}{!}{%
\begin{tabular}{c|c c c c}
\toprule
Dataset  & \#classes & Calibration set size & Test dataset size \\
\hline
CIFAR-10 & 10 & 5000 & 10000 \\
SVHN & 10 & 6000 & 26032 \\
CIFAR-100 & 100 & 5000 & 10000 \\
CARS & 196 & 4020 & 4020  \\
BIRDS & 200 & 2897 & 2897 \\
ImageNet & 1000 & 25000 & 25000 \\
\bottomrule
\end{tabular}
}
\end{center}
%\vspace{-.9cm}
\end{table}

We follow the experiment protocol in~\citeRef{kull2019beyond2} and use cross validation on the calibration set to find the best hyperparameters and architectures for all the methods. We found that \citeRef{kull2019beyond2} have improved their performance via averaging output predictions of models trained on different folds. We follow the same approach to have fair comparisons.
Our criteria for selecting the best architecture is the NLL value. We perform $3$ fold cross validation for ImageNet and $5$ folds for all the other datasets. We limit our architecture to fully connected networks and vary the number of hidden layers as well as the size of each layer. We allow networks with up to $3$ hidden layers in all the experiments. In CIFAR-10, SVHN, and CIFAR-100 with fewer classes, we test networks with $\{1,2,10,20,50,100,150\}$ units per layer and for the larger CARS, BIRDS, and ImageNet datasets, we allow a wider range of $\{2,10,20,50,100,150,500\}$ units per layer. We use the similar number of units for all the hidden layers to reduce the search space. 
We use ReLU activation for all middle hidden layers and Softplus on the last layer when strict positivity is desired. 
We utilize L-BFGS~\citeRef{liu1989limited} for small scale optimization problems when the computational resources allow (temperature scaling and diagonal intra order-preserving~(\dgfn) methods on CIFAR and SVHN datasets) and use Adam~\citeRef{kingma2014adam} optimizer for other experiments. \cref{tbl:hyperparams} summarize cross validation learned hyperparameter for each method.

Although the functions learned in \cref{tbl:hyperparams} are more complicated than linear transformations used in the baselines, they are not too complex to slow down computation as the calibration network size is negligible compared to the backbone network used in the experiments. In our experiments, all methods take less than $0.5$ milliseconds/sample in forward path and their differences are negligible. %since the computation time in layers is negligible compared to the data pipeline processing time.

We use the pre-computed logits of these networks provided by~\citeRef{kull2019beyond2} for CIFAR, SVHN, and ImageNet with DenseNet and ResNet~\footnote{\href{https://github.com/markus93/NN_Calibration}{https://github.com/markus93/NN\_Calibration}}. 
In addition, we use the publicly available state-of-the-art models for PNASNet5-large and ResNet50~NTSNet~\citeRef{Yang2018Learning}~\footnote{\href{https://github.com/osmr/imgclsmob/blob/master/pytorch/README.md}{https://github.com/osmr/imgclsmob/blob/master/pytorch/README.md}} for ImageNet and BIRDS datasets, respectively. Furthermore, we trained different ResNet type models on CARS dataset using the standard pytorch training script. The ResNet models with (pre) are initialized with pre-trained ImageNet weights. We will release these models for future research.

The effect of weight regularization on different metrics for \msodir and \dirodir methods is illustrated in~\cref{fig:odir_wl}. This shows that simply regularizing the off diagonal elements of a linear layer has limited expressiveness to achieve good calibration especially in the case that number of classes is large.

%%%%%%%%%%%%%%%%%%%%%%%%%%%%%%%%%%%%
\begin{table*}[!t]
	\begin{center}
	    \caption{\label{tbl:hyperparams}\footnotesize{Hyperparameters learned by cross validation. For \dgfn, \oinv, \opre, and \uncons we show the network architectures learned by cross validation. The number of units in each layer are represented by a sequence of numbers, e.g. $(10,20,30,40)$ represents a network with $10$ input units, $20$ and $30$ units in the first and second hidden layers, respectively, and $40$ output units. We perform multi-fold cross-validation and select the architecture with lowest NLL on validation set.}}
		\resizebox{1.0\linewidth}{!}{%
			\begin{tabular}{c c | c | c | c|c}
				\toprule
				Dataset  & Model  & \dgfn & \oinv & \opre & \uncons \\
				\hline
				CIFAR10 & ResNet~110 & (1,10,10,1)   & (10,150,150,10) & (10,2,2,10) & (10,500,10) \\
				CIFAR10 & Wide ResNet~32    & (1,2,2,1) & (10,10,10,10) & (10,2,2,10) & (10,150,150,10) \\
				CIFAR10 & DenseNet~40  & (1,2,2,1) & (10,50,50,100 & (10,2,2,10) & (10,150,150,10) \\
				\hline
				SVHN & ResNet~152 (SD) & (1,20,20,1) & (10,10,10,10) & (10,50,50,10) & (10,500,10) \\
				\hline
				CIFAR100 & ResNet~110 & (1,10,10,1) & (100,100,100,100) & (100,150,150,100) & (100,500,100) \\
				CIFAR100 & Wide ResNet~32   & (1,1,1) & (100,100,100,100) & (100,2,2,100) & (100,500,100)  \\
				CIFAR100 & DenseNet~40 & (1,1,1) & (100,10,10,100) & (100,2,2,100) & (100,500,500,100)\\
				\hline
				CARS & ResNet~50 (pre)  & (1,50,1) & (196,10,196) & (196,2,2,196) & (196,500,196)\\
				CARS & ResNet~101 (pre)  & (1,20,20,1) & (196,100,100,196) & (196,20,20,196) & (196,500,196)\\
				CARS & ResNet~101  & (1,50,1)  & (196,50,50,196) & (196,100,100,196) & (196,500,196)\\
				\hline
				BIRDS & ResNet~50 (NTS)  & (1,50,50,1) & (200,150,150,200) & (200,50,50,200) & (200,500,200)\\
				\hline 
				ImageNet & ResNet~152  & (1,10,10,1) & (1000,150,150,1000) & (1000,2,2,1000) & (1000,150,1000)\\
				ImageNet & DenseNet~161  & (1,10,1) & (1000,100,100,1000) & (1000,2,2,1000) & (1000,150,1000)\\
				ImageNet & PNASNet5~large  & (1,20,20,1) & (1000,50,50,1000) & (1000,100,100,1000) & (1000,100,1000)\\
				\bottomrule
			\end{tabular}
		}
	\end{center}
	\vspace{-.2cm}
\end{table*}
%%%%%%%%%%%%%%%%%%%%%%%%%%%%%%%%%%%%

%%%%%%%%%%
\begin{figure}[t]
\begin{center}
%\includegraphics[width=0.49\linewidth,trim=20 20 15 3, clip]{} 
%\includegraphics[width=0.49\linewidth,trim=20 20 15 3, clip]{}
%\vspace{-0.3cm}
\includegraphics[width=0.3\linewidth]{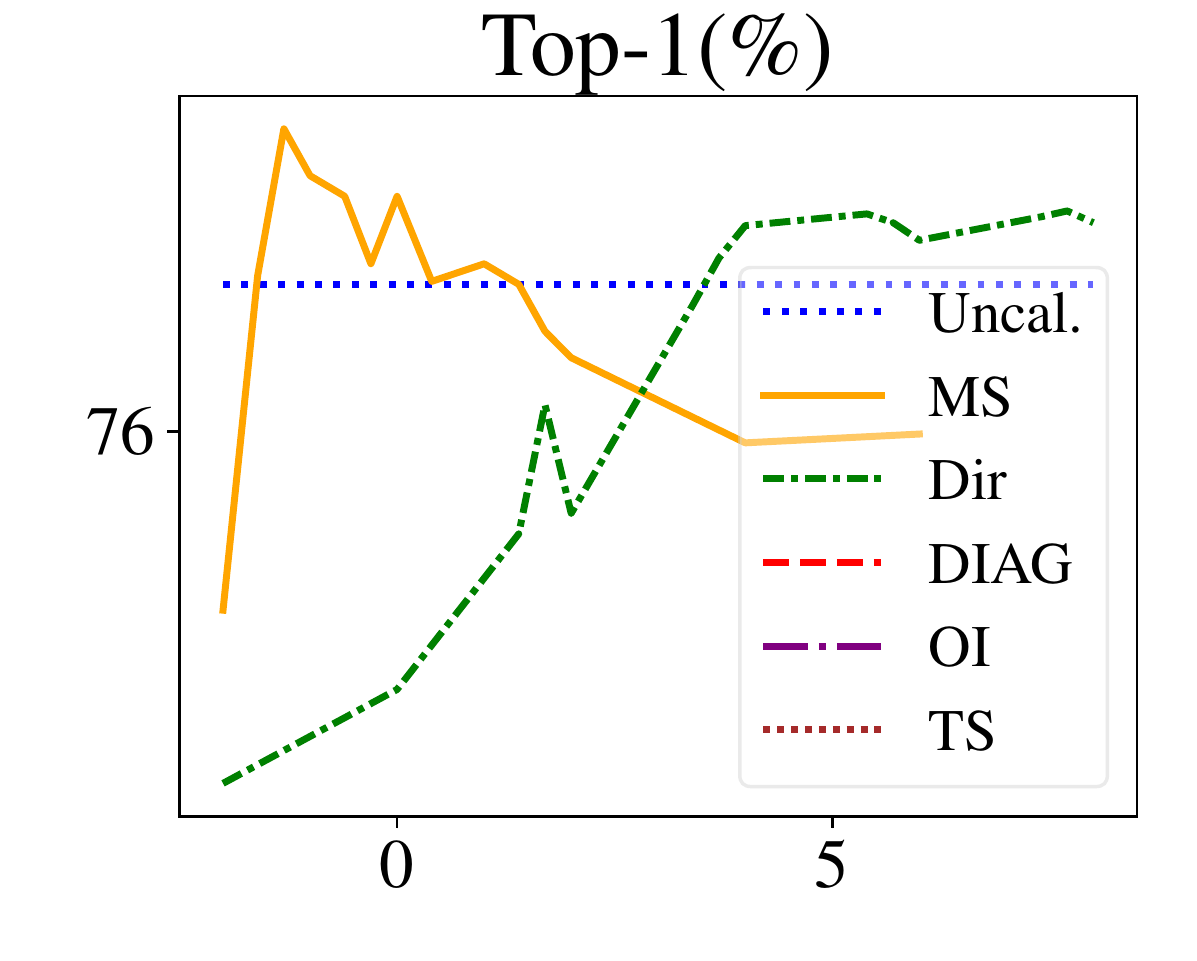}
\includegraphics[width=0.3\linewidth]{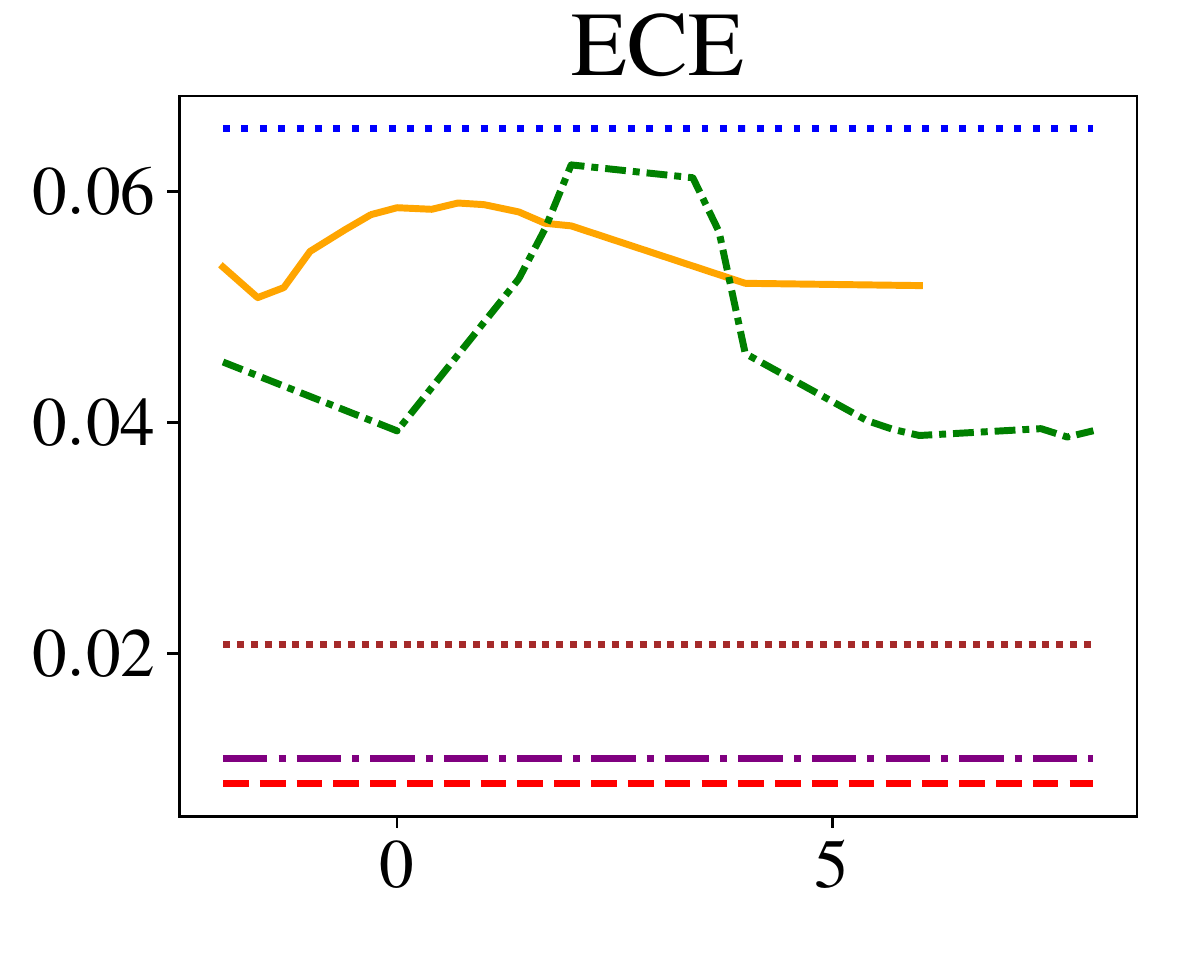}
\includegraphics[width=0.3\linewidth]{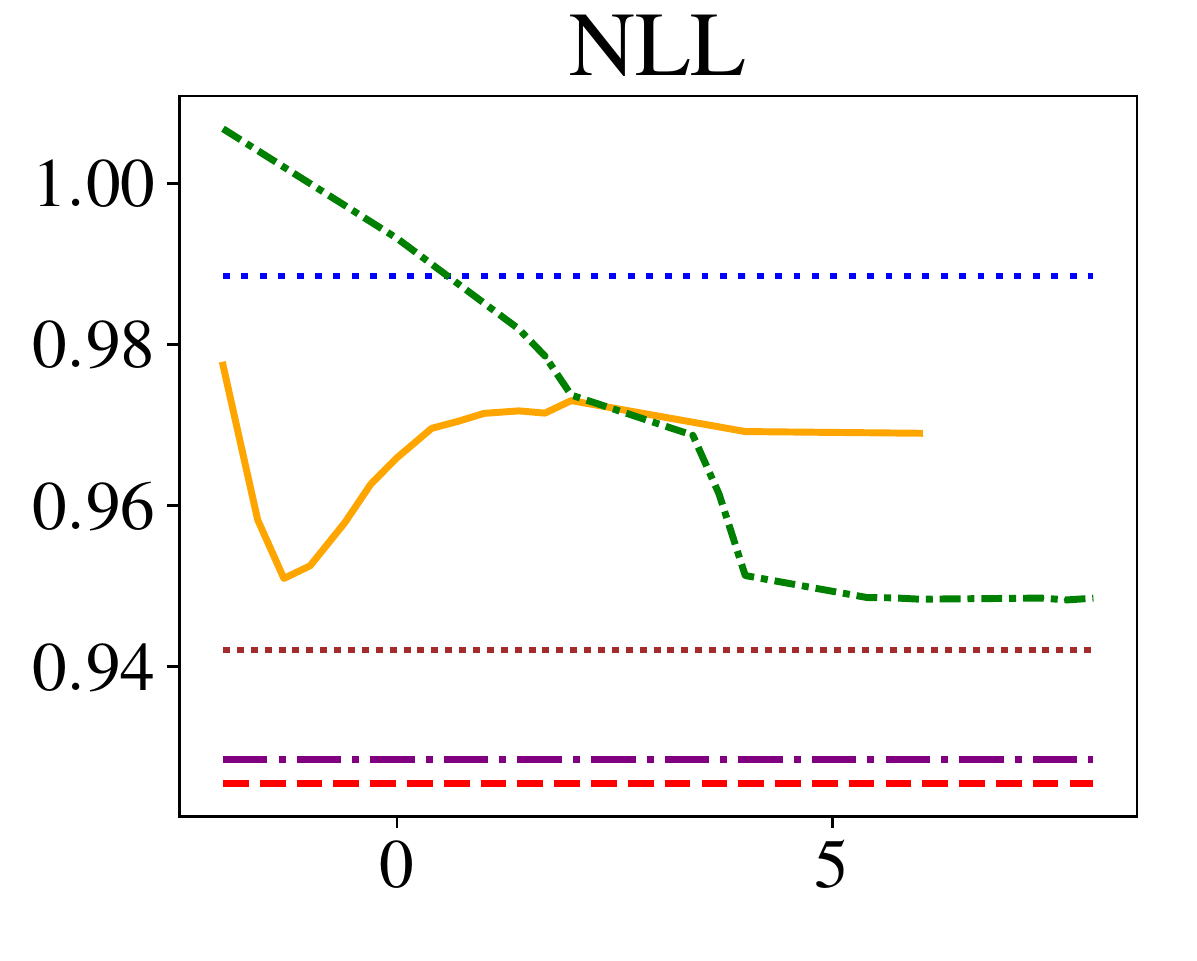}
\caption{\label{fig:odir_wl} Accuracy, ECE, and NLL plots in \msodir and \dirodir for ResNet~152 on ImageNet with different regularization weights. In the plots, x-axis shows the log scale regularization and y-axis shows the accuracy, ECE, and NLL of different methods, respectively. The value of the bias regularizer is found by cross validation and kept constant for visualization purpose. Changing the bias regularizer has little effect on the final shape of the plots. 
%A high regularization value $\lambda > 10^4$ is required to achieve the best performance for Dir while the transformation still looses about $1\%$ accuracy compared to the original model.
}
\end{center}
\end{figure}
%%%%%%%%%%%%%%%%
%\clearpage
\section{More Experiments and Discussions}
{\bf Reliability Diagrams.} In \cref{fig:transformations} of the paper, we show the reliablity diagrams and diagnoal functions leanred by TS and \dgfn in ResNet~152 and PNASNet5~large on ImageNet dataset. \cref{fig:reliability} and \ref{fig:reliability2} illustrate the reliability diagrams for different calibration algorithms in all the models. In general \dgfn method outperforms other methods in calibration in most of the regions. \opre and \oinv methods also achieve good calibration performance on this dataset and are slightly better than temperature scaling, while \msodir and \dirodir methods do not reduce the calibration error as much.

%%%%
\begin{figure}
\centering
\begin{subfigure}[t]{\dimexpr0.30\textwidth+20pt\relax}
    \makebox[20pt]{\raisebox{40pt}{\rotatebox[origin=c]{90}{\tiny CIFAR10, ResNet 110}}}%
    \includegraphics[width=\dimexpr\linewidth-20pt\relax,trim=25 25 10 10, clip]
    {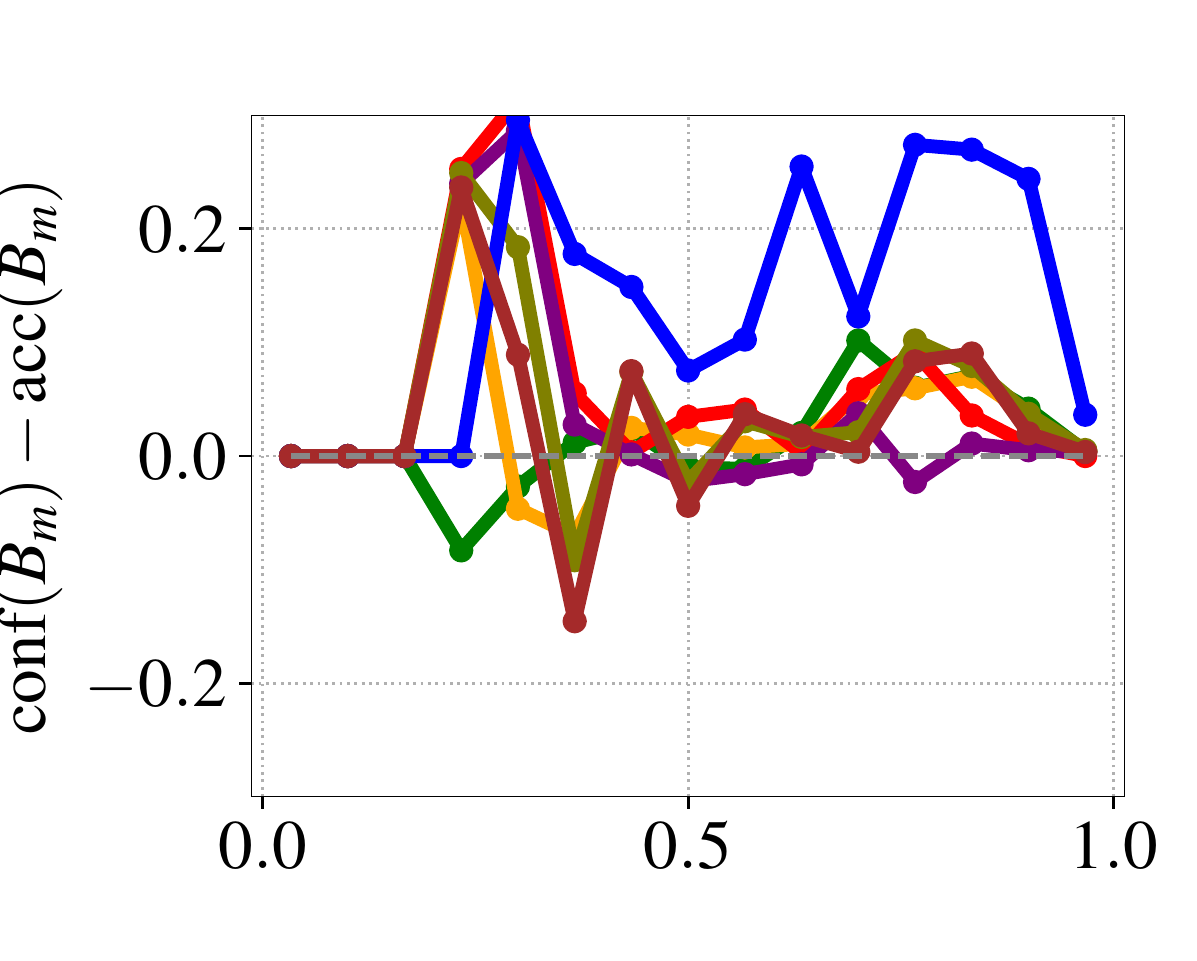}
    \makebox[20pt]{\raisebox{40pt}{\rotatebox[origin=c]{90}{\tiny CIFAR10, Wide ResNet 32}}}%
    \includegraphics[width=\dimexpr\linewidth-20pt\relax,trim=25 25 10 10, clip]
    {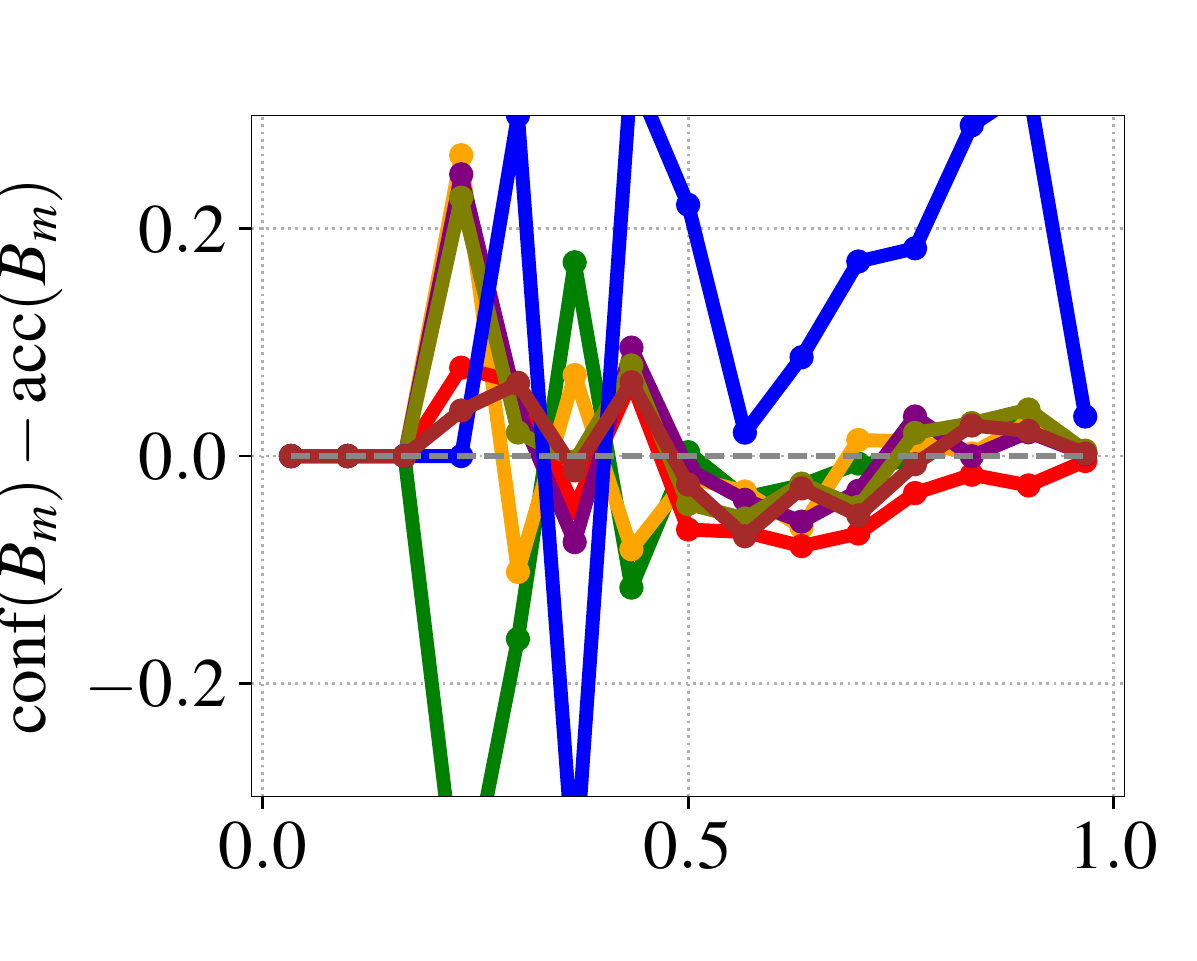}
    \makebox[20pt]{\raisebox{40pt}{\rotatebox[origin=c]{90}{\tiny CIFAR10, DenseNet 40}}}%
    \includegraphics[width=\dimexpr\linewidth-20pt\relax,trim=25 25 10 10, clip]
    {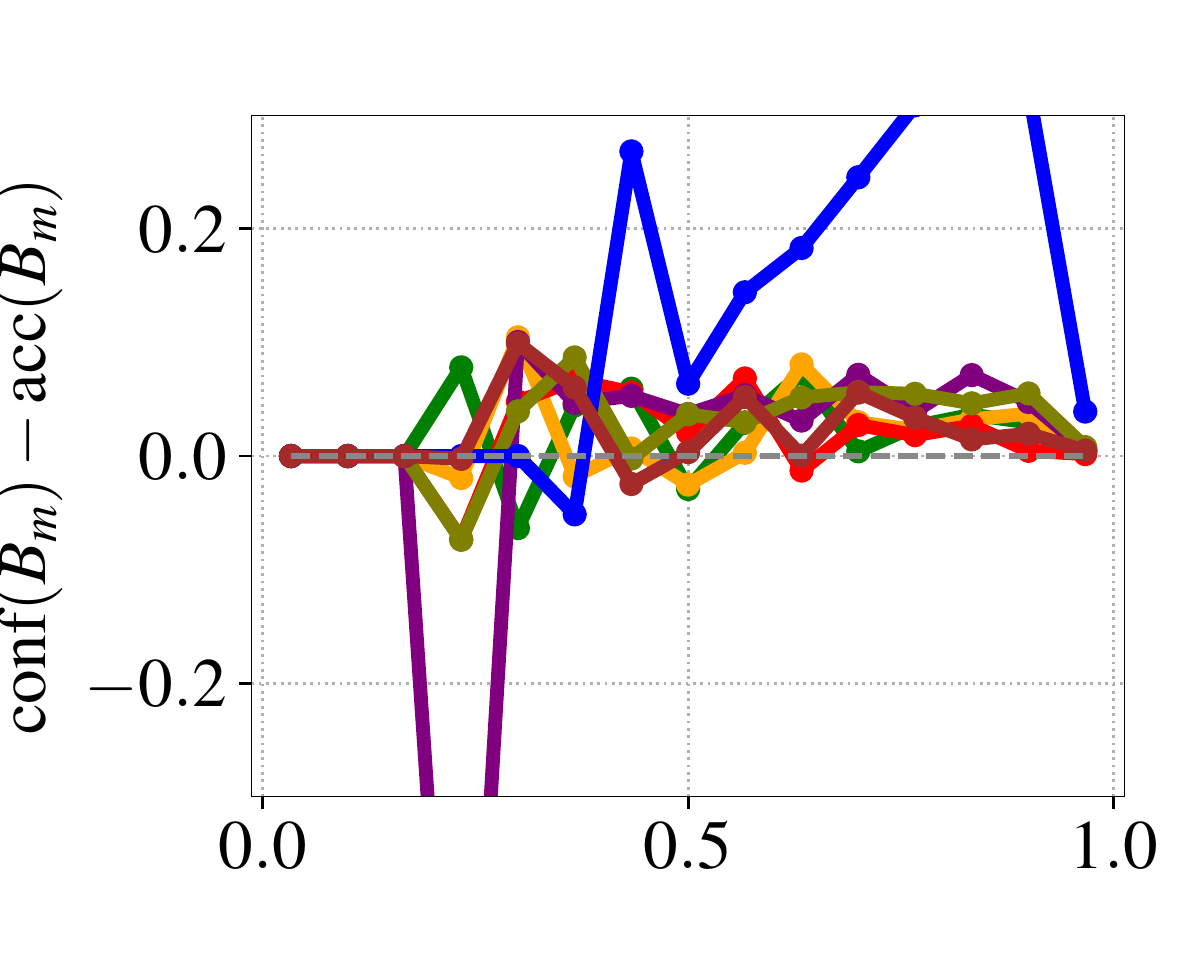}
    \makebox[20pt]{\raisebox{40pt}{\rotatebox[origin=c]{90}{\tiny CIFAR100, ResNet 110}}}%
    \includegraphics[width=\dimexpr\linewidth-20pt\relax,trim=25 25 10 10, clip]
    {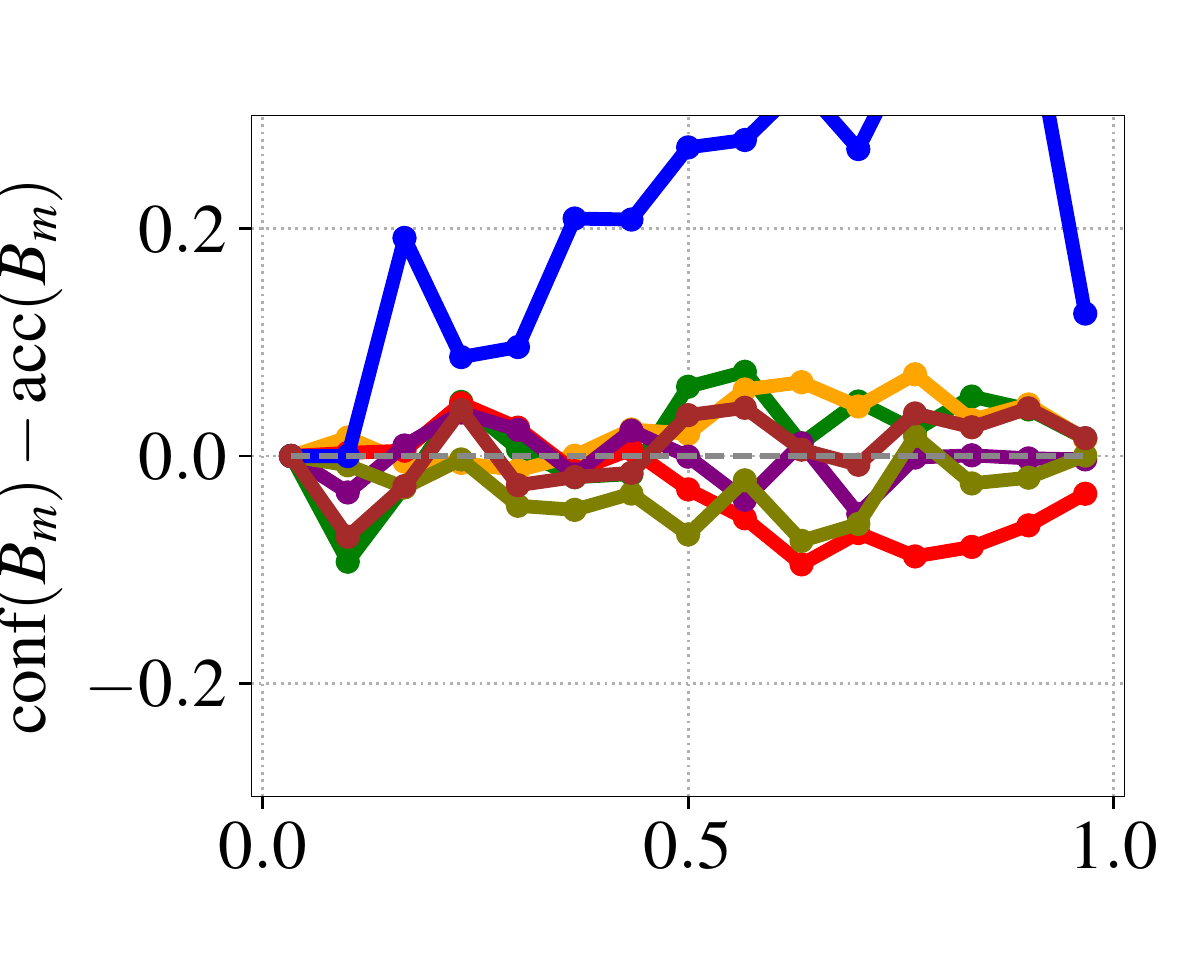}
    \makebox[20pt]{\raisebox{40pt}{\rotatebox[origin=c]{90}{\tiny CIFAR100, Wide ResNet 32}}}%
    \includegraphics[width=\dimexpr\linewidth-20pt\relax,trim=25 25 10 10, clip]
    {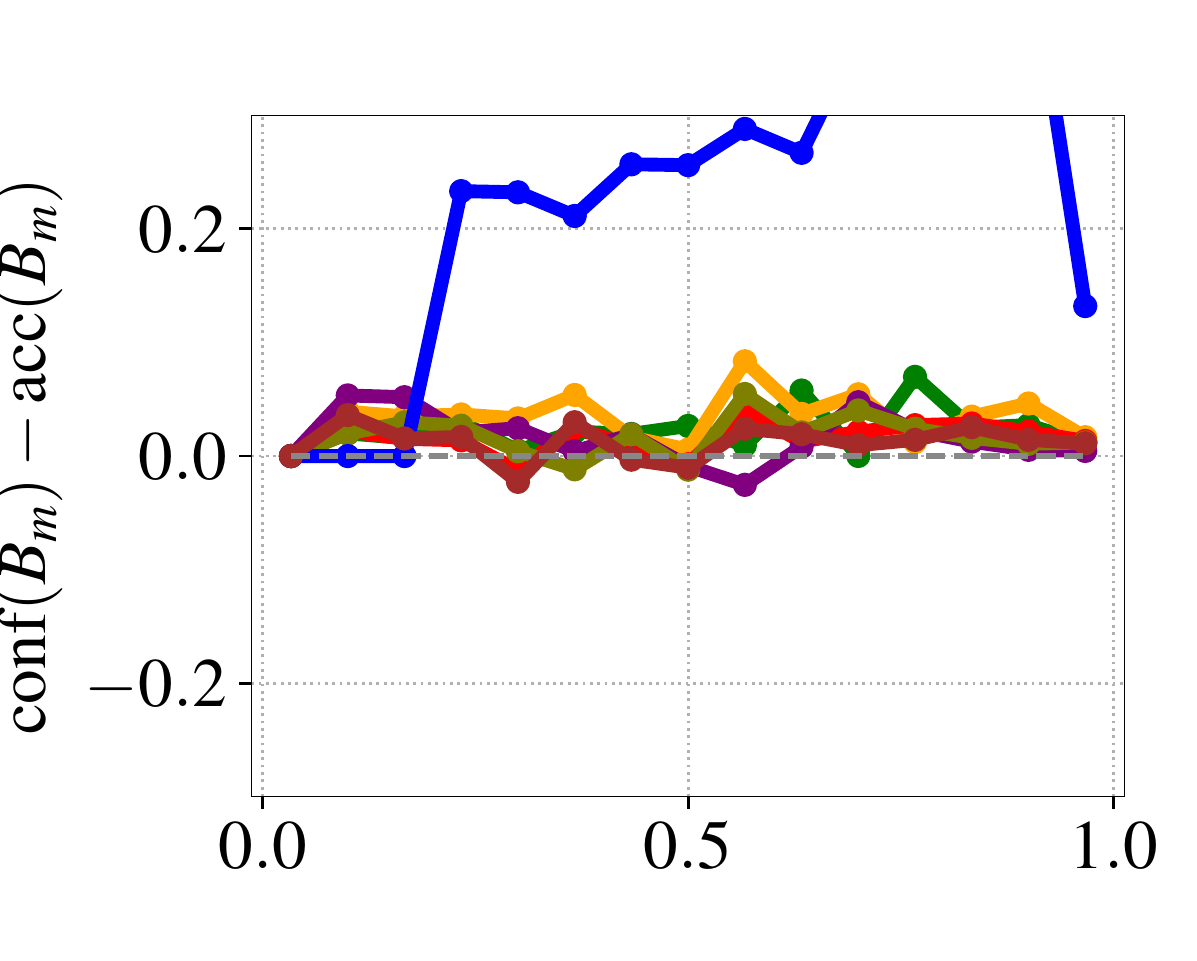}
    \makebox[20pt]{\raisebox{40pt}{\rotatebox[origin=c]{90}{\tiny CIFAR100, DenseNet 40}}}%
    \includegraphics[width=\dimexpr\linewidth-20pt\relax,trim=25 25 10 10, clip]
    {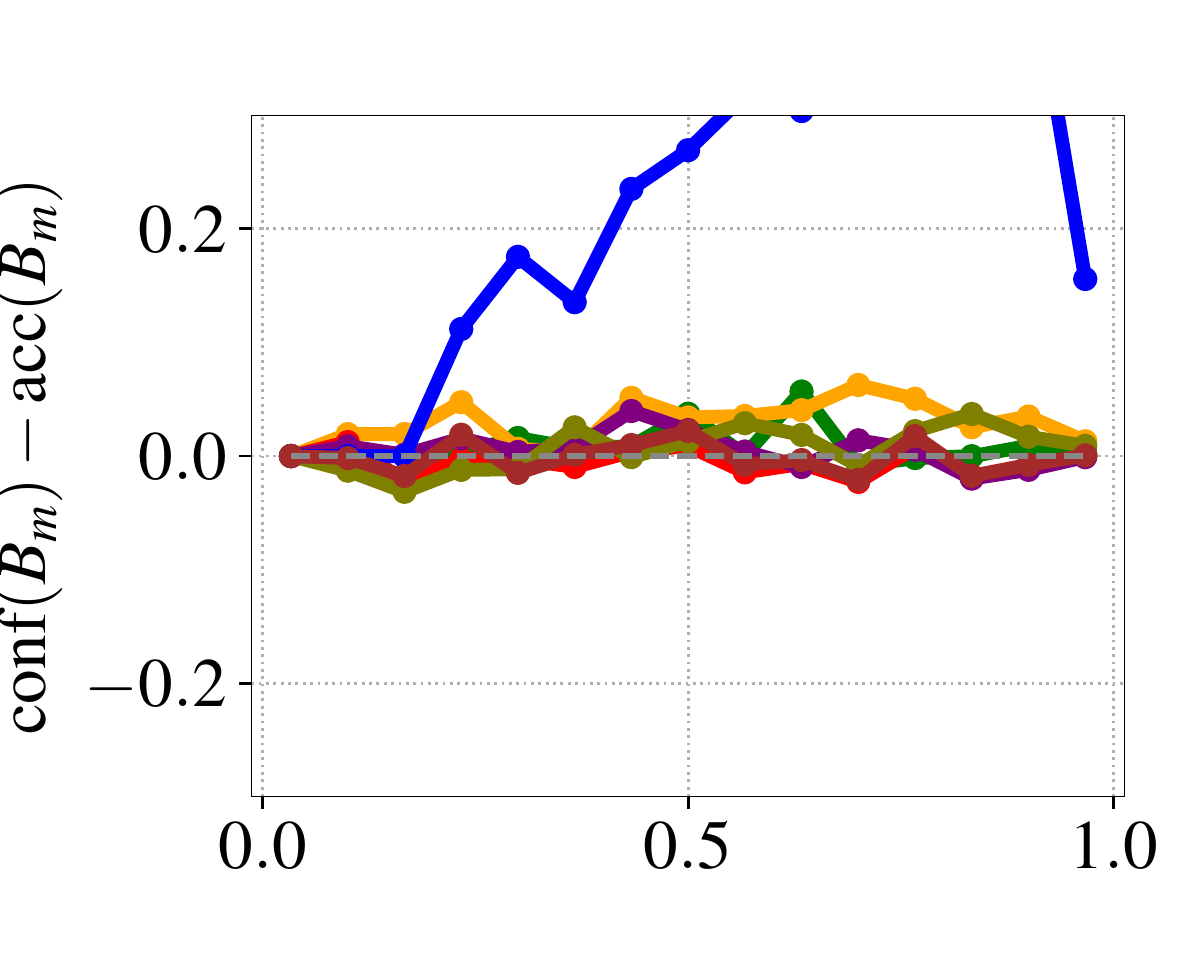}
    \caption{Reliability Diagram}
\end{subfigure}
\begin{subfigure}[t]{0.30\textwidth}
    \includegraphics[width=\textwidth,trim=25 25 10 10, clip]
    {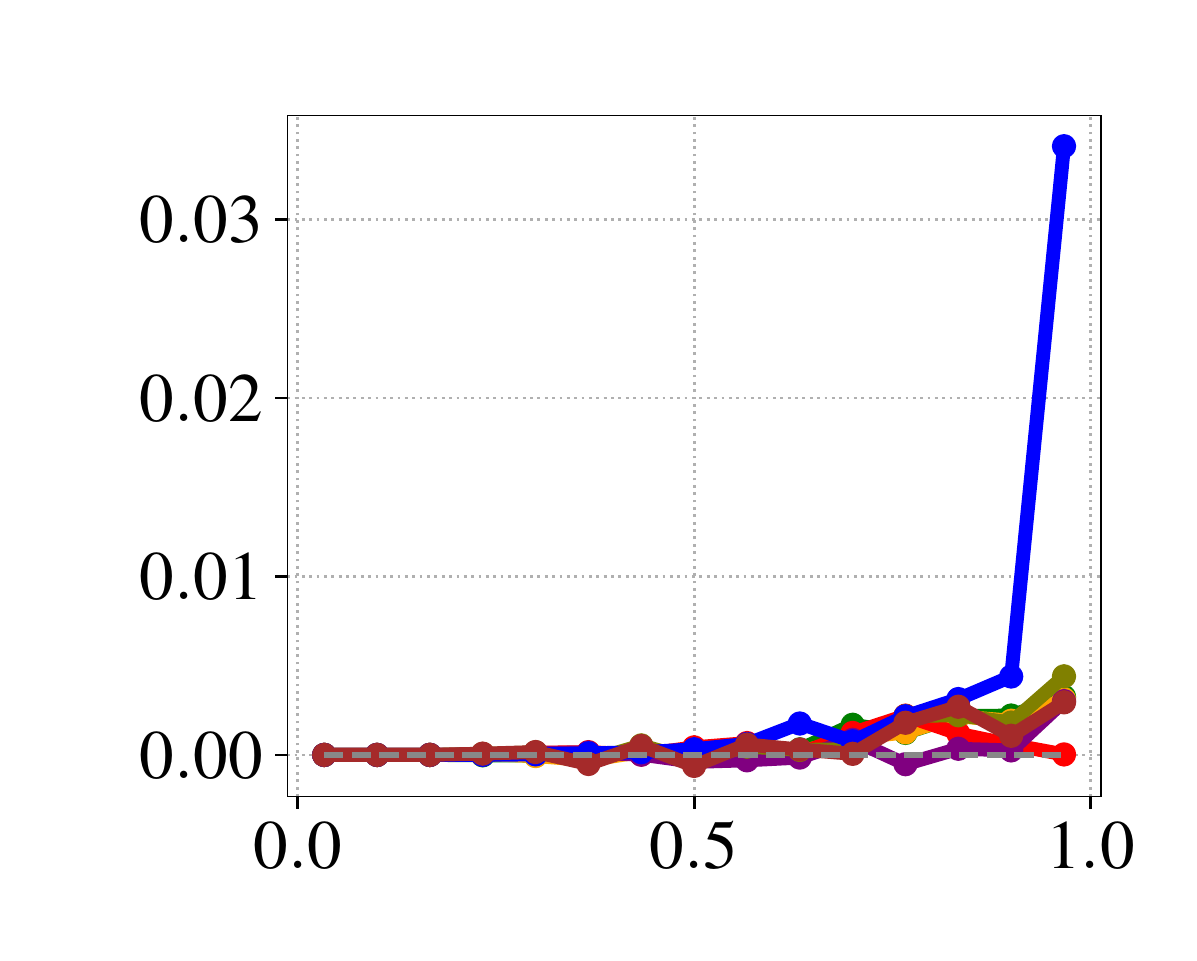}
    \includegraphics[width=\textwidth,trim=25 25 10 10, clip]
    {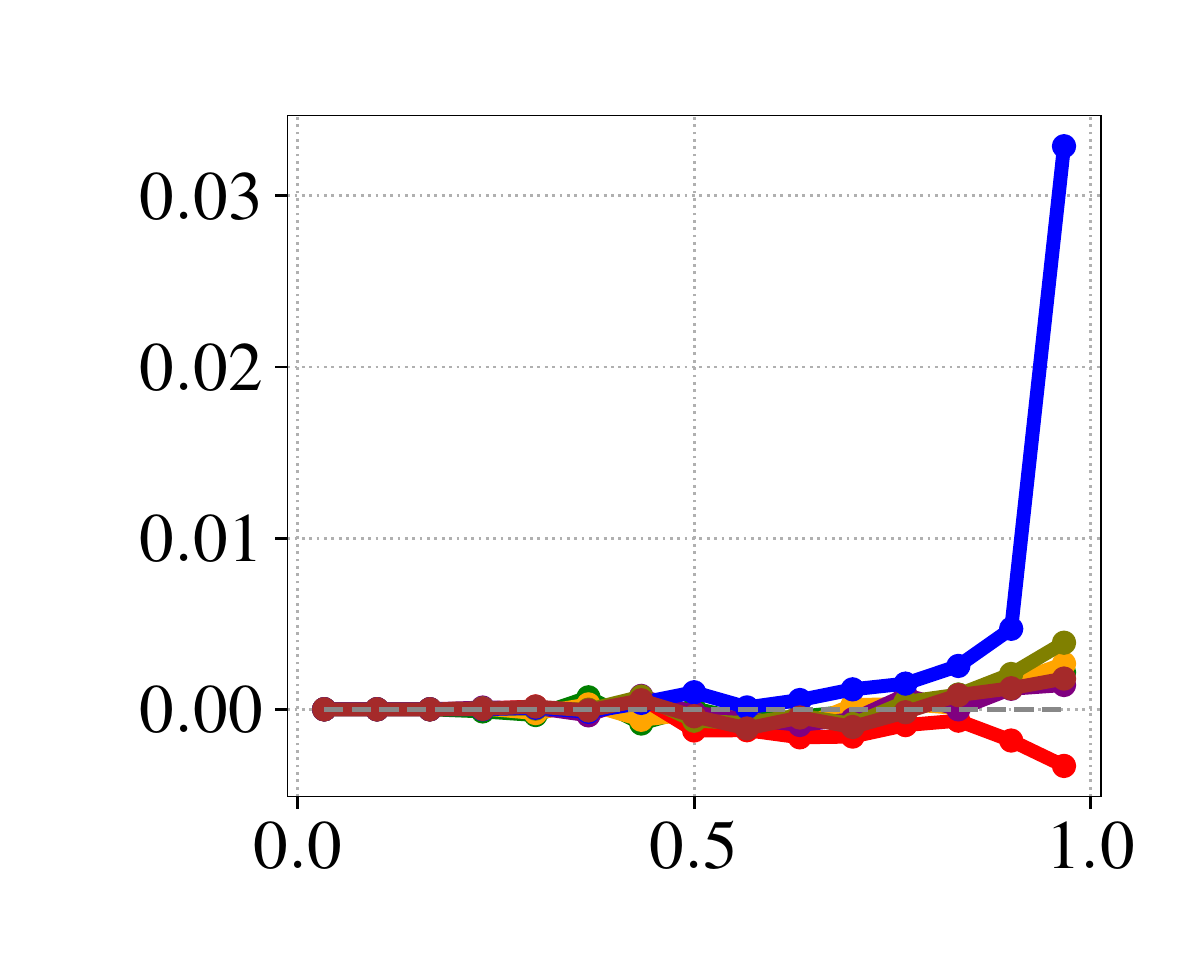}
    \includegraphics[width=\textwidth,trim=25 25 10 10, clip]
    {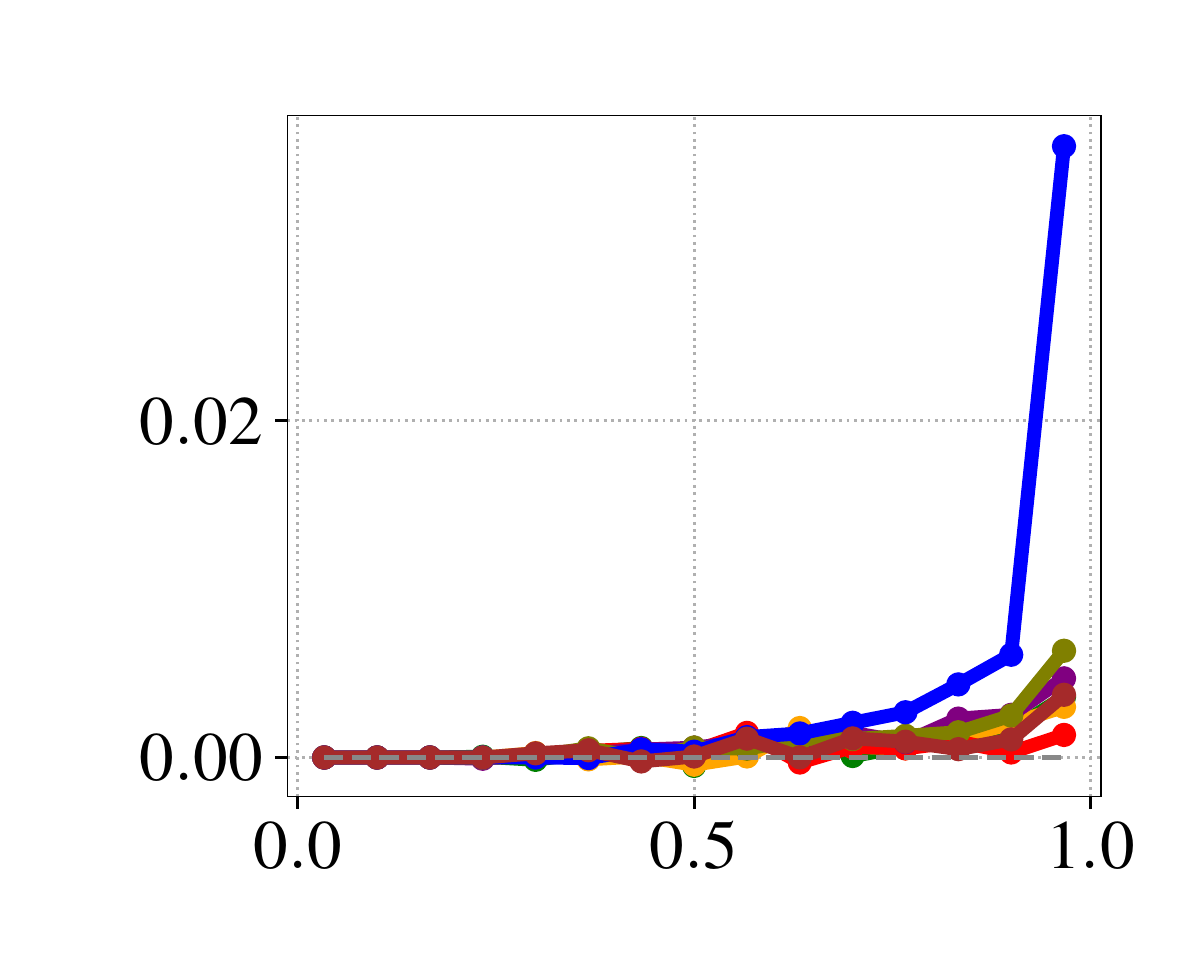}
    \includegraphics[width=\textwidth,trim=25 25 10 10, clip]
    {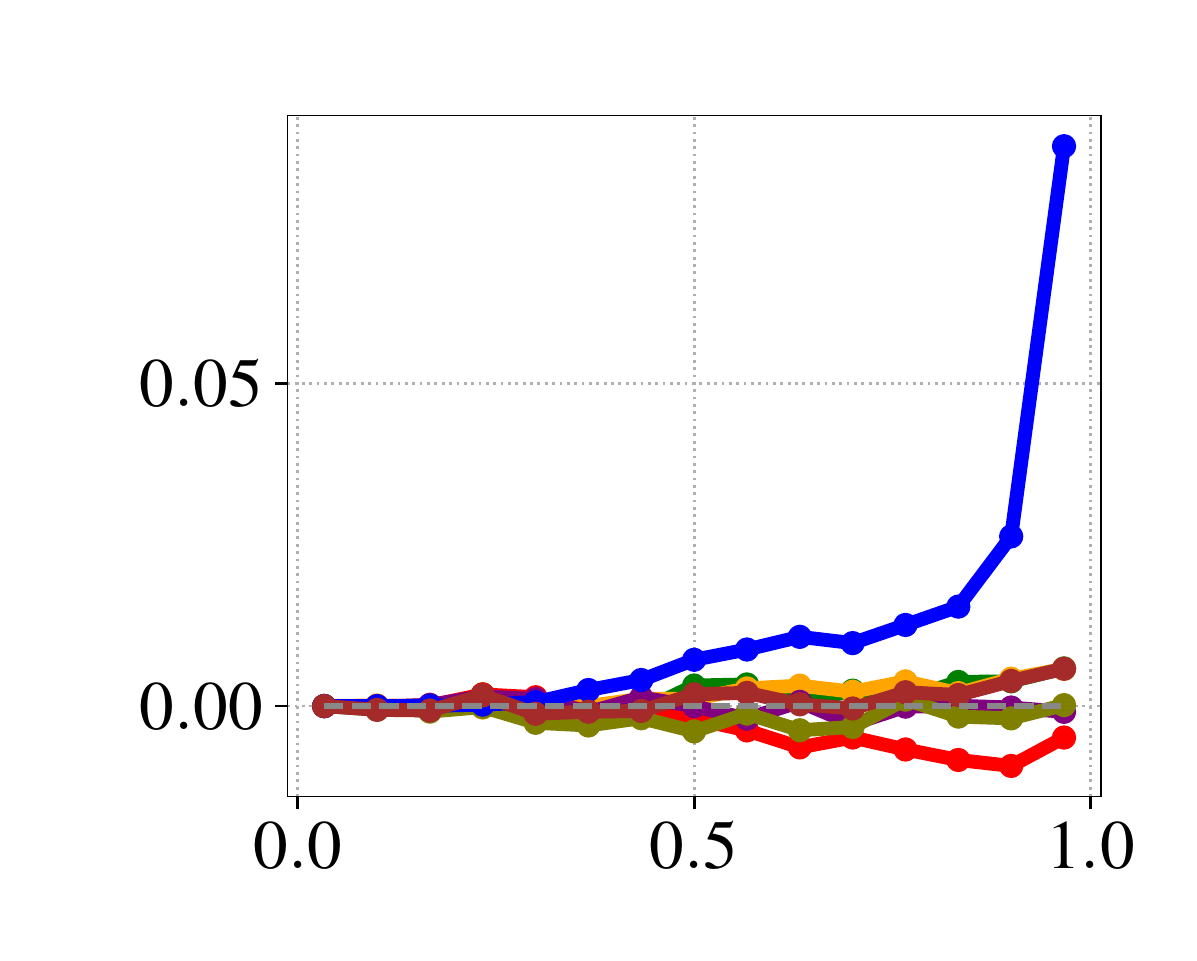}
    \includegraphics[width=\textwidth,trim=25 25 10 10, clip]
    {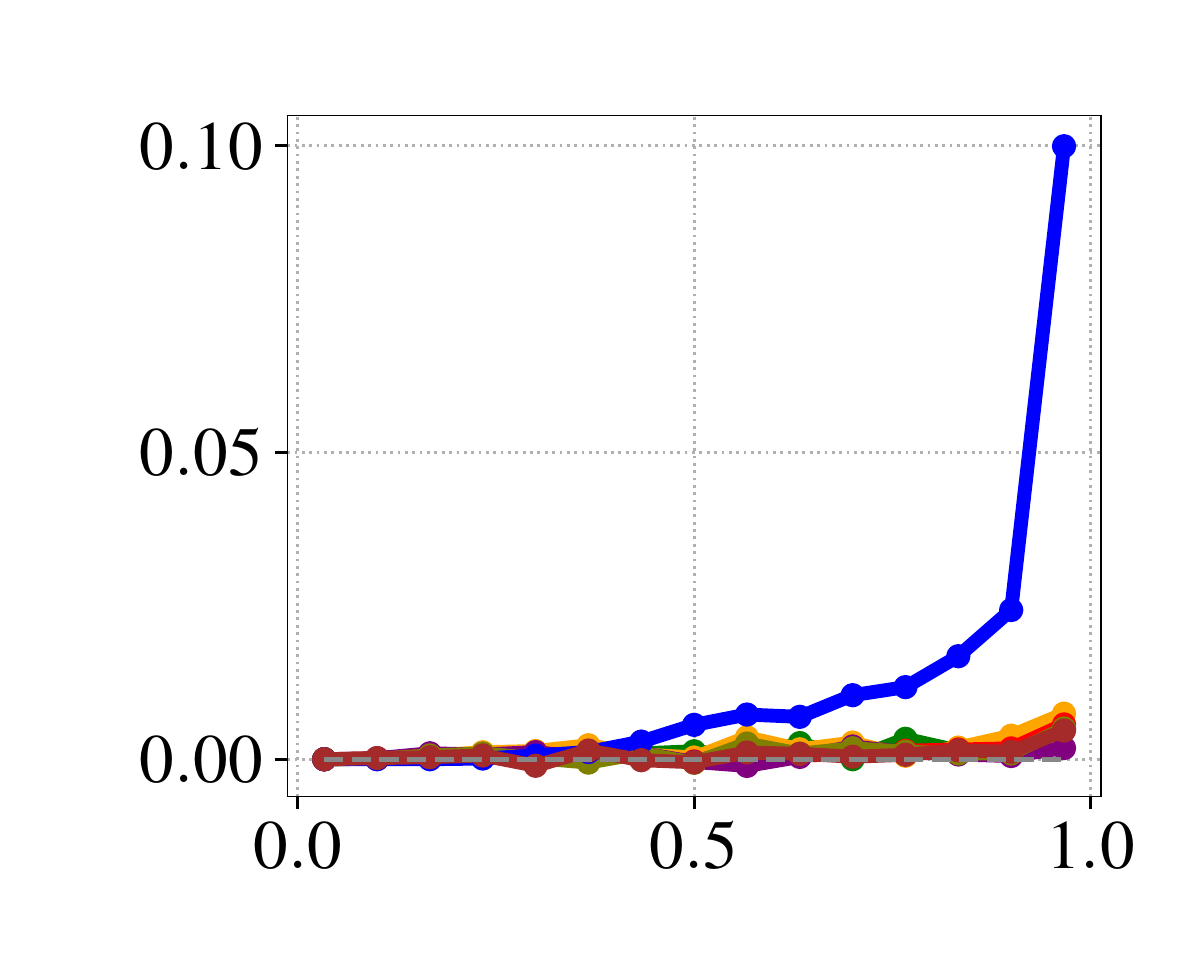}
    \includegraphics[width=\textwidth,trim=25 25 10 10, clip]
    {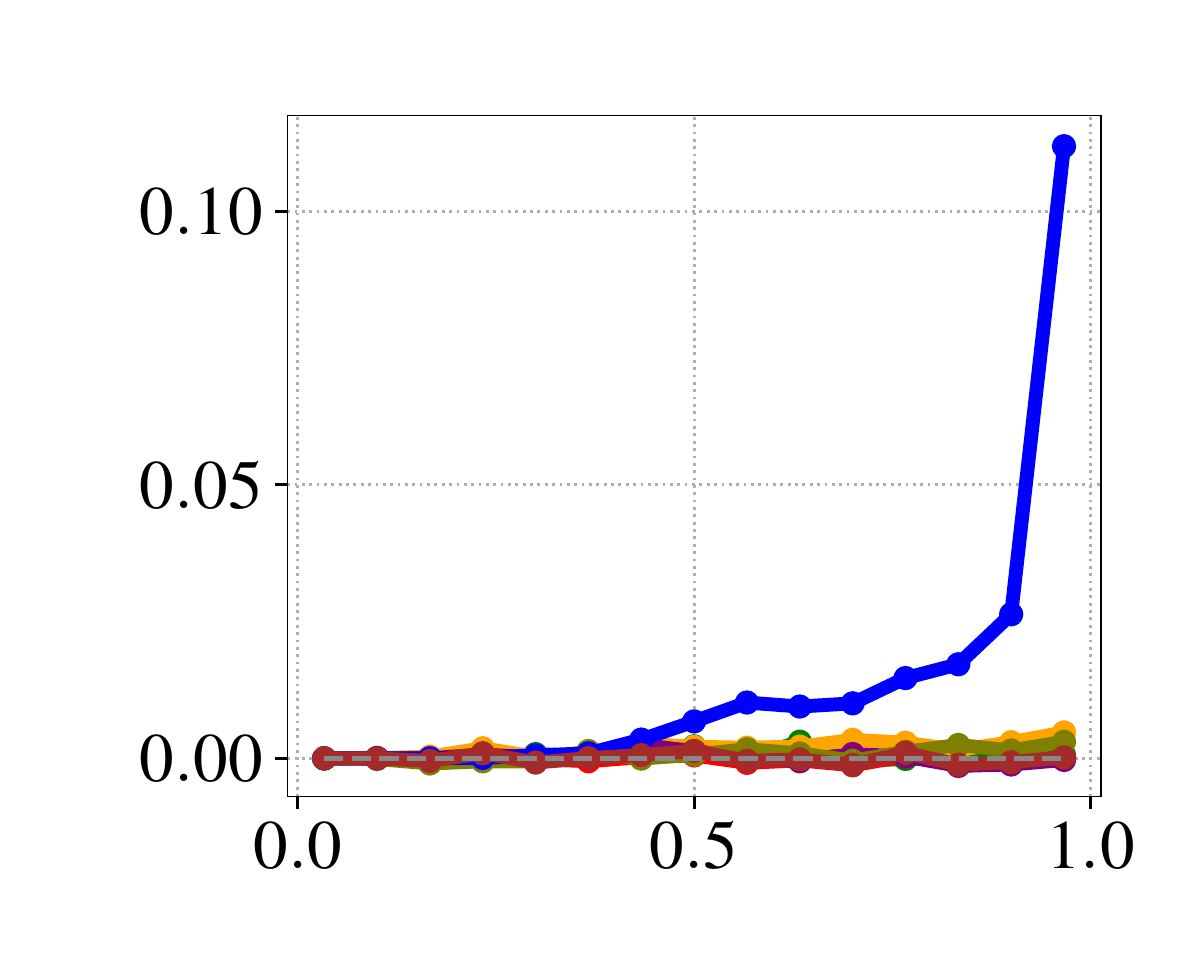}
    \caption{Weighted Reliability Diagram}
\end{subfigure}
\begin{subfigure}[t]{0.30\textwidth}
    \includegraphics[width=\textwidth,trim=25 25 10 10, clip]
    {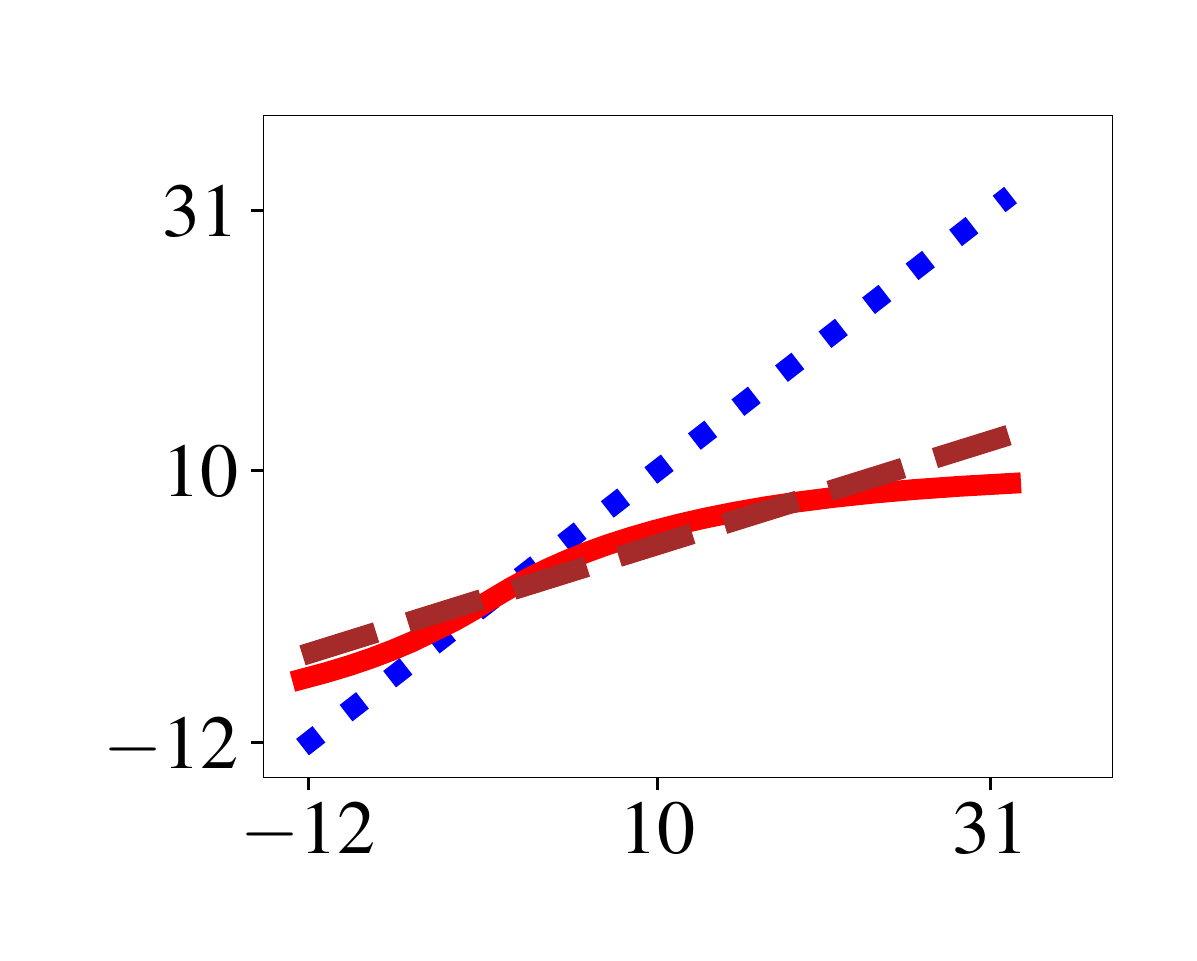}
    \includegraphics[width=\textwidth,trim=25 25 10 10, clip]
    {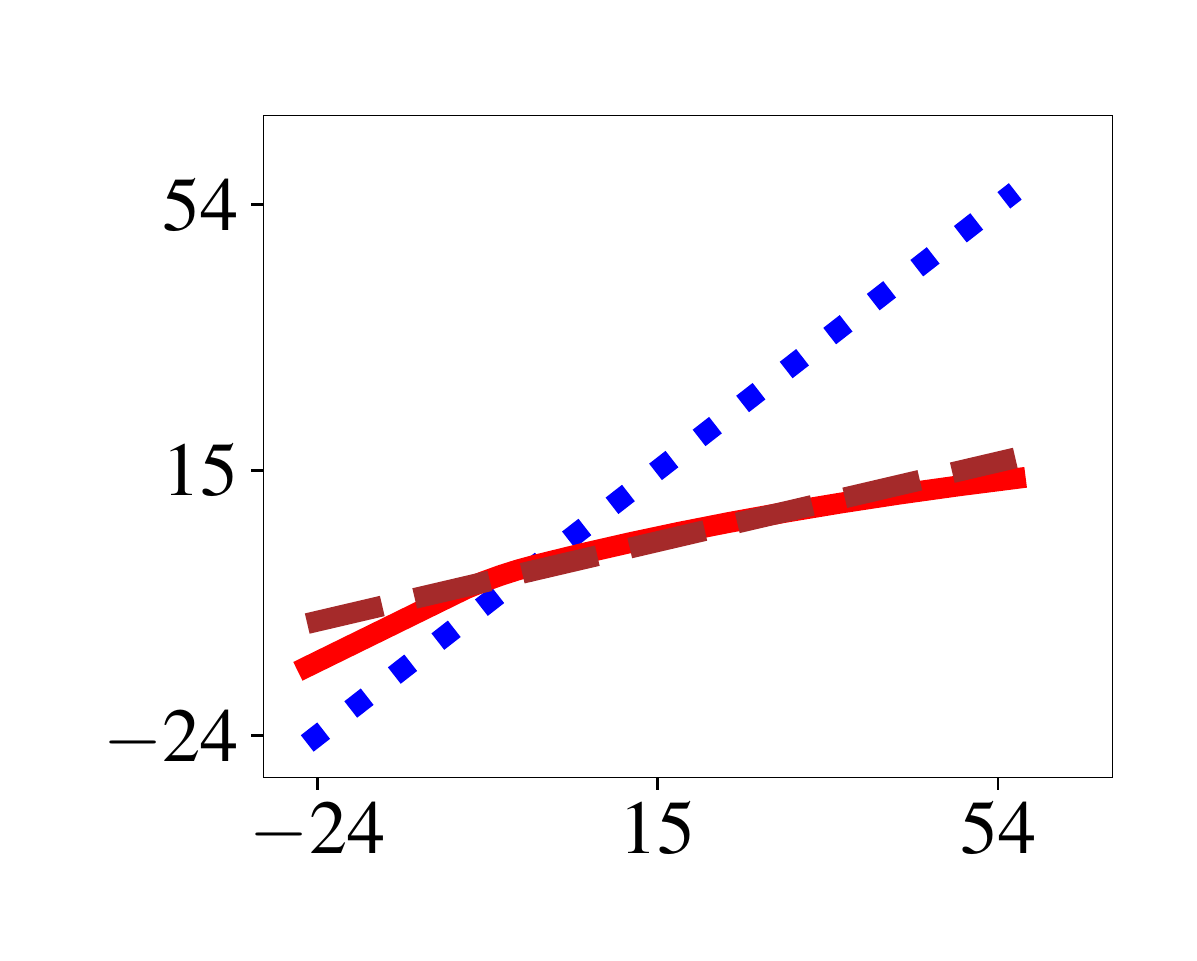}
    \includegraphics[width=\textwidth,trim=25 25 10 10, clip]
    {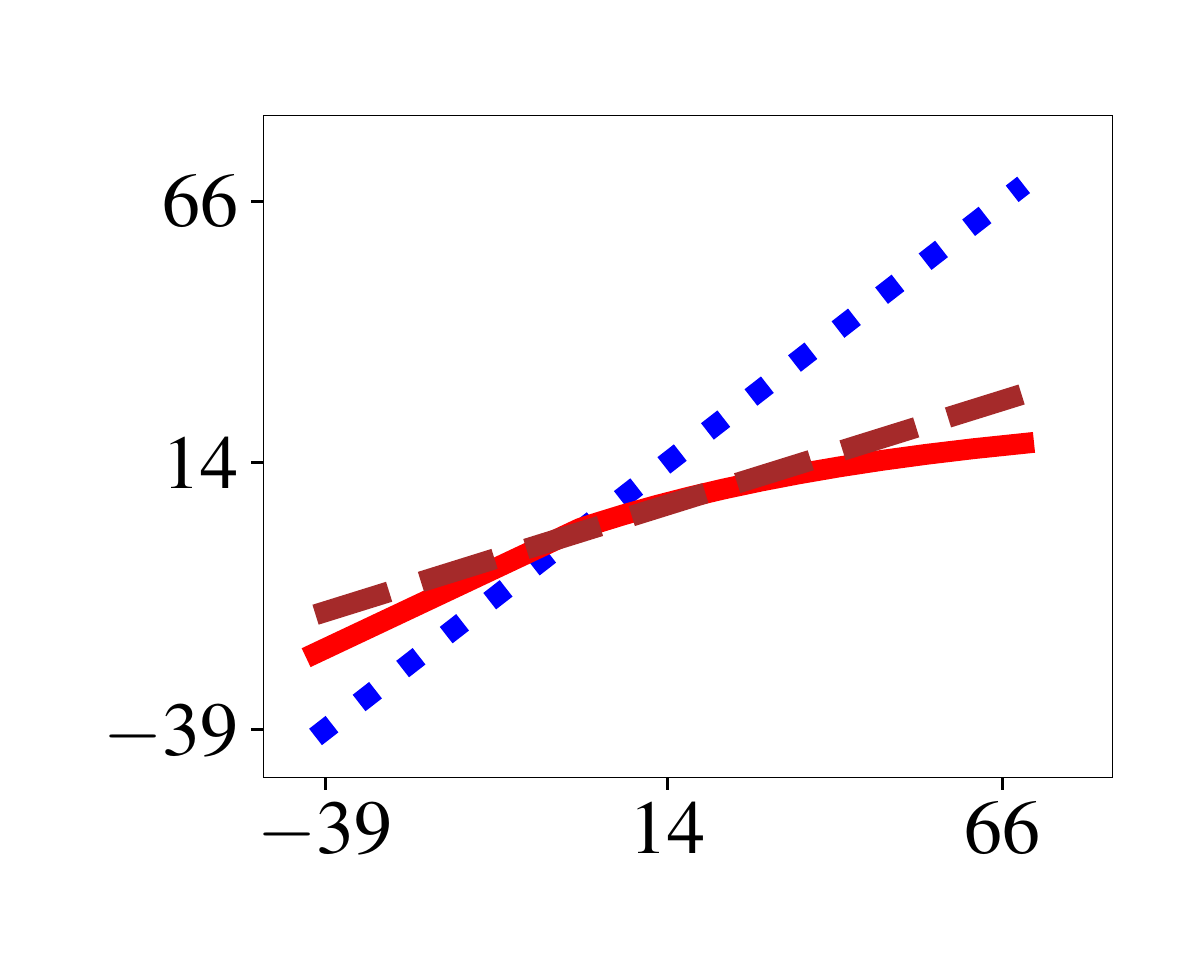}
    \includegraphics[width=\textwidth,trim=25 25 10 10, clip]
    {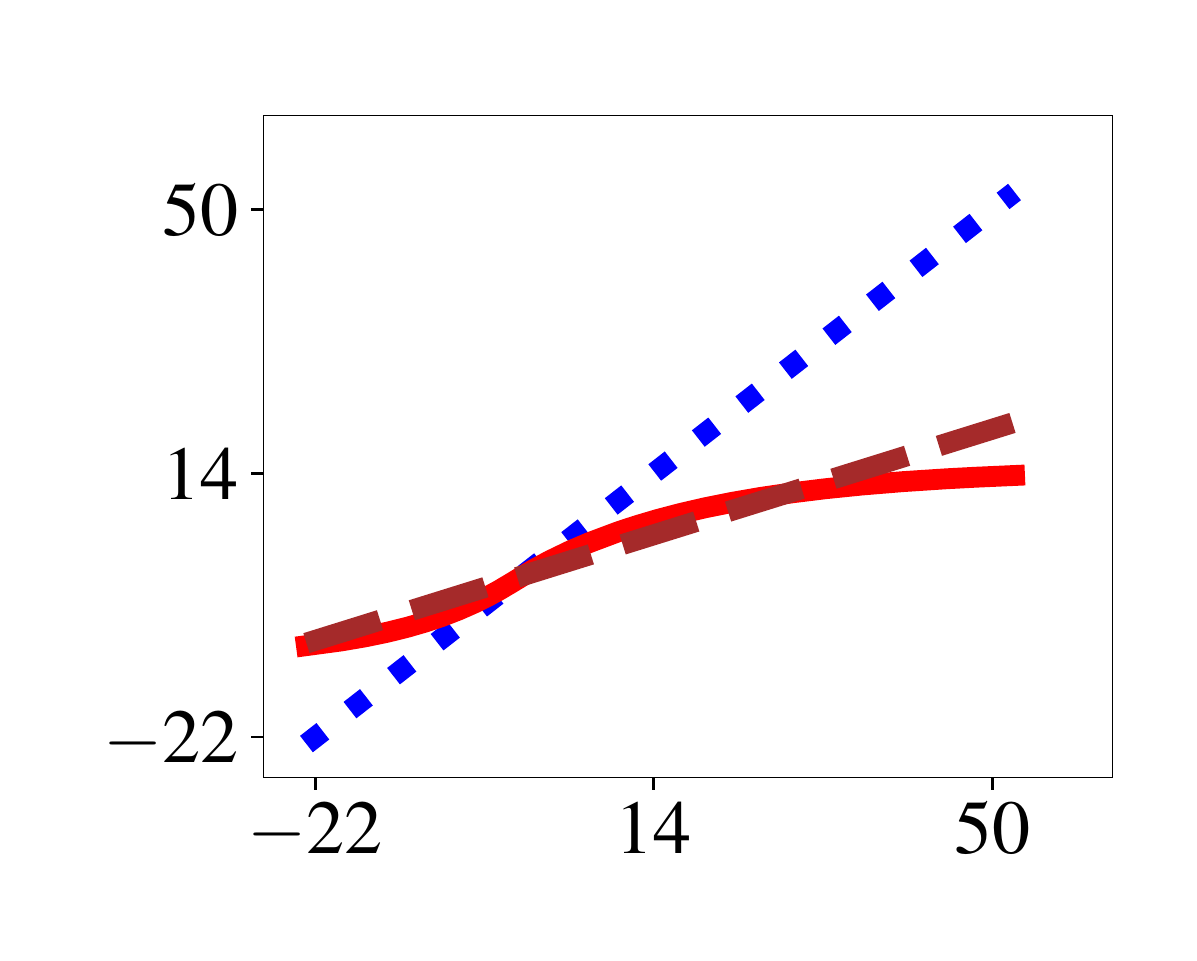}
    \includegraphics[width=\textwidth,trim=25 25 10 10, clip]
    {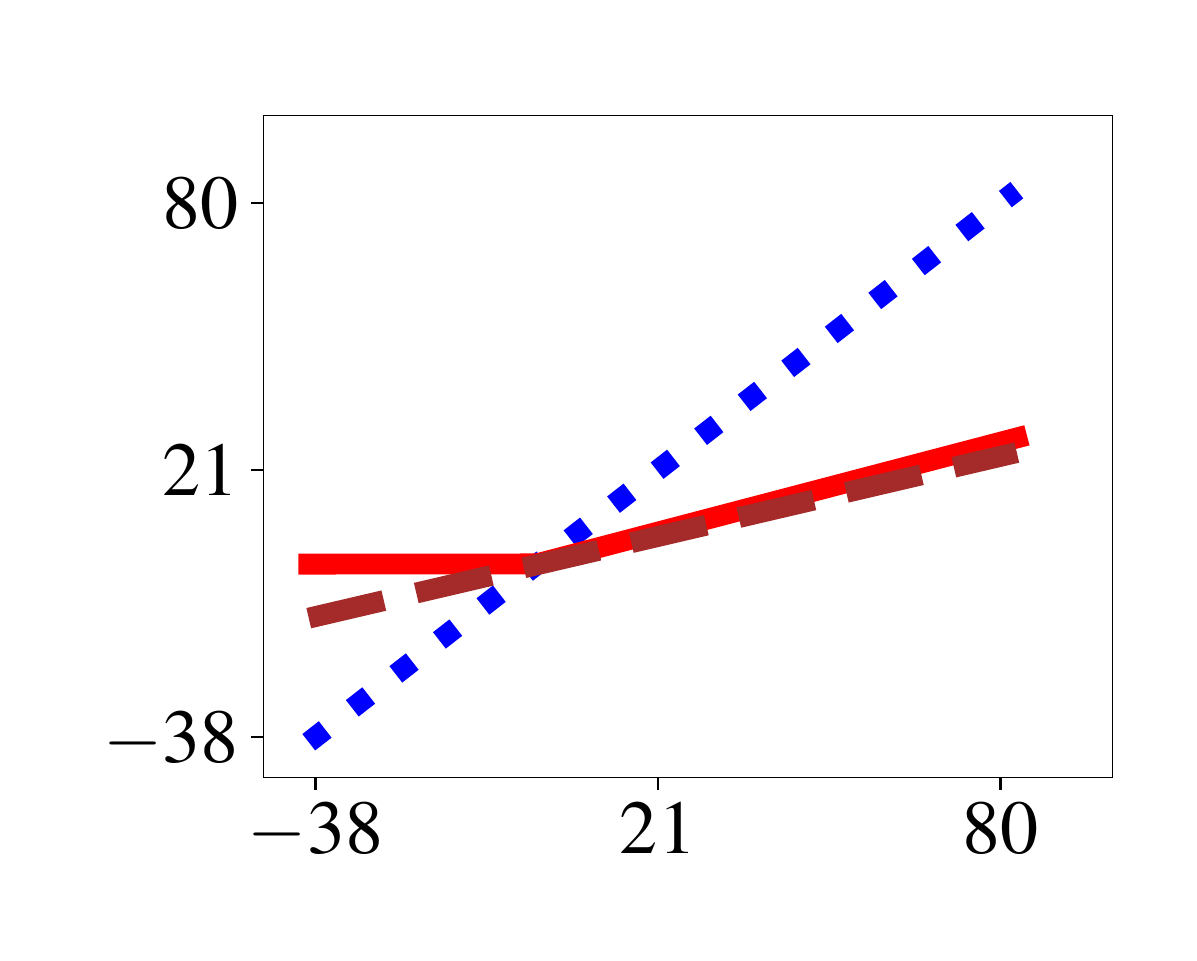}
    \includegraphics[width=\textwidth,trim=25 25 10 10, clip]
    {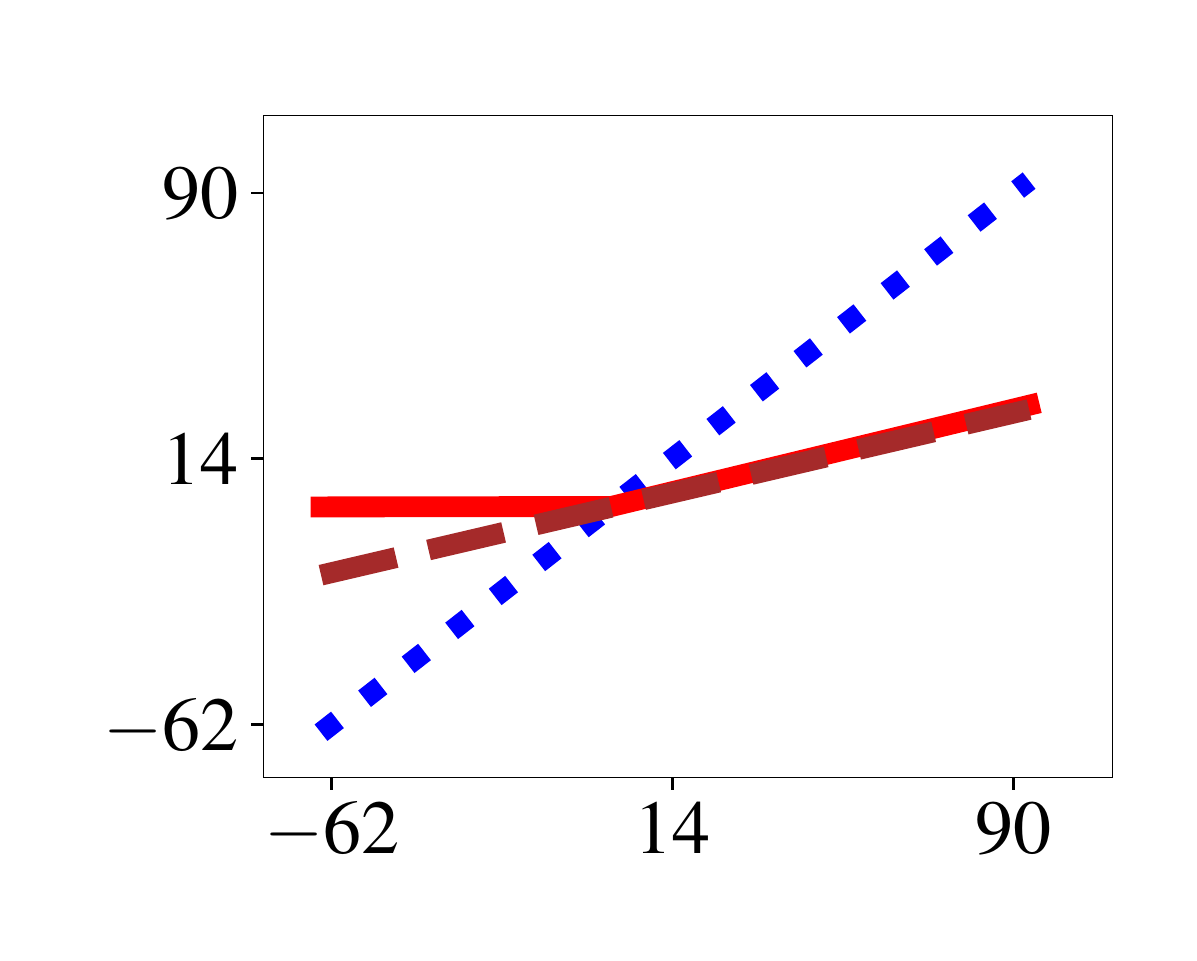}
    \caption{Diagonal Functions}
\end{subfigure}
\includegraphics[width=0.8\linewidth,trim=80 0 335 0, clip]{figures/prop_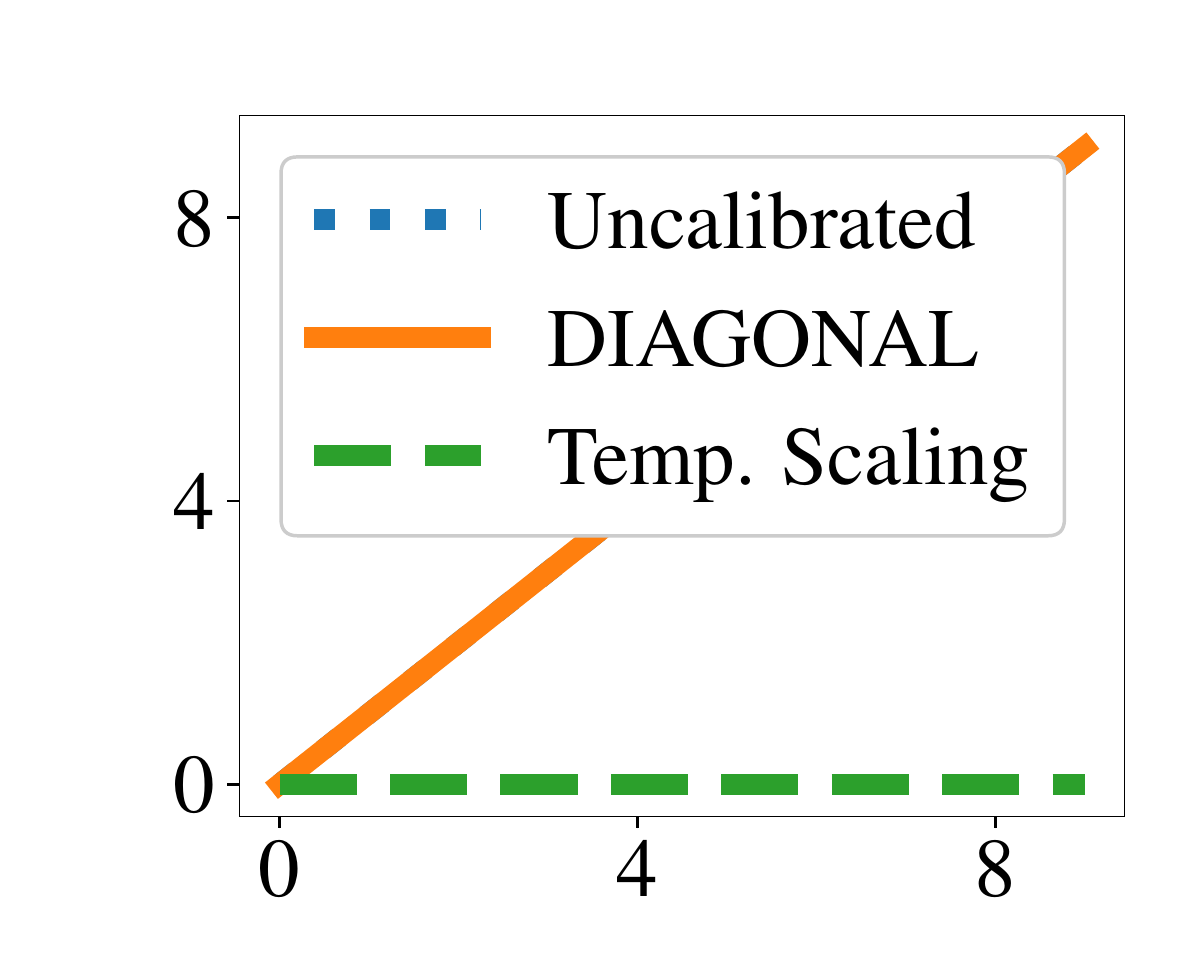}
\caption{\label{fig:reliability} Reliability diagrams and learned diagonal functions. See \cref{fig:transformations} for the explanation of each diagram and axis.}
\end{figure}
\begin{figure}
\centering
\begin{subfigure}[t]{\dimexpr0.30\textwidth+20pt\relax}
    \makebox[20pt]{\raisebox{40pt}{\rotatebox[origin=c]{90}{\tiny CARS, ResNet 50 (pre)}}}%
    \includegraphics[width=\dimexpr\linewidth-20pt\relax,trim=25 25 10 10, clip]
    {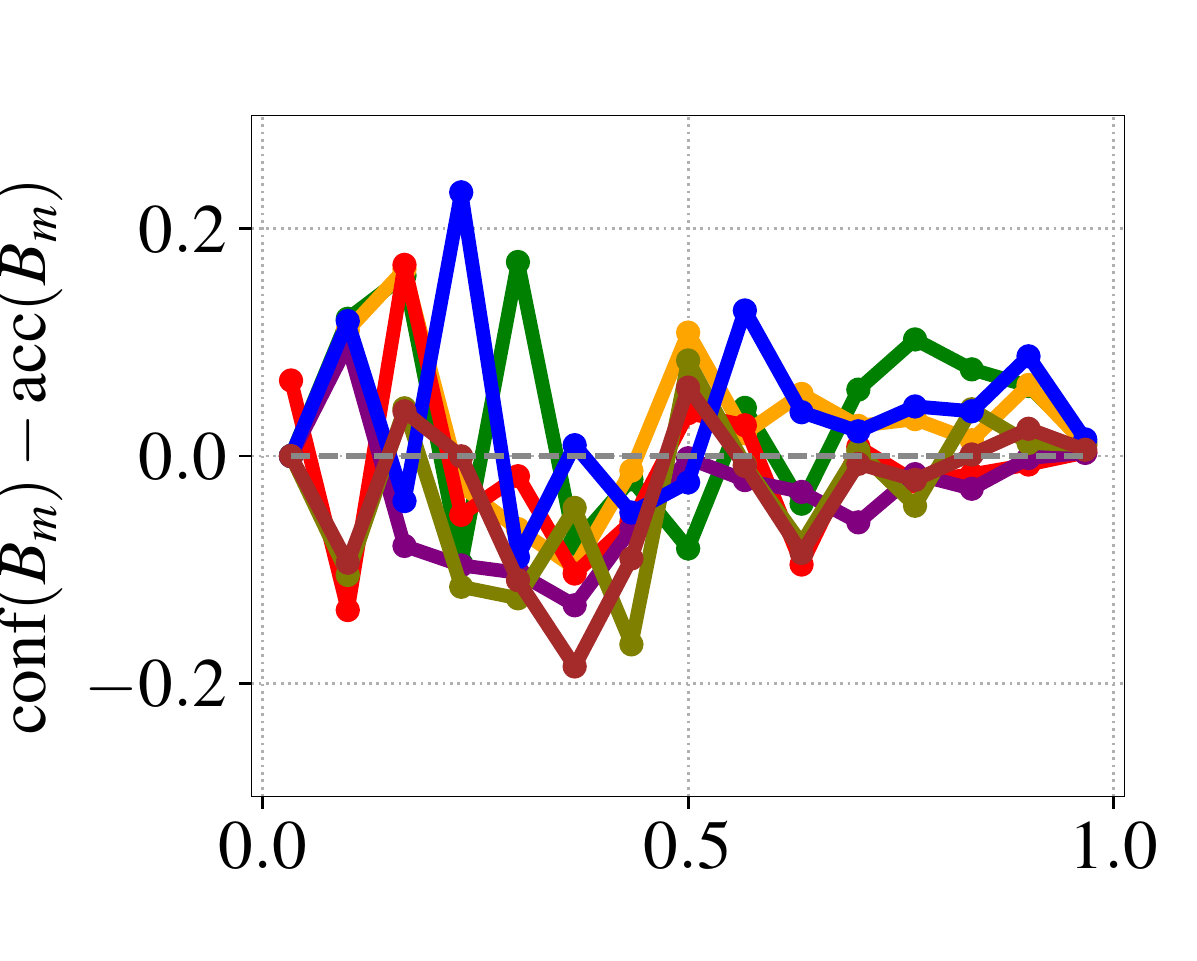}
    \makebox[20pt]{\raisebox{40pt}{\rotatebox[origin=c]{90}{\tiny CARS, ResNet 101 (pre)}}}%
    \includegraphics[width=\dimexpr\linewidth-20pt\relax,trim=25 25 10 10, clip]
    {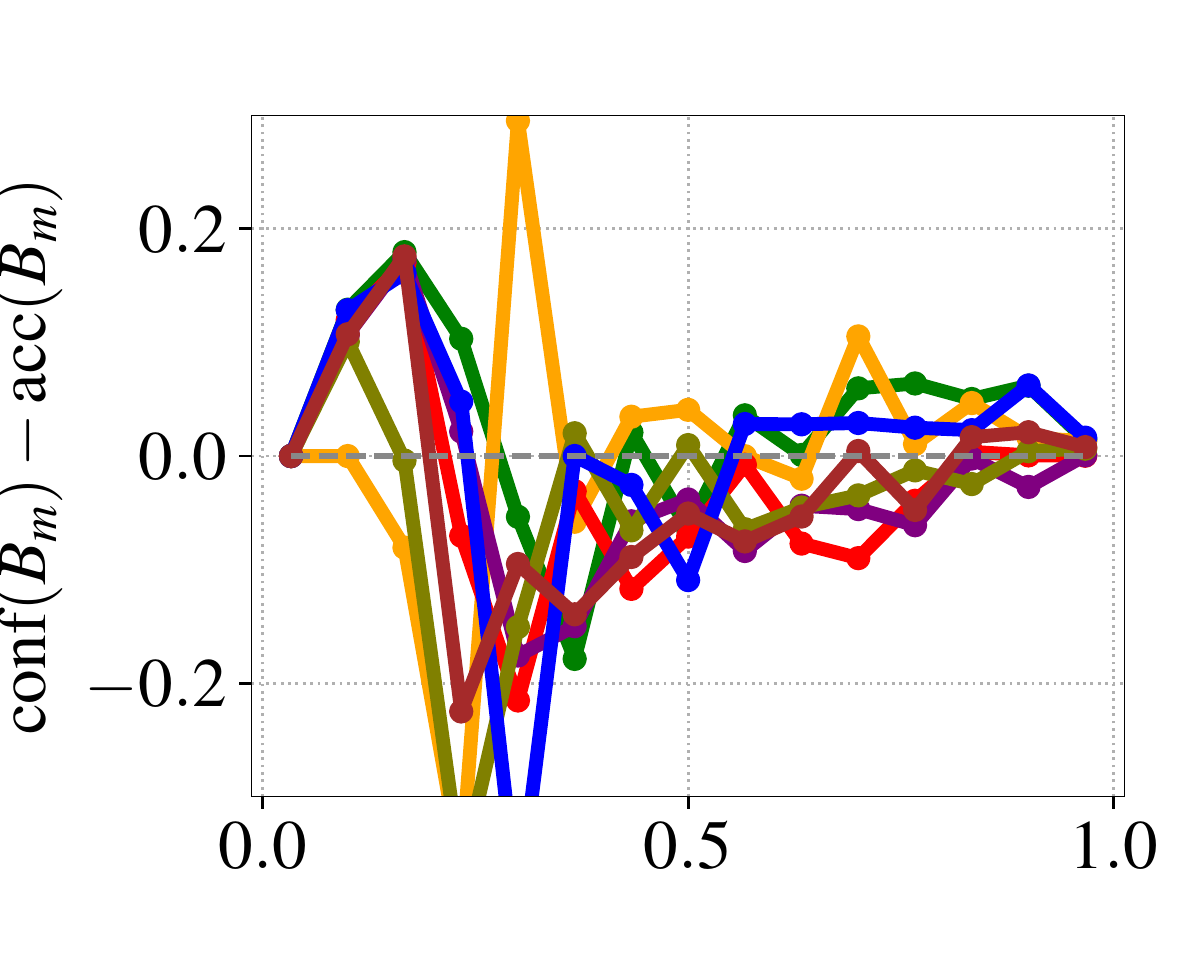}
    \makebox[20pt]{\raisebox{40pt}{\rotatebox[origin=c]{90}{\tiny CARS, ResNet 101}}}%
    \includegraphics[width=\dimexpr\linewidth-20pt\relax,trim=25 25 10 10, clip]
    {./figures/appendix/rel_resnet101_cars}
    \makebox[20pt]{\raisebox{40pt}{\rotatebox[origin=c]{90}{\tiny SVHN, ResNet 152 (SD)}}}%
    \includegraphics[width=\dimexpr\linewidth-20pt\relax,trim=25 25 10 10, clip]
    {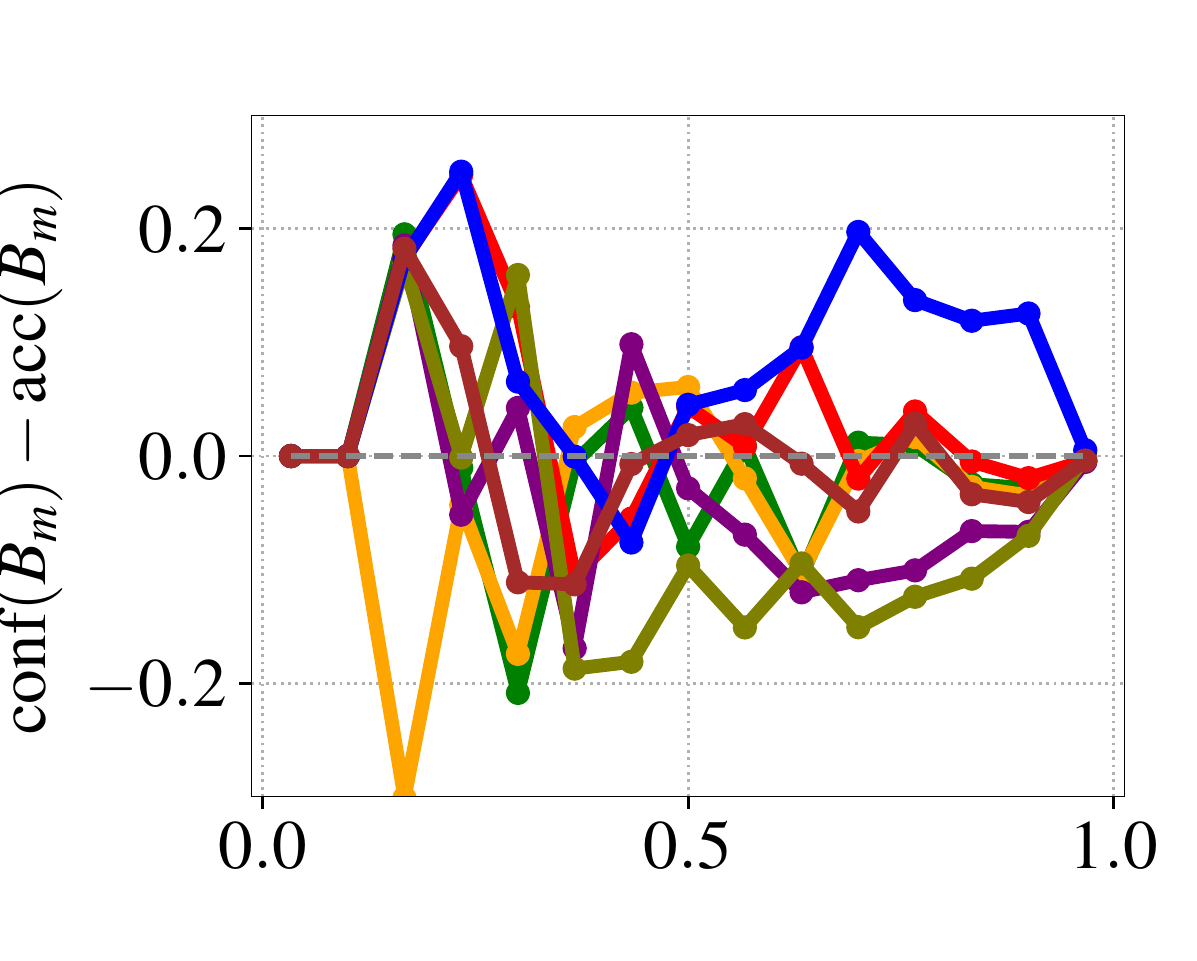}
    \makebox[20pt]{\raisebox{40pt}{\rotatebox[origin=c]{90}{\tiny BIRDS, ResNet 50 (NTS)}}}%
    \includegraphics[width=\dimexpr\linewidth-20pt\relax,trim=25 25 10 10, clip]
    {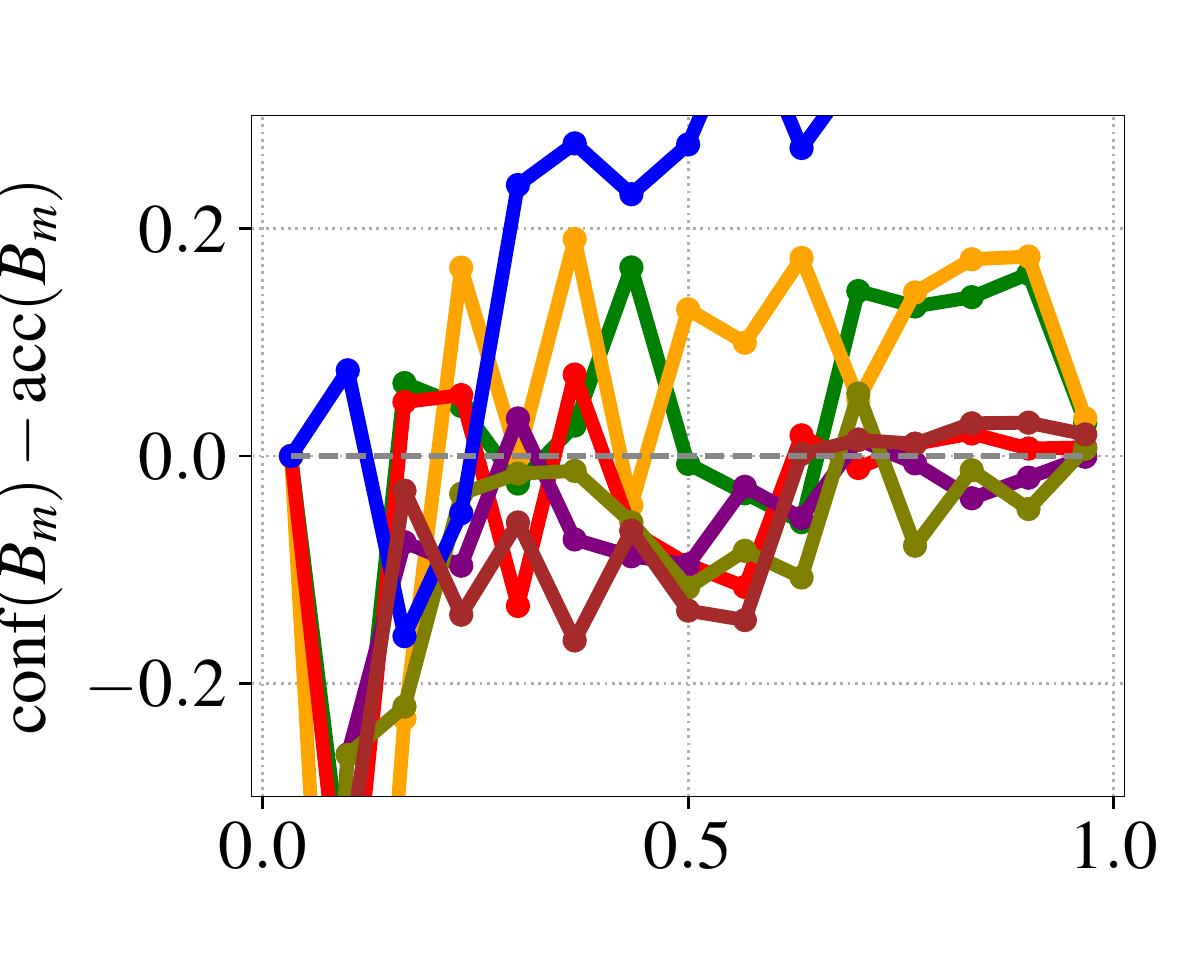}
    \makebox[20pt]{\raisebox{40pt}{\rotatebox[origin=c]{90}{\tiny ImageNet, DenseNet 161}}}%
    \includegraphics[width=\dimexpr\linewidth-20pt\relax,trim=25 25 10 10, clip]
    {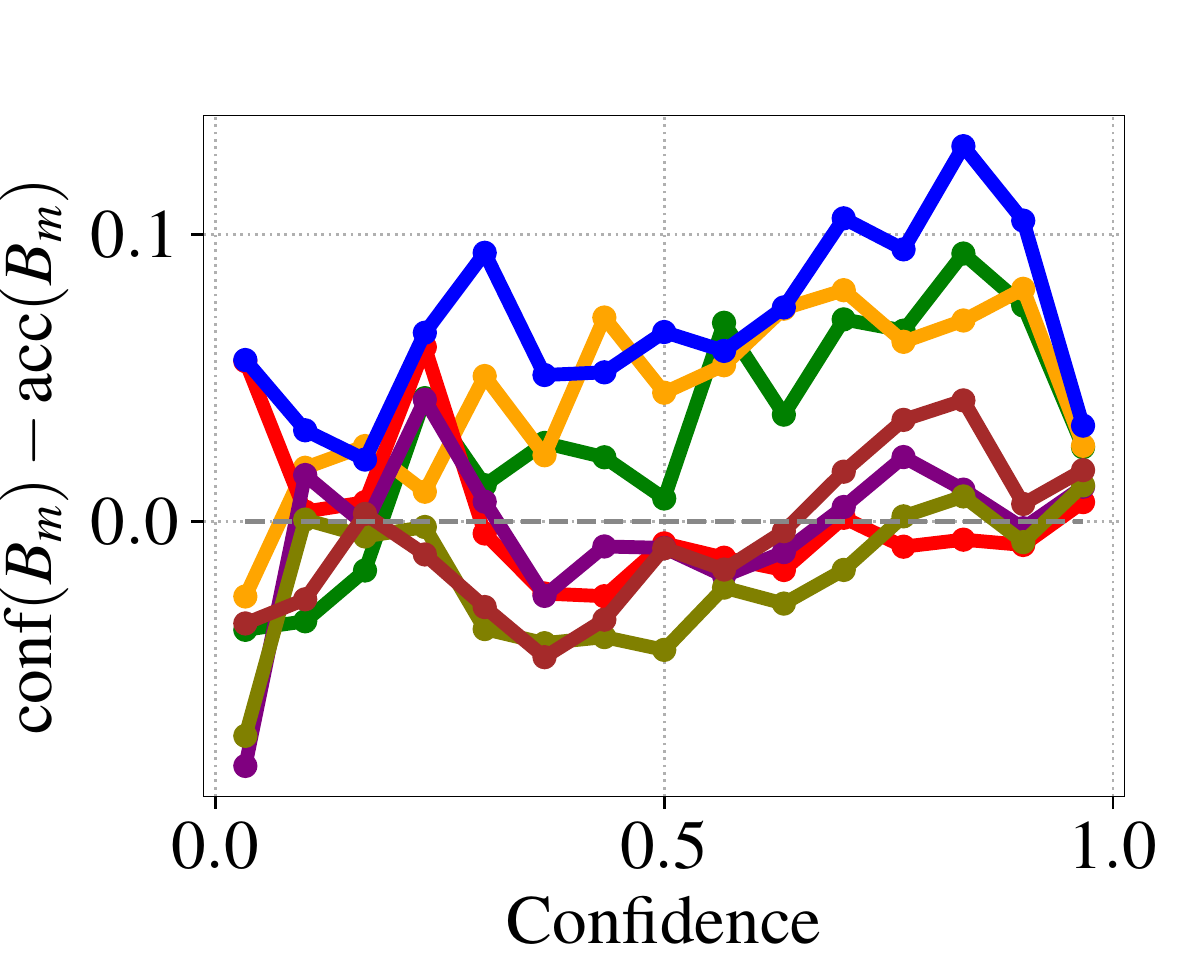}
    \caption{Reliability Diagram}
\end{subfigure}
\begin{subfigure}[t]{0.30\textwidth}
    \includegraphics[width=\textwidth,trim=25 25 10 10, clip]
    {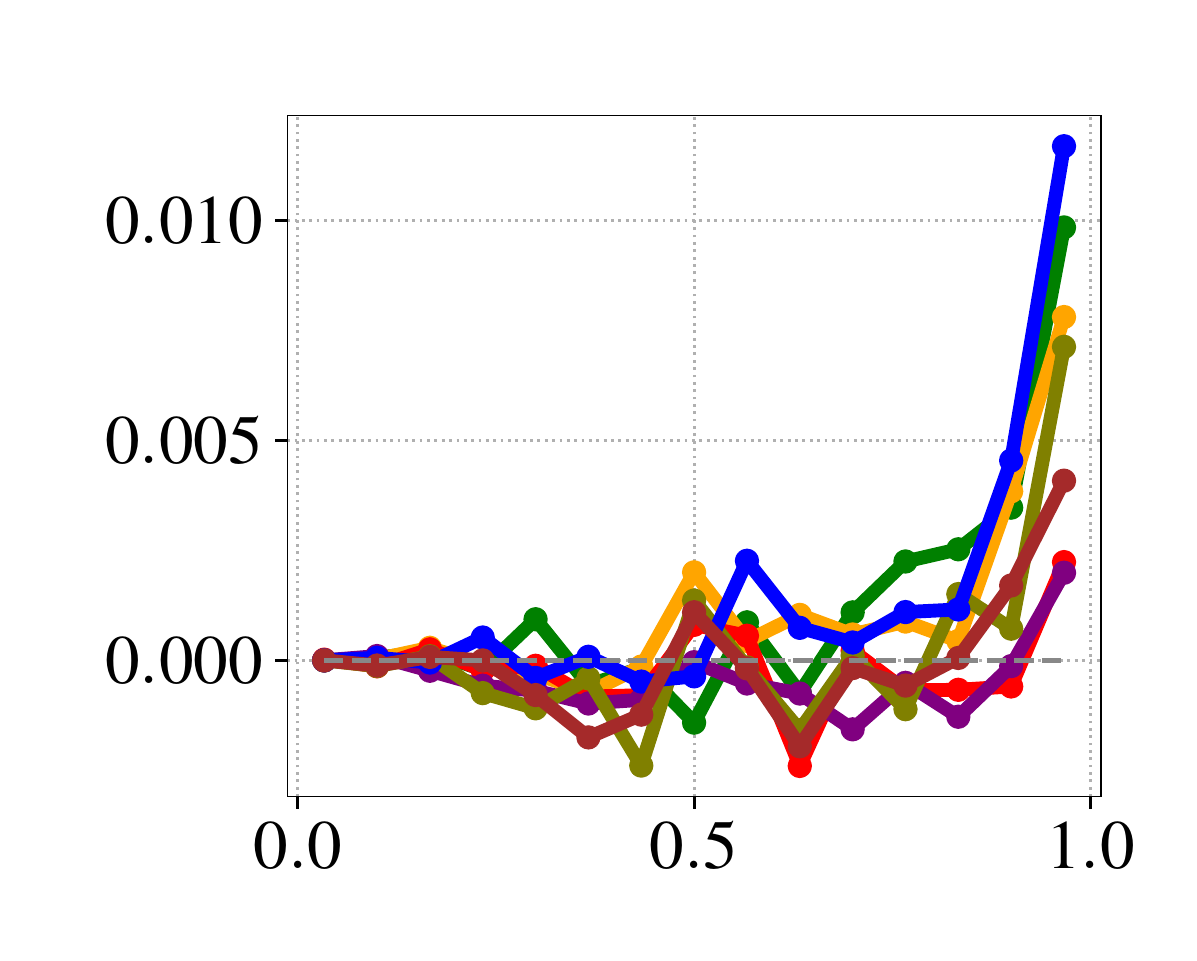}
    \includegraphics[width=\textwidth,trim=25 25 10 10, clip]
    {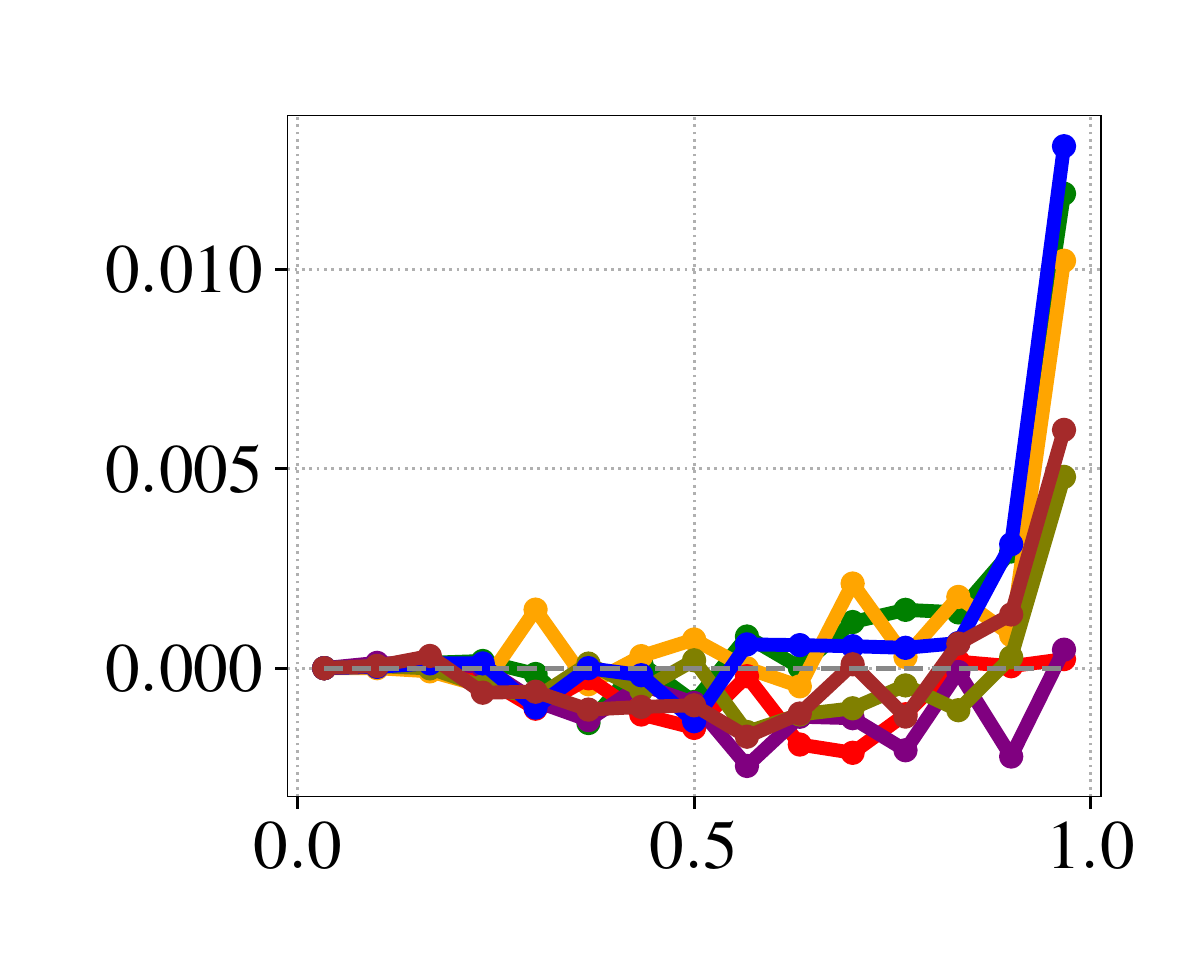}
    \includegraphics[width=\textwidth,trim=25 25 10 10, clip]
    {./figures/appendix/prop_resnet101_cars}
    \includegraphics[width=\textwidth,trim=25 25 10 10, clip]
    {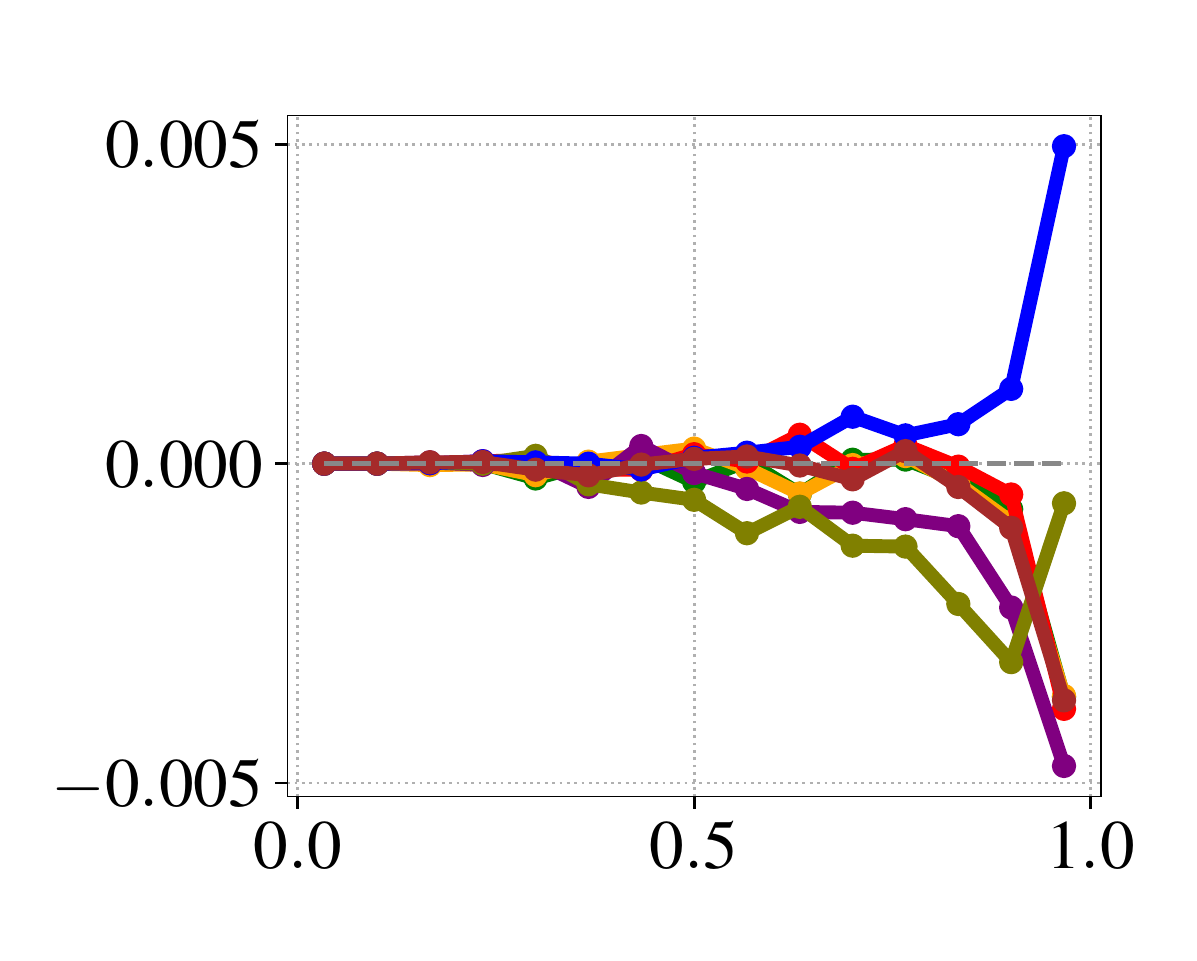}
    \includegraphics[width=\textwidth,trim=25 25 10 10, clip]
    {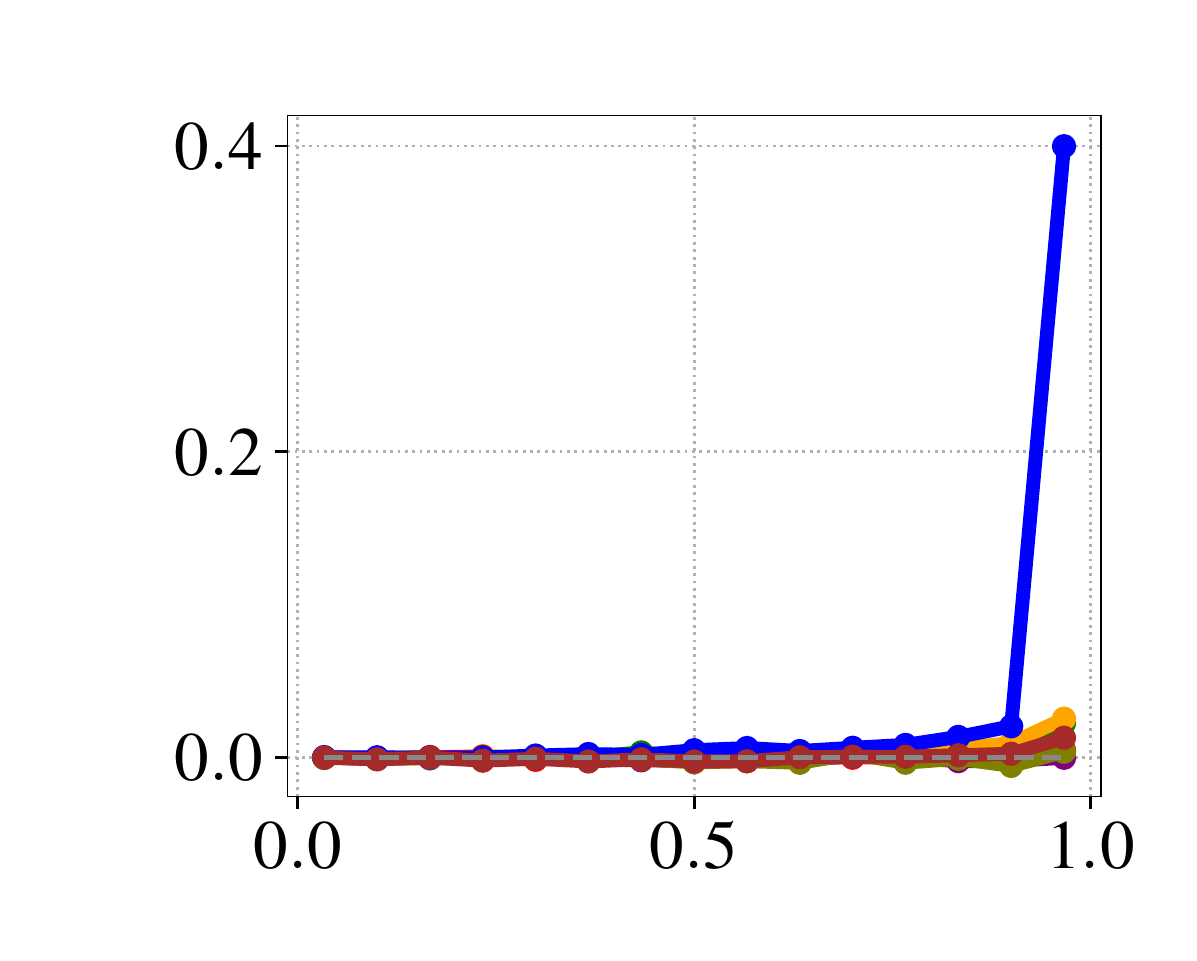}
    \includegraphics[width=\textwidth,trim=25 25 10 10, clip]
    {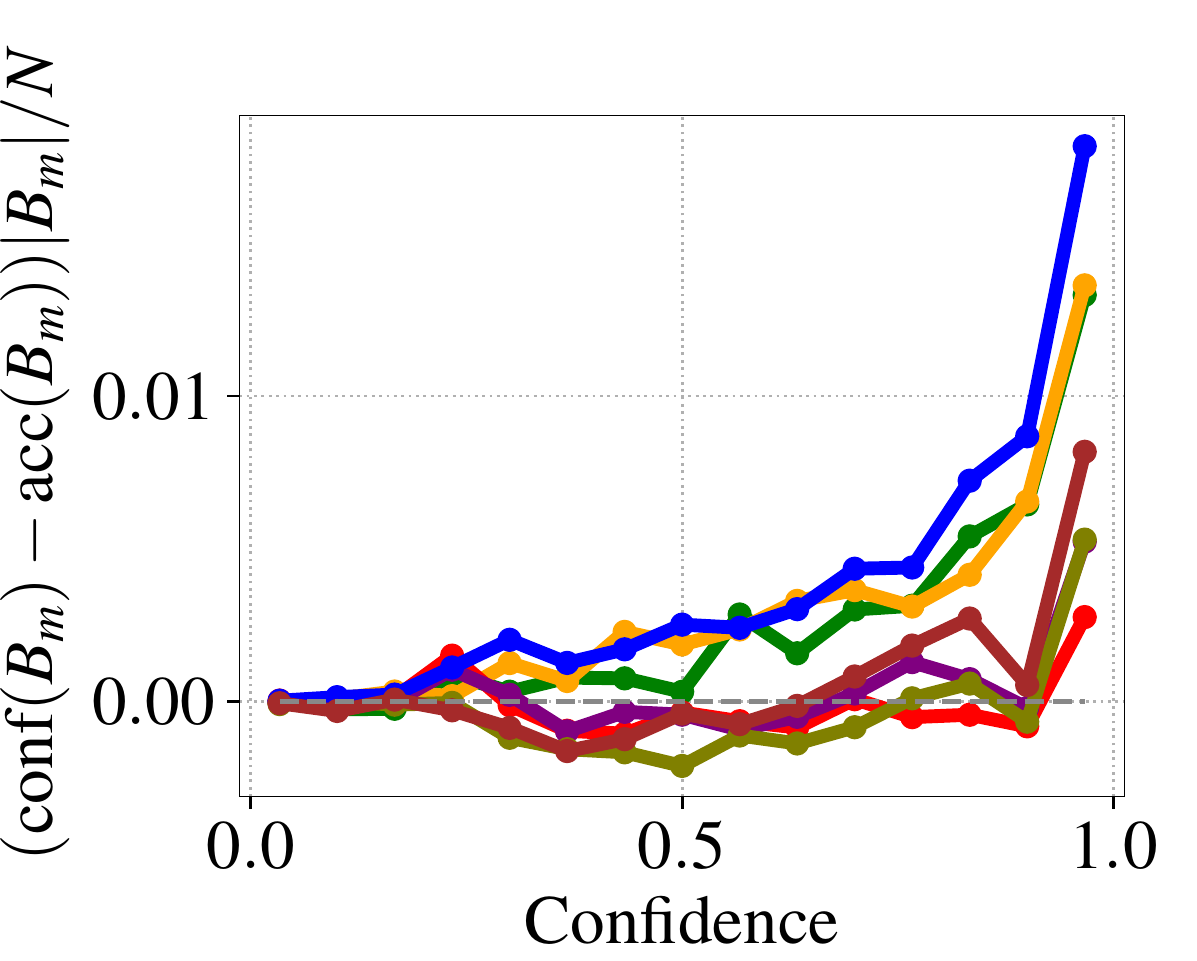}
    \caption{Weighted Reliability Diagram}
\end{subfigure}
\begin{subfigure}[t]{0.30\textwidth}
    \includegraphics[width=\textwidth,trim=25 25 10 10, clip]
    {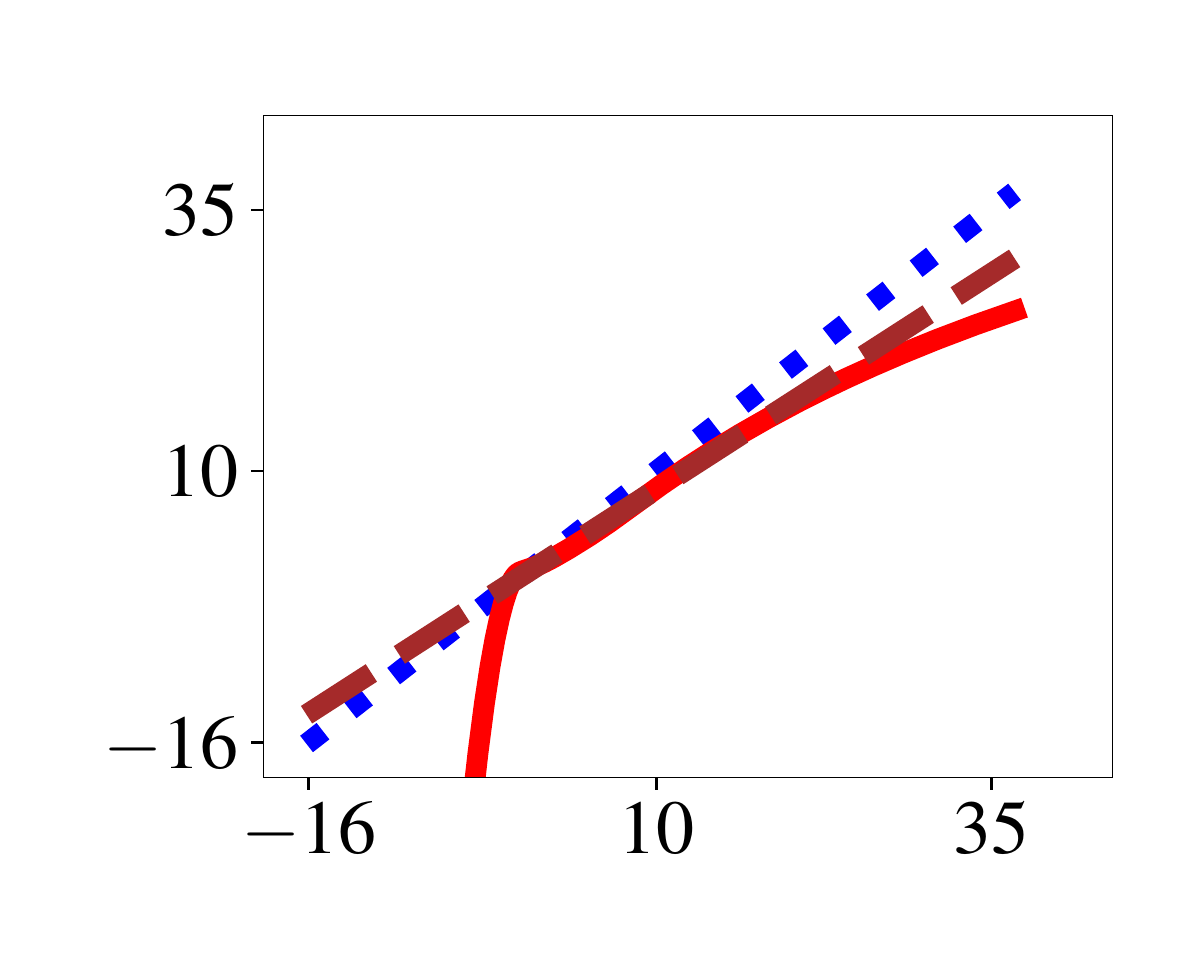}
    \includegraphics[width=\textwidth,trim=25 25 10 10, clip]
    {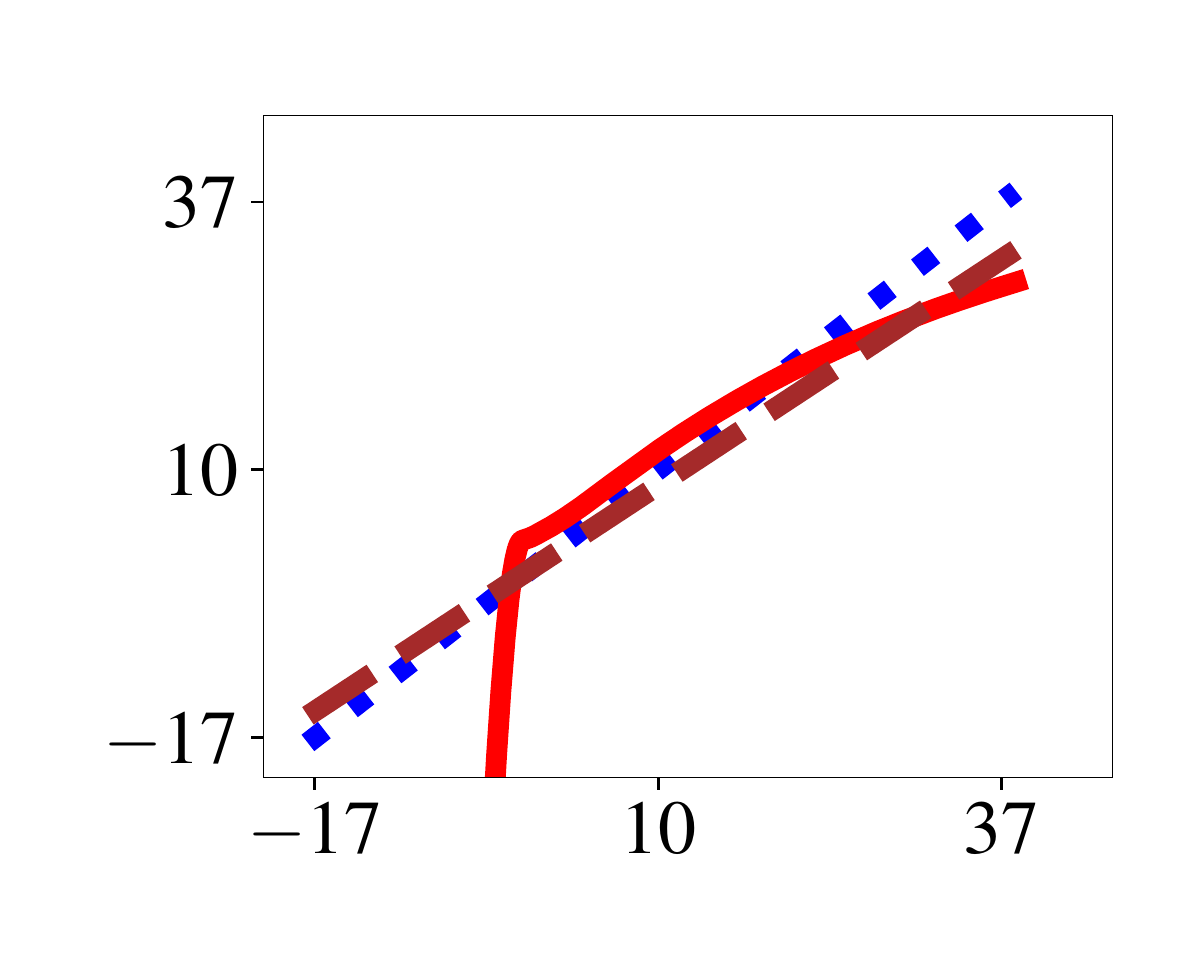}
    \includegraphics[width=\textwidth,trim=25 25 10 10, clip]
    {./figures/appendix/mono_resnet101_cars}
    \includegraphics[width=\textwidth,trim=25 25 10 10, clip]
    {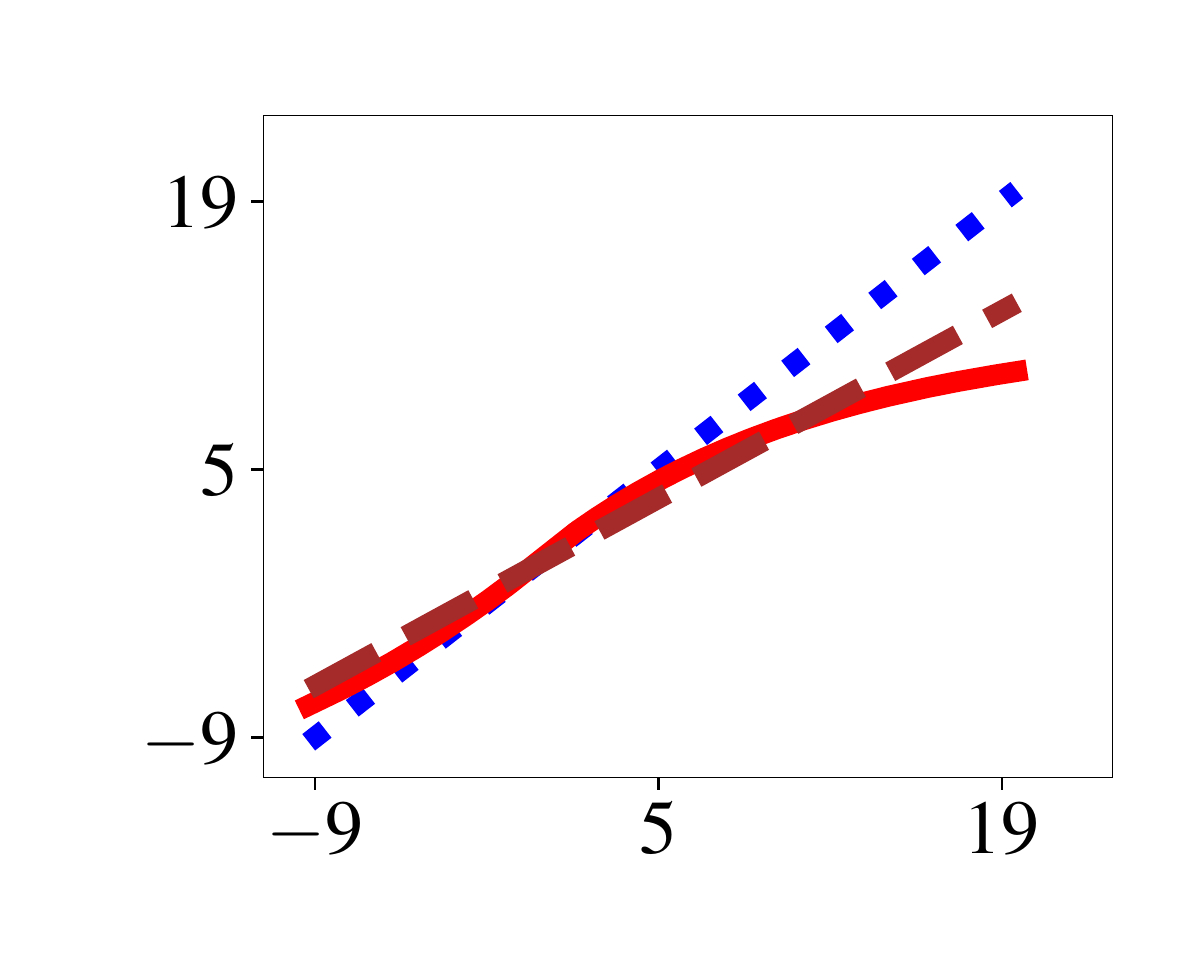}
    \includegraphics[width=\textwidth,trim=25 25 10 10, clip]
    {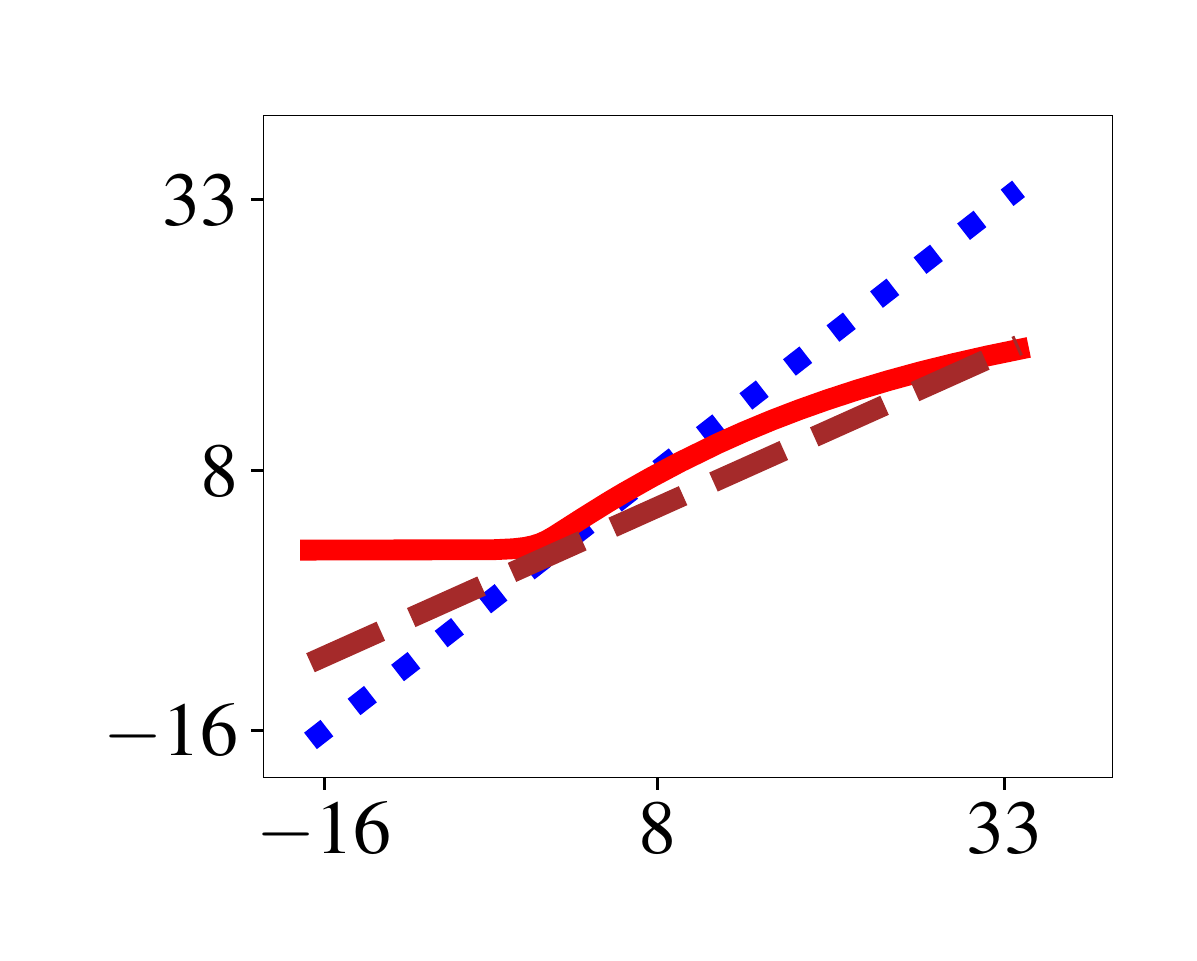}
    \includegraphics[width=\textwidth,trim=25 25 10 10, clip]
    {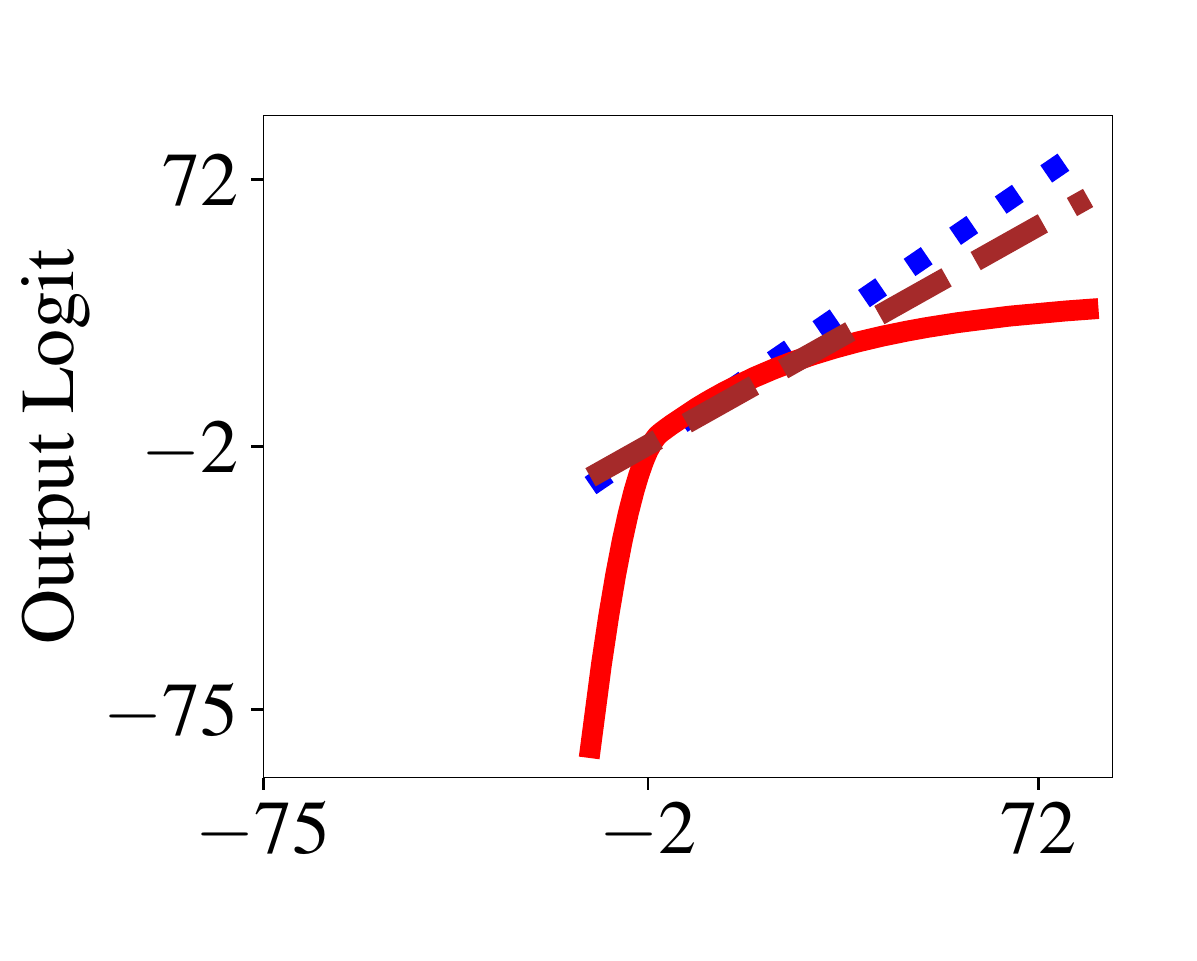}
    \caption{Diagonal Functions}
\end{subfigure}
\includegraphics[width=0.8\linewidth,trim=80 0 335 0, clip]{figures/prop_legend.pdf}
\caption{\label{fig:reliability2} Reliability diagrams and learned diagonal functions. See \cref{fig:transformations} for the explanation of each diagram and axis.}
\end{figure}

{\bf Calibration Set Size.} In this experiment, we gradually increase the calibration set size from $10\%$ to $100\%$ of its original size to create smaller calibration subsets. Then, for each calibration subset, we train different post-hoc calibration methods and measure their accuracy, NLL, and ECE. The results are illustrated in \cref{fig:sample_size}. In overall, the performance of non intra order-preserving methods, i.e. \dirodir and \msodir, are more sensitive to the size of the calibration set while intra order-preserving methods maintain the accuracy and are more stable in terms of NLL and ECE.
%% Sample size experiment
%%%%
\begin{figure}
\centering
\includegraphics[width=0.8\linewidth]{./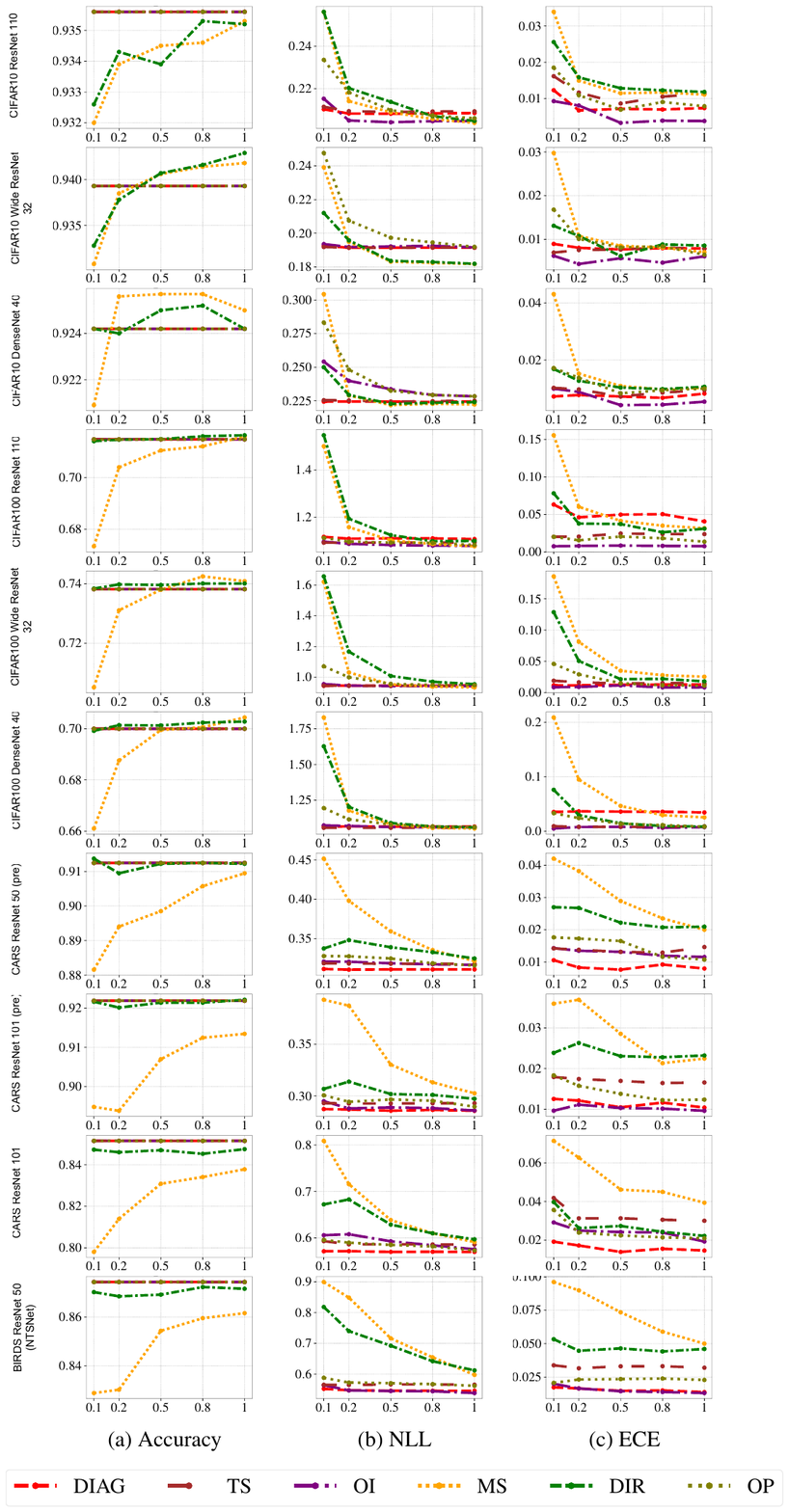}
\caption{\label{fig:sample_size} Accuracy, NLL, and ECE vs. calibration set size for CIFAR, CARS, BIRDS datasets. For each experiment, we use from $10\%$ to $100\%$ of the calibration set to train pos-hoc calibration functions and plot their accuracy, NLL, and ECE. Compared to \dirodir and \msodir, performance of the intra order-preserving methods (TS, \dgfn, \oinv, and \opre) degrades less with reducing the calibration set size.}
\end{figure}

%%

%Brier
{\bf Brier Score, NLL, and Classwise-ECE.} As shown in \cref{tbl:brier}, our \oinv is the best method in 5 out 14 models with respect to the Brier score. \msodir also wins in 4 models. However, it performs poorly on CARS and BIRDS datasets. Our \dgfn has the best average relative error. Overall, both \oinv and \dgfn perform well on this metric. The \dirodir is the third best method on this metric and is slightly worse than \oinv in average relative error.
%%%%%%%%%%%%%%%%%% Brier
\begin{table*}[!t]
\begin{center}
\caption{\em \label{tbl:brier}\small{Scores and rankings of different methods for Brier.}}
\resizebox{1.0\linewidth}{!}{%
\begin{tabular}{c c | c c c c | c c c}
\toprule
Dataset  & Model & Uncal. & TS & \dirodir & \msodir & \dgfn & \oinv & \opre \\
\hline
CIFAR10 & ResNet 110 & $0.01102_7$ & $0.00979_6$ & $0.00977_5$ & $0.00976_4$ & $0.00967_2$ & ${\bf 0.00963}_1$ & $0.00975_3$ \\
CIFAR10 & Wide ResNet 32 & $0.01047_7$ & $0.00924_4$ & ${\bf 0.00888}_1$ & $0.00889_2$ & $0.00926_6$ & $0.00921_3$ & $0.00925_5$ \\
CIFAR10 & DenseNet 40 & $0.01274_7$ & $0.01100_4$ & ${\bf 0.01097}_1$ & ${\bf 0.01097}_1$ & $0.01100_4$ & $0.01110_6$ & $0.01099_3$ \\
\hline
SVHN & ResNet 152 (SD) & $0.00297_6$ & ${\bf 0.00291}_1$ & $0.00293_3$ & $0.00298_7$ & $0.00292_2$ & $0.00293_3$ & $0.00296_5$ \\
\hline
CIFAR100 & ResNet 110 & $0.00453_7$ & $0.00392_4$ & $0.00391_3$ & $0.00391_3$ & $0.00393_5$ & ${\bf 0.00389}_1$ & $0.00390_2$ \\ 
CIFAR100 & Wide ResNet 32 & $0.00432_7$ & $0.00355_4$ & $0.00354_2$ & ${\bf 0.00351}_1$ & $0.00355_4$ & $0.00354_2$ & $0.00355_4$ \\
CIFAR100 & DenseNet 40 & $0.00491_7$ & $0.00401_3$ & ${\bf 0.00400}_1$ & ${\bf 0.00400}_1$ & $0.00401_3$ & $0.00401_3$ & $0.00402_6$ \\
\hline
CARS & ResNet~50 (pretrained) & $0.000667_5$ & $0.000666_4$ & $0.000663_2$ & $0.000679_7$ & ${\bf 0.000661}_1$ & $0.000664_3$ & $0.000674_6$ \\
CARS & ResNet~101 (pretrained) & $0.000626_6$ & $0.000625_5$ & $0.000623_3$ & $0.000655_7$ & $0.000622_2$ & ${\bf 0.000620}_1$ & $0.000623_3$ \\
CARS & ResNet~101 & $0.001131_6$ & $0.001129_5$ & $0.001123_3$ & $0.001154_7$ & ${\bf 0.001118}_1$ & $0.001123_3$ & $0.001119_2$ \\
\hline
BIRDS & ResNet~50 (NTSNet) & $0.001035_6$ & $0.000995_5$ & $0.000988_4$ & $0.001040_7$ & $0.000977_3$ & ${\bf 0.000972}_1$ & $0.000974_2$ \\
\hline 
ImageNet & ResNet~152 & $0.000338_7$ & $0.000332_4$ & $0.000333_6$ & $0.000332_4$ & ${\bf 0.000329}_1$ & $0.000330_2$ & $0.000331_3$ \\
ImageNet & DenseNet~161 & $0.000325_7$ & $0.000321_4$ & $0.000321_4$ & ${\bf 0.000318}_1$ & $0.000319_2$ & $0.000320_3$ & $0.000321_4$ \\
ImageNet & PNASNet5~large & $0.000255_6$ & $0.000261_7$ & $0.000252_5$ & $0.000247_3$ & $0.000245_2$ & ${\bf 0.000244}_1$ & $0.000248_4$  \\
\midrule
%\multicolumn{2}{c|}{Average Rank} & $6.50$ & $4.29$ & $3.07$ & $3.93$ & $2.71$ & ${\bf 2.36}$ & $3.71$\\
\multicolumn{2}{c|}{Average Relative Error} & $1.000_7$ & $0.936_5$ & $0.930_3$ & $0.936_5$ & ${\bf 0.924}_1$ & $0.929_2$ & $0.931_4$ \\
\bottomrule
\end{tabular}
}
\end{center}
%\vspace{-.9cm}
\end{table*}
%%%%%%%%%%%%%%%%%%

%NLL
Results of different methods regarding the NLL metric are shown in \cref{tbl:nll}. \msodir is the best method when the number of classes is less than or equal to 100 on this metric. Its performance degrades as the number of classes grows. This is typically due to the excessive number of parameters introduced by this method.  Surprisingly, TS is the best method in SVHN with ResNet~152~(SD) model but its performance is very similar to the \dgfn. The reason is that this model has a very high accuracy and the original model is actually already well calibrated. So, the single parameter TS would be enough to improve the calibration slightly. Our \dgfn is the best method on datasets with larger number of classes and our \oinv is also comparable to it. Both these method have the best average ranking and \dgfn has the best relative error on NLL.
%%%%%%%%%%%%%%%%%% Log loss
\begin{table*}[!t]
\begin{center}
\caption{\em \label{tbl:nll}\small{NLL.}}
\resizebox{1.0\linewidth}{!}{%
\begin{tabular}{c c | c c c c | c c c}
\toprule
Dataset  & Model & Uncal. & TS & \dirodir & \msodir & \dgfn & \oinv & \opre \\
\hline
CIFAR10 & ResNet 110 & $0.35827_7$ & $0.20926_5$ & $0.20511_3$ & ${\bf 0.20375}_1$ & $0.20674_4$ & $0.20488_2$ & $0.20954_6$ \\
CIFAR10 & Wide ResNet 32 & $0.38170_7$ & $0.19148_3$ & $0.18203_2$ & ${\bf 0.18165}_1$ & $0.19221_5$ & $0.19169_4$ & $0.19332_6$ \\
CIFAR10 & DenseNet 40 & $0.42821_7$ & $0.22509_3$ & $0.22371_2$ & ${\bf 0.22240}_1$ & $0.22551_4$ & $0.23097_6$ & $0.22798_5$ \\
\hline
SVHN & ResNet 152 (SD) & $0.08542_7$ & ${\bf 0.07861}_1$ & ${0.08038}_5$ & $0.08100_6$ & $0.07887_2$ & $0.07992_3$ & $0.08010_4$ \\
\hline
CIFAR100 & ResNet 110 & $1.69371_7$ & $1.09169_4$ & $1.09607_5$ & ${\bf 1.07370}_1$ & $1.10091_6$ & $1.07966_2$ & $1.08375_3$ \\
CIFAR100 & Wide ResNet 32 & $1.80215_7$ & $0.94453_3$ & $0.95288_6$ & ${\bf 0.93273}_1$ & $0.94928_4$ & $0.94312_2$ & $0.95001_5$ \\
CIFAR100 & DenseNet 40 & $2.01740_7$ & $1.05713_2$ & $1.05909_3$ & ${\bf 1.05084}_1$ & $1.05972_4$ & $1.06127_5$ & $1.07626_6$ \\
\hline
CARS & ResNet~50 (pretrained) & $0.32993_7$ & $0.31813_4$ & $0.32381_6$ & $0.31904_5$ & ${\bf 0.31234}_1$ & $0.31593_2$ & $0.31793_3$ \\
CARS & ResNet~101 (pretrained) & $0.30536_7$ & $0.29329_3$ & $0.29714_4$ & $0.29788_5$ & ${\bf 0.28573}_1$ & $0.28897_2$ & $0.30444_6$ \\
CARS & ResNet~101 & $0.61185_7$ & $0.58619_4$ & $0.59504_6$ & $0.58683_5$ & ${\bf 0.57385}_1$ & $0.57774_2$ & $0.58319_3$ \\
\hline
BIRDS & ResNet~50 (NTSNet) & $0.74676_7$ & $0.56569_4$ & $0.61239_5$ & $0.63055_6$ & $0.54915_2$ & ${\bf 0.54508}_1$ & $0.56288_3$ \\
\hline 
ImageNet & ResNet~152 & $0.98848_7$ & $0.94208_4$ & $0.95081_5$ & $0.95786_6$ & ${\bf 0.92553}_1$ & $0.92850_2$ & $0.93935_3$ \\
ImageNet & DenseNet~161 & $0.94395_7$ & $0.90928_5$ & $0.91214_6$ & $0.90578_3$ & ${\bf 0.88937}_1$ & $0.89552_2$ & $0.90632_4$ \\
ImageNet & PNASNet5~large & $0.80240_7$ & $0.75761_6$ & $0.73955_5$ & $0.71522_4$ & ${\bf 0.65550}_1$ & $0.65674_2$ & $0.69595_3$  \\
\midrule
%\multicolumn{2}{c|}{Average Rank} & $7.00$ & $3.64$ & $4.50$ & $3.29$ & ${\bf 2.64}$ & ${\bf 2.64}$ & $4.29$\\
\multicolumn{2}{c|}{Average Relative Error} & $1.000_7$ & $0.766_4$ & $0.772_6$ & $0.768_5$ & ${\bf 0.749}_1$ & $0.751_2$ & $0.765_3$ \\
\bottomrule
\end{tabular}
}
\end{center}
%\vspace{-.9cm}
\end{table*}
%%%%%%%%%%%%%%

%Classwise-ECE
Finally, \cref{tbl:cwece} compares different methods in Classwise-ECE. While there is no single winning method on Classwise-ECE when the number of classes is less than $200$, \dirodir is the best method on this metric in ImageNet and in overall. In the next section, we discuss a hidden bias in Classwise-ECE metric that might become problematic. It seems Classwise-ECE might promote uncertainty in the output regardless of the actual accuracy of the model. This suggests there might be more investigation required for this metric and a practitioner should be cautious about these numbers.

%%%%%%%%%%%%%%%%%% Classwise ECE
\begin{table*}[!t]
\begin{center}
\caption{\em \label{tbl:cwece}\small{Classwise ECE.}}
\resizebox{1.0\linewidth}{!}{%
\begin{tabular}{c c | c c c c | c c c}
%\hlineB{3}
\toprule
Dataset  & Model & Uncal. & TS & \dirodir & \msodir & \dgfn & \oinv & \opre \\
\hline
CIFAR10 & ResNet 110 & $0.09846_7$ & $0.04344_5$ & $0.03950_4$ & ${0.03615}_2$ & $0.03791_3$ & ${\bf 0.03454}_1$ & $0.04435_6$ \\
CIFAR10 & Wide ResNet 32 & $0.09530_7$ & $0.04775_4$ & $0.02947_2$ & ${\bf 0.02921}_1$ & $0.05462_6$ & $0.04747_3$ & $0.04918_5$ \\
CIFAR10 & DenseNet 40 & $0.11430_7$ & $0.03977_4$ & $0.03687_2$ & ${\bf 0.03678}_1$ & $0.03877_3$ & $0.04575_5$ & $0.05182_6$ \\
\hline
SVHN & ResNet 152 (SD) & $0.01940_4$ & $0.01849_2$ & $0.01988_5$ & $0.02088_6$ & ${\bf 0.01478}_1$ & ${0.01858}_3$ & $0.02128_7$ \\
\hline
CIFAR100 & ResNet 110 & $0.41644_7$ & $0.20095_3$ & ${\bf 0.18639}_1$ & $0.20270_5$ & $0.21966_6$ & $0.19977_2$ & $0.20237_4$ \\ 
CIFAR100 & Wide ResNet 32 & $0.42027_7$ & $0.18573_4$ & ${\bf 0.17951}_1$ & $0.17966_2$ & $0.18636_5$ & $0.19397_6$ & $0.18484_3$ \\
CIFAR100 & DenseNet 40 & $0.47026_7$ & $0.18664_3$ & ${0.18630}_2$ & $0.19112_5$ & ${\bf 0.18614}_1$ & $0.19866_6$ & $0.18752_4$\\
\hline
CARS & ResNet~50 (pretrained) & $0.17353_3$ &  $0.18513_7$ & $0.17094_2$  & $0.18312_6$ & ${\bf 0.16891}_1$ & $0.18217_5$ & $0.17567_4$ \\
CARS & ResNet~101 (pretrained) & $0.16503_4$ & $0.17186_6$ & $0.15914_2$ & $0.17405_7$ & $0.16692_5$ & $0.16434_3$ & ${\bf 0.15509}_1$ \\
CARS & ResNet~101 & $0.26300_2$ & $0.27234_6$ & $0.26333_3$ & $0.27447_7$ & $0.26594_5$ & $0.26488_4$ & ${\bf 0.25097}_1$ \\
\hline
BIRDS & ResNet~50 (NTSNet) & $0.24901_3$ &  $0.26369_7$ & ${\bf 0.22920}_1$ & $0.25639_6$ & $0.25073_5$ & $0.25069_4$ & $0.24031_2$ \\
\hline 
ImageNet & ResNet~152 & $0.31846_7$ & $0.30886_4$ & ${\bf 0.30061}_1$ & $0.30895_5$ & $0.31372_6$ & $0.30642_3$ & $0.30081_2$ \\
ImageNet & DenseNet~161 & $0.30992_7$ & $0.30309_5$ & ${\bf 0.29403}_1$ & $0.29807_2$ & $0.30659_6$ & $0.30248_4$ & $0.29959_3$ \\
ImageNet & PNASNet5~large & $0.31356_7$ & $0.25587_6$ & ${\bf 0.23797}_1$ & $0.24283_2$ & $0.25004_5$ & $0.24634_4$ & $0.24493_3$  \\
\midrule
\multicolumn{2}{c|}{Average Relative Error} & $1.000_7$ & $0.752_6$ & ${\bf 0.704}_1$ & $0.734_3$ & $0.729_2$ & $0.740_4$ & $0.743_5$ \\
\bottomrule
\end{tabular}
}
\end{center}
%\vspace{-.9cm}
\end{table*}
%%%%%%%%%%%%%%%%%%

\subsection{Is Classwise-ECE a Proper Scoring Rule Calibration Metric?}
\label{sec:badcwece}

It is known that ECE is not a proper scoring rule and thus there exist trivial solutions which yield optimal scores~\citeRef{ovadia2019can}. In this section, we show the same holds for Classwise-ECE metric. Classwise-ECE is \say{defined as the average gap across all classwise-reliability diagrams, weighted by the number of instances in each bin:
\begin{equation}
    \text{Classwise-ECE} = \frac{1}{k}\sum_{j=1}^k\sum_{i=1}^m \frac{B_{i,j}}{n} |y_j(B_{i,j})-\hat{p}_j(B_{i,j})|
\end{equation}
where $k$, $m$, $n$ are the numbers of classes, bins and instances, respectively, $|B_{i,j}|$ denotes the size of the bin, and $\hat{p}_j(B_{i,j})$ and $y_j(B_{i,j})$ denote the average prediction of class $j$ probability and the actual proportion of class $j$ in the bin $B_{i,j}$.}~\cite{kull2019beyond2}.

While the above definition of Classwise-ECE intuitively makes sense, we show that this metric fails to represent the quality of a predictor in a common degenerate case e.g. in a balanced dataset with $k$ classes one could achieve a perfect Classwise-ECE by scaling down the logits with a large enough positive scalar. A large enough temperature value increases the uncertainty of the model and brings all the class probabilities close to $1/k$ while maintaining the accuracy of the model. As the result, in all the classwise-reliability diagrams every data point falls into the bin that contains confidence values around $1/k$. Since the dataset is balanced, the actual proportion of class $j$ in that bin will also be $1/k$ so the model exhibits a perfect Classwise-ECE.

We remark that this problem does not happen with ECE, because ECE is computed with regard to the \emph{accuracy} of the bins. While all the data points still fall inside the bin that contains the confidence value $1/k$, the accuracy of this bin would be equal to the accuracy of the model. Thus, there would be mismatch between the confidence and the accuracy of the bin, which results to a high ECE.

%\cheng{Show also the relative error in the table?}
To validate this insight, we scale down the uncalibrated logit values by a large scalar number and see how it affects Classwise-ECE in \cref{tbl:cwece_failure}.  It shows this simple hack drastically improves the Classwise-ECE value of the uncalibrated models and outperforms the methods in \cref{tbl:cwece} by large margin in most of the cases. Note that we can not achieve perfect Classwise-ECE because the datasets are not perfectly balanced. 

We are concerned that this issue with \emph{Classwise-ECE might bias future work to lean towards merely increasing the uncertainty of predictions without actually calibrating the model in a meaningful way}. To avoid this, Classwise-ECE metric should be always used with other proper scoring rule metrics (e.g., NLL or Brier) in evaluation. As we discuss in the next section, this issue would not happen when bins are dynamically chosen to ensure the number of data points in each bin remains equal.

\begin{table*}[!t]
\begin{center}
\caption{\em\label{tbl:cwece_failure}\small{Temperature scaling effect on Classwise-ECE. A large temperature value improves the Classwise-ECE in most of the cases. The subscript numbers represent the rank compared to the values in \cref{tbl:cwece}. We remark that the purpose of this experiment is not to improve the performance but rather highlight the need for studying Classwise-ECE metric in the future works.}}
\resizebox{0.6\linewidth}{!}{%
\begin{tabular}{c c | c c c c}
%\hlineB{3}
\toprule
Dataset  & Model & Uncal. & Uncal./1000 \\
\hline
CIFAR10 & ResNet 110 & $0.09846$ & $0.00021_1$ \\
CIFAR10 & Wide ResNet 32 & $0.09530$ & $0.00126_1$ \\
CIFAR10 & DenseNet 40 & $0.11430$ & $0.00143_1$ \\
\hline
SVHN & ResNet 152 (SD) & $0.01940$ & $0.33123_8$\\
\hline
CIFAR100 & ResNet 110 & $0.41644$ & $0.00080_1$ \\
CIFAR100 & Wide ResNet 32 & $0.42027$ & $0.00199_1$ \\
CIFAR100 & DenseNet 40 & $0.47026$ & $0.00282_1$ \\
\hline
CARS & ResNet~50 (pretrained) & $0.17353$ & $0.16048_1$ \\
CARS & ResNet~101 (pretrained) & $0.16503$ & $0.16108_3$ \\
CARS & ResNet~101 & $0.26300$ & $0.15067_1$ \\
\hline
BIRDS & ResNet~50 (NTSNet) & $0.24901$ & $0.05831_1$\\
\hline 
ImageNet & ResNet~152 & $0.31846$ & $0.11074_1$ \\
ImageNet & DenseNet~161 & $0.30992$ & $0.11074_1$ \\
ImageNet & PNASNet5~large & $0.31356$ & $0.10960_1$ \\
\bottomrule
\end{tabular}
}
\end{center}
%\vspace{-.9cm}
\end{table*}

\subsection{Debiased ECE and a Fix to Classwise-ECE}
We believe that the issue mentioned above is due to the binning scheme used in estimating Classwise-ECE which allows all the data points fall into a single bin. Nixon~\etl~\citeRef{nixon2019measuring_ref} propose an adaptive binning scheme that guarantees the number of data points in each bin remains balanced; therefore, it does not exhibit the same issue as Classwise-ECE. 
%
%As noted in~\citeRef{kumar2019calibration_ref}: ``{\em not all
%binning schemes in the literature allow us to efficiently estimate the calibration error}''.  
%
In addition to the binning scheme, Kumar et al.~\citeRef{kumar2019calibration_ref} introduce {\em debiased ECE} and multiclass {\em marginal calibration error} metrics that are debiased versions similar to the ECE and Classwise-ECE metrics, respectively. The idea is to subtract an approximate correction term to reduce the biased estimate of the metrics. For the completeness, we present {\em debiased ECE} and multiclass {\em marginal calibration error} for all the methods in~\cref{tbl:debiasedECE} and ~\cref{tbl:marginalECE}, respectively. While the results in debaised ECE are similar to ECE, comparing the results in \cref{tbl:cwece} and \cref{tbl:marginalECE} shows \dgfn is performing better in terms of multiclass marginal calibration error and outperforms \dirodir in average relative error.

Overall, although the intra order-preserving models are the winning methods among most of the ever-increasing calibration metrics, one should carefully pick the calibration method and the metric depending on their application.

\begin{table*}[!t]
\begin{center}
\caption{
\em \label{tbl:debiasedECE}\small{Debiased ECE~\protect\citeRef{kumar2019calibration_ref}.}
}
\resizebox{1.0\linewidth}{!}{%
\begin{tabular}{c c | c c c c | c c c}
\toprule
Dataset  & Model & Uncal. & TS & \dirodir & \msodir & \dgfn & \oinv & \opre  \\
\hline
CIFAR-10     & ResNet 110          & $0.09070_7$ & $0.01924_4$ & $0.01927_5$ & $0.01716_3$ & $0.01573_2$ & ${\bf 0.00000}_1$ & $0.02282_6$ \\
CIFAR-10     & Wide ResNet 32      & $0.08661_7$ & $0.00809_2$ & $0.00943_3$ & $0.00958_4$ & $0.01717_6$ & ${\bf 0.00194}_1$ & $0.01073_5$ \\
CIFAR-10     & DenseNet 40         & $0.10340_7$ & $0.01195_2$ & $0.01228_3$ & $0.01266_4$ & ${\bf 0.01130}_1$ & $0.02410_6$ & $0.02309_5$ \\
\hline 
SVHN         & ResNet 152 (SD)     & $0.01922_5$ & $0.00892_2$ & $0.00979_4$ & $0.00939_3$ & ${\bf 0.00617}_1$ & $0.02269_6$ & $0.03429_7$ \\
\hline
CIFAR-100    & ResNet 110          & $0.22699_7$ & $0.02004_2$ & $0.02842_3$ & $0.03054_5$ & $0.05596_6$ & ${\bf 0.00626}_1$ & $0.02903_4$ \\
CIFAR-100    & Wide ResNet 32      & $0.24827_7$ & $0.01031_2$ & $0.01909_5$ & $0.03018_6$ & $0.01498_4$ & ${\bf 0.00545}_1$ & $0.01408_3$ \\
CIFAR-100    & DenseNet 40         & $0.26523_7$ & ${\bf 0.00000}_1$ & ${\bf 0.00000}_1$ & $0.02809_6$ & ${\bf 0.00000}_1$ & $0.00432_4$ & $0.01265_5$ \\
\hline
CARS         & ResNet 50 (pre)     & $0.02327_6$ & $0.00900_2$ & $0.02512_7$ & $0.01605_4$ & ${\bf 0.00000}_1$ & $0.01611_5$ & $0.01363_3$ \\
CARS         & ResNet 101 (pre)    & $0.02181_6$ & $0.01956_3$ & $0.02419_7$ & $0.01504_2$ & $0.01964_4$ & $0.02136_5$ & ${\bf 0.01271}_1$ \\
CARS         & ResNet 101          & $0.04280_5$ & $0.02654_3$ & ${\bf 0.01518}_1$ & $0.03728_4$ & $0.02542_2$ & $0.04766_6$ & $0.04821_7$ \\
\hline
BIRDS        & ResNet 50 (NTS)     & $0.47117_7$ & $0.04054_4$ & $0.05545_5$ & $0.07224_6$ & ${\bf 0.01518}_1$ & $0.01650_2$ & $0.03104_3$ \\
\hline
ImageNet     & ResNet 152          & $0.07745_7$ & $0.02157_4$ & $0.05247_5$ & $0.06099_6$ & ${\bf 0.00066}_1$ & $0.00941_2$ & $0.01804_3$ \\
ImageNet     & DenseNet 161        & $0.06598_7$ & $0.02008_4$ & $0.04542_5$ & $0.04888_6$ & ${\bf 0.00998}_1$ & $0.01158_2$ & $0.01924_3$ \\
ImageNet     & PNASNet5 large      & $0.06820_6$ & $0.09620_7$ & $0.05728_5$ & $0.03580_4$ & $0.01273_3$ & ${\bf 0.00713}_1$ & $0.01272_2$ \\
\midrule
\multicolumn{2}{c|}{Avgerage Relative Error} & $1.000_7$    & $0.357_3$    & $0.430_6$    & $0.409_5$    & ${\bf 0.213}_1$    & $0.337_2$    & $0.406_4$   \\
\bottomrule
\end{tabular}
}
\end{center}
\end{table*}
\begin{table*}[!t]
\begin{center}
\caption{\em \label{tbl:marginalECE}\small{Marginal Calibration Error~\protect\citeRef{kumar2019calibration_ref}.}}
\resizebox{1.0\linewidth}{!}{%
\begin{tabular}{c c | c c c c | c c c}
\toprule
Dataset  & Model & Uncal. & TS & \dirodir & \msodir & \dgfn & \oinv & \opre  \\
\hline
CIFAR-10      & ResNet 110          & $0.00859_7$                    & $0.00305_2$                & $0.00371_6$                 & $0.00363_5$                & $0.00346_4$                  & ${\bf 0.00218}_1$       & $0.00336_3$  \\
CIFAR-10      & Wide ResNet 32      & $0.01516_7$                    & $0.01408_3$                & ${\bf 0.00410}_1$        & $0.00432_2$                & $0.01442_6$                  & $0.01416_5$                & $0.01410_4$                \\
CIFAR-10      & DenseNet 40         & $0.01132_7$                    & $0.00602_4$                & ${\bf 0.00417}_1$        & $0.00583_2$                & $0.00601_3$                  & $0.00729_6$                & $0.00686_5$                \\
\hline
SVHN          & ResNet 152 (SD)     & $0.00227_2$                    & $0.00245_3$                & $0.00426_5$                 & $0.00541_6$                & ${\bf 0.00178}_1$         & $0.00387_4$                & $0.00691_7$                \\
\hline
CIFAR-100     & ResNet 110          & $0.00315_7$                    & ${\bf 0.00129}_1$       & $0.00185_5$                 & $0.00233_6$                & $0.00144_3$                  & $0.00141_2$                & $0.00151_4$                \\
CIFAR-100     & Wide ResNet 32      & $0.00356_7$                    & $0.00266_4$                & $0.00222_2$                 & ${\bf 0.00199}_1$       & $0.00270_6$                  & $0.00268_5$                & $0.00257_3$                \\
CIFAR-100     & DenseNet 40         & $0.00417_7$                    & $0.00266_6$                & ${\bf 0.00222}_1$        & $0.00263_5$                & $0.00261_4$                  & $0.00259_3$                & $0.00234_2$                \\
\hline
CARS          & ResNet 50 (pre)     & $0.00063_6$                    & $0.00058_2$                & ${\bf 0.00035}_1$        & $0.00090_7$                & $0.00060_5$                  & $0.00059_3$                & $0.00059_3$                \\
CARS          & ResNet 101 (pre)    & $0.00043_3$                    & $0.00044_4$                & $0.00044_4$                 & $0.00092_7$                & $0.00041_2$                  & ${\bf 0.00034}_1$       & $0.00046_6$                \\
CARS          & ResNet 101          & ${\bf 0.00114}_1$                    & ${\bf 0.00114}_1$       & $0.00173_6$                 & $0.00230_7$                & ${\bf 0.00114}_1$         & $0.00118_5$                & $0.00117_4$                \\
\hline
BIRDS         & ResNet 50 (NTS)     & $0.00934_7$                    & $0.00139_4$                & $0.00148_6$                 & $0.00141_5$                & $0.00138_3$                  & $0.00132_2$                & ${\bf 0.00130}_1$       \\
\hline
ImageNet      & ResNet 152          & $0.00040_6$                    & $0.00038_2$                & ${\bf 0.00034}_1$        & $0.00042_7$                & $0.00038_2$                  & $0.00038_2$                & $0.00038_2$                \\
ImageNet      & DenseNet 161        & $0.00041_7$                    & $0.00039_3$                & ${\bf 0.00035}_1$        & $0.00038_2$                & $0.00039_3$                  & $0.00039_3$                & $0.00039_3$                \\
ImageNet      & PNASNet5 large      & $0.00039_7$                    & $0.00032_6$                & ${\bf 0.00025}_1$        & $0.00028_2$                & $0.00028_2$                  & $0.00029_4$                & $0.00030_5$                \\
\midrule
\multicolumn{2}{c|}{Average Relative Error} & $1.000_7$                      & $0.750_3$                  & $0.735_2$                   & $0.996_6$                  & ${\bf 0.725}_1$           & $0.778_4$                  & $0.898_5$  \\               
\bottomrule
\end{tabular}
}
\end{center}
\end{table*}

\clearpage

\bibliographystyleRef{plain}
\bibliographyRef{appendix}

\end{document}